\newenvironment{proof}{\begin{trivlist}\item\noindent{\sc Proof.}}{\qed\end{trivlist}}
\newcommand{\qed}{\hfill {\footnotesize$\square$\hspace{0mm}}}
\renewcommand{\[}{\llbracket}
\renewcommand{\]}{\rrbracket}
\renewcommand{\S}{{\sf S}}
\newcommand{\C}{{\sf C}}
\newcommand{\Gc}{{\sf G}^{\sf c}}
\newcommand{\Gs}{{\sf G}^{\sf s}}
\newcommand{\Gcs}{{\sf G}^{\sf c,s}}
\renewcommand{\phi}{\varphi}
\newtheorem{theorem}{Theorem}
\newtheorem{corollary}{Corollary}
\newtheorem{lemma}{Lemma}
\newtheorem{definition}{Definition}
\newtheorem{property}{Property}
\title{Responsibility in Extensive Form Games}
\author {
    Qi Shi
    \thanks{Many thanks to Dr Pavel Naumov, PhD supervisor of Qi Shi, for his selfless help in finishing this paper. The research is funded by the China Scholarship Council (CSC No.202206070014).}
}
\begin{document}

\maketitle

\begin{abstract}
Two different forms of responsibility, counterfactual and seeing-to-it, have been extensively discussed in the philosophy and AI in the context of a single agent or multiple agents acting simultaneously. Although the generalisation of counterfactual responsibility to a setting where multiple agents act in some order is relatively straightforward, the same cannot be said about seeing-to-it responsibility. Two versions of seeing-to-it modality applicable to such settings have been proposed in the literature. Neither of them perfectly captures the intuition of responsibility. This paper proposes a definition of seeing-to-it responsibility for such settings that amalgamate the two modalities.

This paper shows that the newly proposed notion of responsibility and counterfactual responsibility are not definable through each other and studies the responsibility gap for these two forms of responsibility. It shows that although these two forms of responsibility are not enough to ascribe responsibility in each possible situation, this gap does not exist if higher-order responsibility is taken into account.
\end{abstract}

\section{Introduction}\label{sec:introduction}

In the United States, if a person is found guilty by a state court and all appeals within the state justice system  have been exhausted, the person can petition the Governor of the state for executive clemency. The US Supreme Court once described the clemency by the executive branch of the government as the ``fail safe'' of the criminal justice system~\cite{herrera93case}. 
This was the case with Barry Beach, who was found guilty of killing a 17-year-old high school valedictorian Kim Nees and sentenced in 1984 to 100 years imprisonment without parole~\cite{ap15guardian}.
In 2014, after a court appeal, a retrial, and a negative decision by the Montana Supreme Court, Barry's attorney filed a petition for executive clemency.

To prevent corruption and favouritism by the Governor, many states in the US have boards that must support the decision before the Governor can grant executive clemency. In Montana, such a board has existed since the original 1889 Constitution~\cite[Article VII, Section 9]{cc89url}. 
With time, the law, the name of the board, and the way it grants approval changed~\cite{bopp23url},
but the Board maintained the ability to constrain the Governor's power to grant executive clemency. The executive clemency procedure that existed in Montana by 2014 is captured by the {\em extensive form game} depicted in Figure~\ref{fig:board and governor A}. First, the Board (agent $b$) can either deny (action D) the clemency or recommend (E) it. If the Board recommends, then the Governor (agent $g$) might grant (G) or not grant (F) the executive clemency.  

\begin{figure}
    \centering
    \begin{subfigure}[b]{0.2\textwidth}
        \begin{center}
        \scalebox{0.5}{\includegraphics{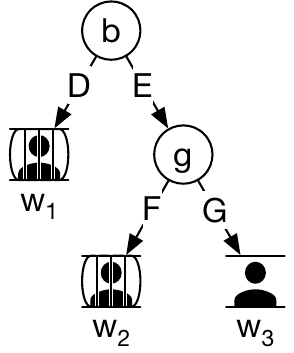}}
        \caption{Montana situation}
       \label{fig:board and governor A}
        \end{center}
    \end{subfigure}
    \begin{subfigure}[b]{0.2\textwidth}
        \begin{center}
        \scalebox{0.5}{\includegraphics{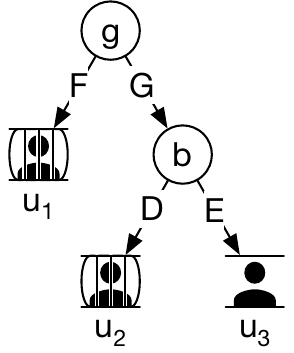}}
        \caption{Switched action order}
        \label{fig:board and governor B}
        \end{center}
    \end{subfigure}
    \caption{Executive clemency procedure}
    \label{fig:board and governor AB}
\end{figure}
The executive clemency procedure in Montana is a typical multiagent system where the final outcome is determined by the decisions of all agents in the system. Such multiagent systems widely exist in both human and machine activities and have been studied from multiple perspectives. \emph{Responsibility} is one of the topics in those studies. 
Although there is no commonly acknowledged definition of responsibility, we usually have some vague intuition about the responsibility of an agent when we think of the agent being praiseworthy for a positive result or blameworthy for a negative result, especially if the agent is undertaking moral or legal obligations. 
It is also usually assumed that responsibility is connected to \emph{free will} of an agent to act. Note that, an agent can act to prevent or to achieve a certain result. This gives rise to two forms of responsibility commonly considered in the literature: {\em counterfactual responsibility} and {\em responsibility for seeing to it}, respectively.
In this paper, I study these two forms of responsibility in the multiagent systems that can be modelled as extensive form games.

The rest of this paper is divided into four major sections. In Section~\ref{sec:literature}, I give a review of the two forms of responsibility and related logic notions in the literature. 
Based on the discussion, I propose a new form of seeing-to-it responsibility for extensive form game settings in Section~\ref{sec:our notion}. 
Then, I formally define the model of extensive form games in Section~\ref{sec:terminology} and the syntax and semantics of the two forms of responsibility in Section~\ref{sec:syntax and semantics}. In particular, I show the mutual undefinability between the two forms of responsibility, discuss the meaning of higher-order responsibility, and state the complexity of model checking. 
Finally, in Section~\ref{sec:gap}, I formally study the {\em responsibility gap}.
Although discussion of the responsibility gap is prevalent in the literature~\cite{matthias2004responsibility,braham2011responsibility,duijf2018responsibility,burton2020mind,gunkel2020mind,langer2021we,goetze2022mind}, only a few studies~\cite{braham2018voids,hiller2022axiomatic} give a formal definition of the concept. I then define the hierarchy of responsibility gaps, which, as far as I know, have never been discussed before. I show that a higher-order responsibility gap does not exist for sufficiently high orders.

\section{Literature review and notion discussion}\label{sec:literature}

Counterfactual responsibility captures the {\em principle of alternative possibilities}~\cite{f69tjop,belnap1992way,w17}: {\em an agent is responsible for a statement $\phi$ in an outcome if $\phi$ is true in the outcome and the agent had a strategy that could prevent it}. For example, consider outcome $w_3$ in Figure~\ref{fig:board and governor A}, where the Board recommends (E) clemency and the Governor grants (G) it. In this case, Beach is set free. Note that both the Board and the Governor have a strategy (action D for the Board, action F for the Governor) to prevent this. As a result, each of them is counterfactually responsible for the fact that Beach, who was found by the court to be the murderer of Kim Nees, escapes punishment. 

Next, consider outcome $w_2$ in which the Board recommends (E) clemency, but the Governor does not grant (F) it. Beach is left in prison. In this case, the Board is not counterfactually responsible for the fact that Beach is left in prison because the Board had no strategy to prevent this. At the same time, the Governor had such a strategy (action G). As a result, the Governor is counterfactually responsible for the fact that Beach is left in prison in outcome $w_2$. 

Note that in order for Beach to be freed, both the Governor and the Board must support this. However, from the point of view of ascribing counterfactual responsibility, the order in which the decisions are made is important. If the Governor acts first, then, essentially, the roles of the Governor and the Board switch, see Figure~\ref{fig:board and governor B}. In this new situation, the Governor is no long counterfactually responsible for Beach being left in prison because he no longer has a strategy to prevent this.
The dependency on the order of the decisions makes counterfactual responsibility in extensive form games different from the previously studied counterfactual responsibility in {\em strategic} game settings~\cite{ls11ai,naumov2019blameworthiness,naumov2020epistemic}, where all agents act concurrently and just once. 
The above definition of counterfactual responsibility for extensive form games is introduced in~\cite{yazdanpanah2019strategic}.
It also appears in~\cite{baier2021game}.

The other commonly studied form of responsibility is defined through the notion of {\em seeing-to-it}. As a modality, seeing-to-it has been well studied in STIT logic~\cite{chellas1969logical,belnap1990seeing,h01,hp17rsl}. Informally, an agent sees to it that $\phi$ if the agent guarantees that $\phi$ happens.
When using the notion of seeing-to-it to define a form of responsibility, a \textit{negative condition}\footnote{In the general STIT models, a negative condition is a history where $\neg\phi$ is true~\cite{p91synthese}. In the extensive form game settings, a negative condition is an {\em outcome} where $\neg\phi$ is true. } is usually required to exist to capture the intuition that no agent should be responsible for a trivial truth such as ``$1+1=2$''.
The notion of \textit{deliberative} seeing-to-it~\cite{horty1995deliberative,x98jpl,bht08jpl,ow16sl} captures this idea by adding the requirement of a negative condition. 
Some follow-up work such as \cite{lorini2014logical} and \cite{abarca2022stit} further incorporates the epistemic states of the agents into their discussion, but this is still within the STIT frame.
\citet{naumov2021two} studied deliberative seeing-to-it as one of the forms of responsibility in strategic game settings. 


In extensive form game settings, there are two versions of the notion of seeing-to-it that may {\em potentially} capture a form of responsibility: \textit{strategically} seeing-to-it in the presence of a negative condition and
\textit{achievement} seeing-to-it.

\emph{Strategically seeing-to-it}~\cite{bht06jelia,broersen2009stit} is defined under the assumption that each agent commits upfront to a strategy (\textit{i.e.} a plan of actions) for the duration of the game. Instead of guaranteeing $\phi$ to happen with one action, such a strategy guarantees $\phi$ to happen \emph{in the final outcome} after acting according to the strategy in the whole game, no matter how the other agents may act in the process.
For example, in the game depicted in Figure~\ref{fig:board and governor A}, both the Board and the Governor have an upfront strategy to leave Beach in prison. For the Board, the strategy consists in denying the petition. For the Governor, the strategy consists in waiting for the Board to act and, if the Board recommends clemency, rejecting the petition. 
By incorporating the notion of a strategy, strategically seeing-to-it in the presence of a negative condition can be treated as a natural extension of deliberative seeing-to-it in multi-step decision schemes such as extensive form games.

However, this ``natural extension'' does not work for two reasons. on the one hand, by definition, the notion of strategically seeing-to-it has to be evaluated based on strategies rather than outcomes~\cite{broersen2015using}.
However, in some applications, such strategies may not be observable. Let us consider the case of outcome $w_1$ in Figure~\ref{fig:board and governor A}. Here, the strategy of the Governor is not observable because he has no chance to make any choice. No one except for the Governor himself can tell how he would choose if the Board had not denied the clemency. Hence, even though he has a strategy to guarantee Beach being left in prison and the strategy is followed {\em in a trivial way} in outcome $w_1$, it is still not clear whether the Governor strategically sees to Beach being left in prison or not.
On the other hand, although the strategy can be observed in some cases (such as pre-programmed autonomous agents), the notion of strategically seeing-to-it accuses the agents of {\em thoughtcrime} purely based on their plans rather than actions. Note that, in law, {\em actus reus} (``guilty actions'') is a commonly required element of a crime \cite{e21sep}. 
For this reason, even if the Governor's strategy is to deny the clemency when the Board recommends it, which indeed strategically sees to Beach being left in prison according to the definition, the Governor should not be held {\em responsible} for seeing to this in outcome $w_1$, since he takes no action at all.
Therefore, \emph{strategically seeing-to-it in the presence of a negative condition} cannot always serve as a proper notion of responsibility in extensive form games.

Another notion of seeing-to-it that may capture a form of responsibility in extensive form game settings is \emph{achievement seeing-to-it}~\cite{belnap1992way,horty1995deliberative}. In an extensive form game, the agents make choices one after another. Each choice of the agents may eliminate the possibility of some outcomes until the final outcome remains. If a statement is true in the final outcome, then during the game process, all the negative conditions, if exist, are eliminated.
Achievement seeing-to-it captures the idea that, in such multi-step decision schemes, one specific \emph{choice} of an agent guarantees some statement to be true in the final outcome by eliminating the ``last possibility'' for a negative condition to be achieved. For example, in outcome $w_3$ of Figure~\ref{fig:board and governor A}, Beach is set free after the Board recommends (E) the clemency and the Governor grants (G) it. The choice of the Board (action E) eliminates one possibility of a negative condition ($w_1$) and the choice of the Governor (action G) eliminates the other possibility of a negative condition ($w_2$), which is also the last possibility. Hence, the Governor sees to it that Beach is set free in the achievement way in outcome $w_3$. Note that the notion of achievement seeing-to-it implies the existence of a negative condition by itself.

Achievement seeing-to-it can be treated as a form of responsibility in an intuitive sense. However, this notion cannot capture the idea of ``guaranteeing'' when regarding the extensive form game as a whole process. Let us still consider outcome $w_3$ in Figure~\ref{fig:board and governor A}. When we treat the executive clemency procedure as a whole, the Governor does {\em not} guarantee that Beach will be set free, since the Board could have chosen to deny (D) the clemency before the Governor can make any decision. In fact, the Governor does not even have the {\em ability} to guarantee that Beach will be set free. Therefore, it is hard to say that the Governor is responsible for ``seeing to it that'' Beach is set free in outcome $w_3$, even though he sees to this in the achievement way.

The inconsistency between the notion of achievement seeing-to-it and the seeing-to-it form of responsibility is more significant when \emph{obligation} is taken into consideration. For example, the obligation of doctors is to try their best to cure their patients. Consider a situation where a patient in danger of life is waiting for treatment. Suppose the treatment is sure to cure the patient. But the doctor leaves the patient unattended for six days and gives treatment on the seventh day. Then, the patient is cured. By giving the treatment, the doctor sees to it that the patient is cured in the achievement way. However, the doctor cannot be said to ``be responsible (praiseworthy) for seeing to it that'' the patient would be cured, because the patient might have died at any time during the first six days. For this reason, \emph{achievement seeing-to-it} often cannot serve as a proper notion of the responsibility for seeing to it in extensive form games.

\section{My notion of responsibility for seeing to it}\label{sec:our notion}

In this section, I introduce a new notion of seeing-to-it responsibility that fits into the extensive form game setting.

{\em First}, I modify the notion of strategically seeing-to-it into a backward version. I would say that an agent {\em backwards-strategically} sees to $\phi$ if the agent has an upfront ability to guarantee that $\phi$ would be true in the outcome and maintains the ability for the duration of the game. The ability to guarantee $\phi$ is captured by the {\em existence of a strategy} that guarantees $\phi$. Note that, although the maintenance of the ability can be achieved by following such a strategy, the backward version of strategically seeing-to-it does {\em not} require the actually applied strategy to guarantee $\phi$. 
Intuitively, instead of caring about the {\em plan} of the agent to guarantee $\phi$, the backward version of strategically seeing-to-it focuses on the {\em ability} of guaranteeing it. 

Unlike the original notion of strategically seeing-to-it, the backward version can be evaluated based on the outcomes (the paths of play) in extensive form game settings.
For example, observe that in the game depicted in Figure~\ref{fig:board and governor A}, the Board has the ability (the existence of a strategy) to guarantee that Beach would be left in prison at the $b$-labelled node and outcomes $w_1$ and $w_2$. The Governor has the same ability at the $b$-labelled node, the $g$-labelled node, and outcomes $w_1$ and $w_2$. On the path of play toward outcome $w_1$, both the Board and the Governor maintain this ability. Hence, in outcome $w_1$, both the Board and the Governor backwards-strategically see to Beach being left in prison.
Note that $w_1$ is the outcome when the Governor applies the strategy ``to grant (G) the clemency if the Board recommend (E) it'' and the Board applies the strategy ``to deny (D) the clemency''. The Governor's strategy does {\em not} strategically see to Beach being left in prison in the original meaning. However, he still backwards-strategically sees to it.
In outcome $w_2$, only the Governor backwards-strategically sees to Beach being left in prison because the Board loses the ability at the $g$-labelled node, where the Governor can grant (G) the clemency.

{\em Second}, I use the notion of backwards-strategically seeing-to-it, in combination with achievement seeing-to-it, to define the seeing-to-it form of responsibility in extensive form game settings. I would say that an agent is \textit{responsible} for seeing to $\phi$ if she sees to it both backwards-strategically and in the achievement way. This combination captures both the ability and the action to ``guarantee'' in the notion of seeing-to-it.
Informally, I say that in the extensive form games, an agent is responsible for seeing to $\phi$ if the agent {\em has an upfront ability to achieve $\phi$, maintains it throughout the game, and eliminates the last possibility of a negative condition in the process}. 

Consider the game depicted in Figure~\ref{fig:board and governor A}. In outcome $w_1$, the Board sees to Beach being left in prison both backwards-strategically and in the achievement way. Therefore, the Board is responsible for seeing to Beach being left in prison in $w_1$. 
This argument is also true for the Governor in outcome $w_2$. However, in $w_1$, the Governor sees to Beach being left in prison backwards-strategically but not in the achievement way. Thus, the Governor is not responsible for seeing to this in $w_1$. In $w_2$, the Board sees to Beach being left in prison neither backwards-strategically nor in the achievement way. Hence, the Board is not responsible for seeing to this in $w_2$.
Moreover, in outcome $w_3$, the Governor sees to Beach being set free in the achievement way but not backwards-strategically (he does not have such an ability at the $b$-labelled node). Thus, the governor is not responsible for seeing to Beach being set free in outcome $w_3$.

\section{Extensive form games terminology}\label{sec:terminology}

In this section, I introduce extensive form games that are used later to give formal semantics of my modal language. Throughout the paper, I assume a fixed set of agents $\mathcal{A}$ and a fixed nonempty set of propositional variables.

\begin{definition}\label{game}
   An extensive form game is a nonempty finite rooted tree in which each non-leaf node is labelled with an agent and each leaf node is labelled with a set of propositional variables.  
\end{definition}
I refer to the leaf nodes of a game as {\em outcomes} of the game. The set of all outcomes of a game $G$ is denoted by $\Omega(G)$. By $parent(n)$, I denote the parent node of any non-root node $n$. I write $n_1\preceq n_2$ if node $n_2$ is on the simple path (including ends) between the root node and node $n_1$. I say that an outcome is labelled with a propositional variable if the outcome is labelled with a set containing this propositional variable. 

\begin{definition}\label{df:achievement point}
For any set $X$ of outcomes and any agent $a$, non-root node $n$ is an $X$-achievement point by agent $a$, if
\begin{enumerate}
    \item $parent(n)$ is labelled with agent $a$;
    \item $w\notin X$ for some outcome $w$ such that $w\preceq parent(n)$;
    \item $w\in X$ for each outcome $w$ such that $w\preceq n$.
\end{enumerate}
\end{definition}

The notion of achievement point captures the idea that outcomes in set $X$ are already ``achieved'' by agent $a$ at node $n$: agent $a$ choosing $n$ at node $parent(n)$ eliminates the \textit{last} possibility for an outcome \textit{not} in $X$ to be realised and thus guarantees that the game will end in $X$. 
For example, in the extensive form game depicted in Figure~\ref{fig:board and governor A}, consider the set $\{w_1, w_2\}$ of outcomes where Beach is left in prison. Node $w_1$ is a $\{w_1, w_2\}$-achievement point by the Board, where action D of the Board at the $b$-labelled node eliminates the last possibility for Beach being set free ($w_3$) to come true. Similarly, node $w_2$ is a $\{w_1, w_2\}$-achievement point by the Governor. Note that an achievement point can also be a non-leaf node. For instance, the $g$-labelled node is a $\{w_2,w_3\}$-achievement point by the Board, since action E of the Board at the $b$-labelled node eliminates the last possibility for outcome $w_1$ to be realised. 
The next lemma shows a property of achievement point, whose significance is due to the uniqueness of the chance to eliminate the last possibility.

\begin{lemma}\label{lm:achievement point existence}
For any extensive form game $G$, any set of outcomes $X\subsetneq\Omega(G)$, and any outcome $w\in X$, there is a unique agent $a$ and a unique $X$-achievement point $n$ by agent $a$ such that $w\preceq n$. 
\end{lemma}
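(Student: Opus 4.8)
The plan is to trace the conditions of Definition~\ref{df:achievement point} along the unique simple path from the root down to the outcome $w$. Write this path as $v_0, v_1, \dots, v_k$, where $v_0$ is the root and $v_k = w$. For each index $i$, let $P(v_i)$ abbreviate the statement that every outcome $u$ with $u \preceq v_i$ satisfies $u \in X$; this is exactly condition~3 of the definition applied to the node $v_i$. The whole argument reduces to locating a single ``switch-on'' index for $P$ along this path.

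First I would record the two endpoint facts. At the root, the outcomes below $v_0$ are all of $\Omega(G)$, so $P(v_0)$ fails precisely because $X \subsetneq \Omega(G)$. At the leaf $v_k = w$, the only outcome below $v_k$ is $w$ itself, which lies in $X$, so $P(v_k)$ holds. Next I would prove monotonicity along the path: if $P(v_i)$ holds then $P(v_{i+1})$ holds, since every outcome below $v_{i+1}$ is also below $v_i$. Combining these, there is a unique index $j$ with $1 \le j \le k$ for which $P(v_j)$ holds but $P(v_{j-1})$ fails.

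I then claim that $n := v_j$ is the required achievement point, taking $a$ to be the agent labelling $parent(v_j) = v_{j-1}$, which is well defined because $v_{j-1}$ is a non-leaf node. Condition~1 holds by this choice of $a$; condition~3 is exactly $P(v_j)$; and condition~2 is the negation of $P(v_{j-1})$, i.e. the existence of an outcome below $parent(v_j)$ not in $X$. All three hold, and $w \preceq v_j$ since $v_j$ lies on the path to $w$. For uniqueness, any $X$-achievement point $n'$ with $w \preceq n'$ must lie on this same path, so $n' = v_i$ for some $i$; conditions~3 and~2 for $v_i$ say precisely that $P(v_i)$ holds while $P(v_{i-1})$ fails, which by monotonicity forces $i = j$. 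Hence $n' = v_j = n$, and the agent is determined as the unique label of $parent(n)$, giving uniqueness of the pair $(a,n)$.

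I do not expect a deep obstacle here; the lemma follows once the path picture and monotonicity are in place. The only points requiring genuine care are bookkeeping the direction of $\preceq$ (it points toward the root, so $w \preceq v_i$ means $v_i$ is an ancestor of $w$, and ``outcomes below $n$'' are those $u$ with $u \preceq n$) and justifying that the switch-on index is well defined, which is exactly where monotonicity of $P$ is used. Everything else is a direct translation of Definition~\ref{df:achievement point} into statements about $P(v_{j-1})$ and $P(v_j)$.
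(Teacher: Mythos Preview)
Your argument is correct. The paper itself does not include a proof of this lemma; it only offers the informal remark that the uniqueness is ``due to the uniqueness of the chance to eliminate the last possibility,'' and your root-to-leaf monotonicity argument is precisely the natural way to make that remark rigorous, so there is nothing to compare against beyond noting that your proposal fills in what the paper leaves implicit.
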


Next, I define the notation $win_a(X)$. For any set $X$ of outcomes and any agent $a$, by $win_a(X)$ I mean the set of all nodes (including outcomes) from which agent $a$ has the {\em ability} to end the game in set $X$. 
Formally, the set $win_a(X)$ is defined below using backward induction.
\begin{definition}\label{df:winning set}
For any set $X$ of outcomes, the set $win_a(X)$ is the minimal set of nodes such that
\begin{enumerate}
\item $X\subseteq win_a(X)$;
\item for any non-leaf node $n$ labelled with agent $a$, if {\bf\em at least one} child of node $n$ belongs to the set $win_a(X)$, then node $n\in win_a(X)$;
\item for any non-leaf node $n$ {\bf\em not} labelled with agent $a$, if {\bf\em all} children of node $n$ belong to the set $win_a(X)$, then node $n\in win_a(X)$.
\end{enumerate}
\end{definition}
Informally, for a non-leaf node $n\in win_a(X)$ labelled with agent $a$, the ability of agent $a$ to end the game in $X$ is captured by the strategy that always chooses a child node of $n$ from the set $win_a(X)$. Specifically, if the root of the tree is in the set $win_a(X)$, then agent $a$ has an upfront ability to end the game in $X$.

\section{Syntax and semantics}\label{sec:syntax and semantics}

The language $\Phi$ of my logical system is defined by the following grammar:
\begin{equation}
    \phi:= p\;|\;\neg\phi\;|\;\phi\wedge\phi\;|\;\C_a\phi\;|\;\S_a\phi, \notag
\end{equation}
where $p$ is a propositional variable and $a\in\mathcal{A}$ is an agent. I read $\C_a\phi$ as ``agent $a$ is counterfactually responsible for $\phi$'' and $\S_a\phi$ as ``agent $a$ is responsible for seeing to $\phi$''. I assume that Boolean constants true $\top$ and false $\bot$ are defined in the standard way.

The next is the core definition of this paper. 
Informally, for each formula $\phi\in\Phi$, the truth set $\[\phi\]$ is the set of all outcomes where $\phi$ is true.

\begin{definition}\label{sat} 
For any extensive form game $G$ and any formula $\phi\in\Phi$, the truth set $\[\phi\]$ is defined recursively:
    \begin{enumerate}
        \item $\[p\]$ is the set of all outcomes labelled  with propositional variable $p$;
        \item $\[\neg\phi\]=\Omega(G)\setminus \[\phi\]$;
        \item $\[\phi\wedge\psi\]=\[\phi\]\cap\[\psi\]$;
         \item $\[\C_a\phi\]$ is the set of all outcomes $w\in \[\phi\]$ such that there exists a node $n\in win_a(\[\neg\phi\])$ where $w \preceq n$;
        \item $\[\S_a\phi\]$ is the set of all outcomes $w\in\Omega(G)$ such that
        \begin{enumerate}
            \item $\{n\;|\; w\preceq n\}\subseteq win_a(\[\phi\])$;
            \item there exists a $\[\phi\]$-achievement point $n$ by agent $a$ such that $w\preceq n$.
        \end{enumerate}
    \end{enumerate}
\end{definition}
Item~4 above defines the notion of counterfactual responsibility following~\cite{yazdanpanah2019strategic} and \cite{baier2021game}. An agent $a$ is counterfactually responsible for a statement $\phi$ in outcome $w$ if two conditions are satisfied: (\romannumeral1) $\phi$ is true in $w$ and (\romannumeral2) on the path of play, agent $a$ has a strategy to prevent $\phi$. The first condition is captured by the assumption $w\in\[\phi\]$. The second condition is captured by the existence of a node $n$ on the path of play ($w\preceq n$) to outcome $w$ such that $n\in win_a(\[\neg\phi\])$.

Item~5 above defines my notion of the seeing-to-it form of responsibility as the combination of backwards-strategically seeing-to-it and achievement seeing-to-it. 
An agent backwards-strategically sees to $\phi$ in outcome $w$ if the agent has an upfront ability to achieve $\phi$ and maintains the ability throughout the game. This is captured by the fact that all the nodes on the path of play leading to outcome $w$ belong to the set $win_a(\[\phi\])$, as part 5(a) of Definition~\ref{sat} shows.
An agent sees to $\phi$ in the achievement way in outcome $w$ if the agent eliminates the last possibility for $\neg\phi$. This means, on the path of play toward outcome $w$, there exists a $\[\phi\]$-achievement point by agent $a$. This is captured in part~5(b) of Definition~\ref{sat}.


\subsection{Mutual undefinability between $\C$ and $\S$}\label{sec:undefinability}

Modalities $\C$ and $\S$ capture two forms of responsibility in extensive form games. One natural question is: do I need both of them? The answer would be no if one of them can be defined via the other.
For example, in propositional logic, implication $\to$ and disjunction $\vee$ can be defined via negation $\neg$ and conjunction $\wedge$.\footnote{ $\phi\to\psi\equiv\neg(\phi\wedge\neg\psi)$ and $\phi\vee\psi\equiv\neg(\neg\phi\wedge\neg\psi)$ for each formulae $\phi,\psi$ in propositional logic.} Hence, using only negation $\neg$ and implication $\to$ is enough for a propositional logic system.
Note that \citet{naumov2021two} proved that modality $\S$ is not definable via modality $\C$ but modality $\C$ is definable via modality $\S$ by $\C_a\phi\equiv \phi\wedge\S_a\neg\S_a\neg\phi$  in {\em strategic game settings}.
Before presenting my results in extensive form game settings, let us start with an auxiliary definition:

\begin{definition}\label{semantically equivalent}
Formulae $\phi,\psi\in \Phi$ are semantically equivalent if $\[\phi\]=\[\psi\]$ for each extensive form game.
\end{definition}

In language $\Phi$, modality $\C$ is {\em definable} via modality $\S$ if, for each formula $\phi\in\Phi$, there is a semantically equivalent formula $\psi\in\Phi$ that does {\em not} use modality $\C$. 
The definability of $\S$ via $\C$ could be specified similarly.
The two theorems below show that $\C$ and $\S$ are {\em not} definable via each other in extensive form game settings.

\begin{theorem}[undefinability of $\C$ via $\S$]\label{th:undefinability C via S}
    The formula $\C_a p$ is not semantically equivalent to any formula in language $\Phi$ that does not contain modality $\C$.
\end{theorem}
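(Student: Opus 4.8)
The plan is to prove undefinability by the two-pointed-model technique: I would exhibit a single game $G$ together with two outcomes $w_1,w_2$ that are indistinguishable by every $\C$-free formula yet separated by $\C_a p$. Concretely, fix three distinct agents $a,b,e$ and let $G$ be the following tree. The root $\rho$ is labelled $b$ with children $\alpha,\beta$; node $\alpha$ is labelled $a$ with children $\gamma$ and a leaf $g$; node $\beta$ is labelled $e$ with children $\delta$ and a leaf $h$; node $\gamma$ is labelled $b$ with leaves $w_1,M$; node $\delta$ is labelled $b$ with leaves $w_2,N$. Label $w_1,w_2$ with $\{p\}$ and the leaves $g,M,h,N$ with $\emptyset$. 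A direct computation gives $win_a(\llbracket\neg p\rrbracket)=\{g,M,h,N,\alpha\}$, so $w_1\preceq\alpha$ witnesses $w_1\in\llbracket\C_a p\rrbracket$, while no node on the path to $w_2$ lies in this set, so $w_2\notin\llbracket\C_a p\rrbracket$.

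The engine of the indistinguishability proof is a symmetry. Let $\pi$ be the tree automorphism of $G$ exchanging the two subtrees ($\alpha\leftrightarrow\beta$, $\gamma\leftrightarrow\delta$, $w_1\leftrightarrow w_2$, $M\leftrightarrow N$, $g\leftrightarrow h$) while simultaneously swapping the agent labels $a$ and $e$ and fixing $b$; write $\sigma=(a\,e)$ for this permutation of agents. Since $\pi$ preserves the tree structure and all propositional labels, every semantic operator is equivariant under it; in particular $\pi(\llbracket\S_c\phi\rrbracket)=\llbracket\S_{\sigma(c)}\sigma(\phi)\rrbracket$, where $\sigma(\phi)$ swaps $a,e$ in all modal indices. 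The central claim I would then establish is the following \textbf{Key Lemma:} for every $\C$-free formula $\phi$, the truth set $\llbracket\phi\rrbracket$ is invariant under $\pi$.

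I would prove the Key Lemma by induction on $\phi$. The propositional base case holds because $\{w_1,w_2\}$ is $\pi$-invariant, and the Boolean cases are immediate. The crucial case is $\S_c\phi$ with $\llbracket\phi\rrbracket$ already $\pi$-invariant. For $c=b$ the set $\llbracket\S_b\phi\rrbracket$ is $\pi$-invariant by the equivariance above, since $\sigma$ fixes $b$. For $c=a$ I would show $\llbracket\S_a\phi\rrbracket=\emptyset$: condition~5(a) of Definition~\ref{sat} forces $\rho\in win_a(\llbracket\phi\rrbracket)$, and unwinding the backward induction of Definition~\ref{df:winning set} down the opposite branch ($\rho,\beta,\delta$ are all non-$a$ nodes, so each demands that \emph{all} children lie in $win_a$) forces $\{w_2,N,h\}\subseteq\llbracket\phi\rrbracket$; $\pi$-invariance upgrades this to $\llbracket\phi\rrbracket=\Omega(G)$, at which point condition~5(b) fails for want of a negative outcome to host an achievement point. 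The case $c=e$ is symmetric. Hence all three operators preserve $\pi$-invariance. Given the Lemma, $w_1\in\llbracket\phi\rrbracket\iff\pi(w_1)=w_2\in\llbracket\phi\rrbracket$ for every $\C$-free $\phi$, so no such $\phi$ separates $w_1$ from $w_2$; since $\C_a p$ does, $\C_a p$ is semantically equivalent to no $\C$-free formula.

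The main obstacle is the case $c=a$ of the Key Lemma: showing that the seeing-to-it modality of the \emph{counterfactually powerful} agent is unsatisfiable on every $\pi$-invariant target. This is exactly where the two modalities part ways. Any nontrivial $\pi$-invariant ability of $a$ at the root would, by the symmetry, have to be an ability of $e$ as well, which the branching structure forbids; so condition~5(a) can hold for $a$ only at the trivial target $\Omega(G)$, which condition~5(b) then rejects. The delicate part is carrying out the $win_a$ computation carefully enough to confirm that $\rho\in win_a(\llbracket\phi\rrbracket)$ genuinely forces the whole opposite branch into $\llbracket\phi\rrbracket$, and checking that no alternative achievement point on the path to $w_1$ rescues $\S_a$ once $\llbracket\phi\rrbracket=\Omega(G)$.
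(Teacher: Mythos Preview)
Your argument is correct, and it is a genuinely different route from the paper's. The paper proves Theorem~\ref{th:undefinability C via S} by the ``truth set algebra'' technique: on a three-agent game with only four outcomes it shows that the family $\mathcal{F}=\{\llbracket p\rrbracket,\llbracket\neg p\rrbracket,\llbracket\top\rrbracket,\llbracket\bot\rrbracket\}$ is closed under negation, conjunction, and every $\S_g$ (in fact each $\S_g$ sends all of $\mathcal{F}$ to $\llbracket\bot\rrbracket$), so every $\C$-free formula has truth set in $\mathcal{F}$; then a direct check gives $\llbracket\C_a p\rrbracket=\{w_1\}\notin\mathcal{F}$. Your proof instead builds a six-outcome game with a nontrivial automorphism-with-agent-swap $\pi$, and shows that $\pi$-invariance of truth sets is preserved by $\neg$, $\wedge$, and every $\S_c$; the crucial case $\S_a$ (and symmetrically $\S_e$) collapses to $\varnothing$ on $\pi$-invariant inputs, which is exactly the same phenomenon the paper exploits, seen through a symmetry lens rather than an algebraic one. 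The paper's model is smaller and its closure checks are shorter; your argument is closer in spirit to a classical bisimulation proof (which the paper explicitly sets aside) and makes the reason for the failure of $\S_a$ more structural: the ``opposite branch'' contains no $a$-nodes, so $win_a$ at the root forces that branch entirely into the target, and $\pi$-invariance then forces the target to be $\Omega(G)$. One small point you should state explicitly: for agents $d\notin\{a,b,e\}$ and propositional variables $q\neq p$ (if the ambient language has them), $\llbracket\S_d\phi\rrbracket=\varnothing$ and $\llbracket q\rrbracket=\varnothing$ in your game, so the induction goes through unchanged.
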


\begin{theorem}[undefinability of $\S$ via $\C$]\label{th:undefinability S via C}
    The formula $\S_a p$ is not semantically equivalent to any formula in language $\Phi$ that does not contain modality $\S$.
\end{theorem}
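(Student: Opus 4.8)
The plan is to prove Theorem~\ref{th:undefinability S via C} by a semantic indistinguishability argument: I would exhibit two extensive form games (or two outcomes within carefully chosen games) that agree on the truth set of every $\S$-free formula but disagree on the truth set of $\S_a p$. Since $\C$, $\neg$, $\wedge$, and the atoms are the only constructs available in an $\S$-free formula, it suffices to find games in which $\S_a p$ distinguishes an outcome that no combination of these operators can distinguish. Concretely, I would look for a single game $G$ containing two outcomes $w,w'$ such that $w\in\[\S_a p\]$ but $w'\notin\[\S_a p\]$, while $w$ and $w'$ are \emph{indistinguishable} by every $\S$-free formula.

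The key technical step is to establish an invariance lemma for $\S$-free formulae. I would prove by structural induction on $\phi$ that if $w$ and $w'$ satisfy some explicitly chosen symmetry condition, then $w\in\[\phi\]$ iff $w'\in\[\phi\]$ for every $\phi$ not containing $\S$. The atomic and Boolean cases are routine; the crux is the $\C_a$ case, which by Definition~\ref{sat}(4) depends only on whether the agent has a node $n$ with $w\preceq n$ in $win_a(\[\neg\phi\])$. The cleanest design is to make the two distinguishing outcomes siblings (or more generally to make the path structure above them symmetric) so that the \emph{ability} sets $win_a(\cdot)$ cannot separate them, while the \emph{achievement} condition in $\S_a p$, part 5(b), which pinpoints a unique agent and node eliminating the last negative possibility (by Lemma~\ref{lm:achievement point existence}), does separate them. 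The essential contrast I want to exploit is that counterfactual responsibility is about the existence of an alternative-preventing strategy along the path, whereas $\S$ additionally tracks \emph{where and by whom} the last negative possibility was eliminated—information that is genuinely finer and not recoverable from $\C$ alone.

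For the concrete witness I would use a game mirroring Figure~\ref{fig:board and governor A}: an agent $a$ acting first with two children, one subtree where $p$ holds everywhere and one subtree containing a mixed subgame. The point is to arrange an outcome where $p$ is true and where $a$ maintains the winning ability throughout but where the last negative possibility for $\neg p$ was eliminated by a \emph{different} agent (so $\S_a p$ fails for the achievement reason), set against another outcome, structurally matched for all $\C$-based and atomic properties, where $a$ is the one who performs that elimination (so $\S_a p$ holds). I would verify directly from Definitions~\ref{df:achievement point}, \ref{df:winning set}, and \ref{sat} that the two outcomes agree on $p$, on $\C_a p$, and more generally on every $\C$-formula I can build, yet differ on $\S_a p$.

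The main obstacle I anticipate is making the invariance lemma airtight across \emph{all} $\S$-free formulae rather than just the handful I check by hand: the nested $\C_a$ modalities can, in principle, probe the tree structure above $w$ and $w'$ through the truth sets $\[\neg\phi\]$ of subformulae, so I must ensure the chosen symmetry is strong enough to be preserved under the recursion and is not accidentally broken by some iterated $\C$ that indirectly detects the achievement-point difference. The safe route is to pick a symmetry—ideally a nontrivial automorphism of the game tree that swaps the two witness outcomes while fixing the labelling of atoms—so that invariance of every $\S$-free formula follows immediately because automorphisms preserve $win_a(\cdot)$ and the truth sets of all formulae, whereas the achievement point (hence $\S_a p$) is deliberately placed to be \emph{not} automorphism-invariant. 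Producing such an automorphism that still yields $\S_a p$-asymmetry is the delicate part, and I would spend most of the effort confirming that the witness game admits it.
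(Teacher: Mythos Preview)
Your automorphism idea is self-defeating. A game automorphism that preserves agent labels and propositional labels preserves \emph{every} structural notion defined from the tree, including achievement points and the sets $win_a(X)$; hence it preserves $\[\S_a\phi\]$ just as much as it preserves $\[\C_a\phi\]$. If an automorphism swaps $w$ and $w'$, then $w\in\[\S_a p\]$ iff $w'\in\[\S_a p\]$, so you cannot get the desired asymmetry this way. You flag this as ``the delicate part'', but it is not delicate---it is impossible.

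Dropping the automorphism and trying a bespoke invariance relation on a single fixed game is more promising in spirit, but the paper explicitly reports that the author could not find a uniform game that works for all $\S$-free formulae. The difficulty is exactly the one you anticipate: iterated $\C$ modalities do probe deeper into the tree, and in a fixed finite game there is no obvious reason why the closure of $\{\[p\]\}$ under $\neg$, $\wedge$, and $\C_g$ should omit $\[\S_a p\]$. Your concrete witness based on Figure~\ref{fig:board and governor A} already fails at depth one: $\C_g$ for the second agent separates the two candidate outcomes.

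The paper's route is different. It builds a family of games $G_N$ indexed by a parameter $N$, where $G_N$ is a chain of $N{+}3$ alternating moves between $a$ and $b$, with $p$ failing only at the two extreme leaves. It then defines four families $\alpha_n,\beta_n,\gamma_n,\delta_n$ of truth sets and proves (Lemma~\ref{16-oct-a}/\ref{16-oct-b}) that each application of $\C_a$ or $\C_b$ moves a truth set from level $n$ to level at most $n{-}1$. Consequently any $\S$-free formula with at most $N$ occurrences of $\C$ has its truth set in $\alpha_3\cup\beta_3\cup\gamma_3\cup\delta_3$ (Lemma~\ref{16-oct-c}), and every such set treats outcomes $w_2$ and $w_3$ alike, whereas $w_2\notin\[\S_a p\]$ and $w_3\in\[\S_a p\]$ (Lemma~\ref{16-oct-d}). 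The key move you are missing is to let the game depend on the candidate formula rather than seeking one game that defeats all of them.
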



The formal proofs of the above two theorems can be seen in Appendix~\ref{app_sec:undefinability C via S} and \ref{app_sec:undefinability S via C}, respectively.
The undefinability results show that, in order to discuss both forms of responsibility in extensive form game settings, I need to include both modalities.

\subsection{Higher-order responsibility}\label{sec:higher-order responsibility}

As can be seen in the grammar of language $\Phi$, modalities $\C$ and $\S$ can be nested in a formula.
By higher-order responsibility, I mean more complicated forms of responsibility expressible by the nesting of modalities $\C$ and $\S$. For example, in outcome $w_2$ of the game depicted in Figure~\ref{fig:board and governor A}, the Governor is counterfactually responsible for Beach being left in prison. However, the Board could have {\em prevented such responsibility} by denying (D) the petition. Thus, in outcome $w_2$, the Board is counterfactually responsible for the Governor's responsibility for Beach being left in prison: 
$
w_2\in \[\C_b\C_g\text{``Beach is left in prison''}\].
$
Similarly, it is true that
$
w_2\in \[\C_b\S_g\text{``Beach is left in prison''}\].
$ 

Discussion of higher-order responsibility makes sense, especially in a situation where some of the agents who do affect the outcome are not the {\em proper subjects} to ascribe the responsibility. For example, young kids are usually not considered the proper subjects of criminal responsibility. Therefore, when they commit crimes and assume direct responsibility for the outcomes, the secondary responsibility of their guardians needs to be considered \cite{hollingsworth2007responsibility}. The same is true for autonomous agents and their designers.

There are some interesting properties of higher-order responsibility. For instance, {\em formulae $\C_a\C_a\phi$ and $\C_a\phi$ are semantically equivalent} (see Property~\ref{pp:CC=C} in Appendix~\ref{app_sec:CC=C}). This means, if an agent is counterfactually responsible for a statement $\phi$, then she is also counterfactually responsible for assuming this counterfactual responsibility.
Also, {\em both formulae $\S_b\S_a\phi$ and $\S_b\C_a\phi$ are semantically equivalent to $\bot$} (see Property~\ref{pp:SS empty} in Appendix~\ref{app_sec:SS=false} and Property~\ref{pp:SC empty} in Appendix~\ref{app_sec:SC=false}). This means there is no chance for an agent to be responsible for seeing to another agent's responsibility.

\subsection{Complexity of model checking}\label{sec:complexity}

In my setting, the computation of the set $\[\phi\]$ is the core of any model checking problem related to formula $\phi$. Hence, I analyse the complexity of computing the truth set $\[\phi\]$ for an arbitrary formula $\phi\in\Phi$. I assume that deciding whether an outcome is labelled with a propositional variable takes constant time. By Definition~\ref{df:achievement point}, Definition~\ref{df:winning set}, and Definition~\ref{sat}, I have the next theorem. See Appendix~\ref{app_sec:complexity} for detailed analysis.

\begin{theorem}[time complexity]\label{th:complexity}
For any formula $\phi\in\Phi$ and any extensive form game $G$, the computation of the set $\[\phi\]$ takes $O(|\phi|\cdot |G|)$, where $|\phi|$ is the size of formula $\phi$ and $|G|$ is the number of nodes in game $G$.
\end{theorem}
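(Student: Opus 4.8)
The plan is to proceed by structural induction on $\phi$, computing the truth sets of all subformulas bottom-up, in order of nondecreasing size. I would represent each truth set $\[\psi\]$ as a Boolean array indexed by the outcomes of $G$, so that membership tests cost $O(1)$ and complement, intersection, and union each cost $O(|G|)$. The whole argument then reduces to a single claim: \emph{given the truth sets of the immediate subformulas, the truth set of each syntactic construct can be computed in $O(|G|)$ time}. Since $\phi$ has at most $|\phi|$ subformula occurrences and each is processed once, summing these $O(|G|)$ costs yields the desired $O(|\phi|\cdot|G|)$ bound.

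The base case and the Boolean cases are immediate. For a propositional variable $p$, I iterate over the (at most $|G|$) outcomes and test labelling in constant time, so $\[p\]$ costs $O(|G|)$. Given $\[\phi\]$ and $\[\psi\]$, the sets $\[\neg\phi\]=\Omega(G)\setminus\[\phi\]$ and $\[\phi\wedge\psi\]=\[\phi\]\cap\[\psi\]$ are each computed by one pass over the outcome array, again $O(|G|)$. For the modal cases I first need the sets $win_a(X)$: following the backward induction of Definition~\ref{df:winning set}, a single bottom-up traversal suffices, since each node's membership depends only on the already-computed memberships of its children, giving $O(|G|)$.

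The heart of the argument, and the main obstacle, is handling the modalities without incurring a quadratic blowup. The semantic clauses in Definition~\ref{sat} quantify over nodes $n$ with $w\preceq n$, i.e.\ over the ancestors (and self) of an outcome $w$; naively walking each root-to-outcome path would cost $O(|G|)$ per outcome and $O(|G|^2)$ overall. Instead I would precompute the relevant ancestor predicate for all nodes at once by a single top-down propagation. For $\C_a\phi$, I compute $\[\neg\phi\]$ and then $win_a(\[\neg\phi\])$ as above, and mark, in one root-to-leaves pass, every node that either lies in $win_a(\[\neg\phi\])$ or has a marked parent; an outcome $w$ belongs to $\[\C_a\phi\]$ iff $w\in\[\phi\]$ and $w$ is marked. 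For clause 5(a) of $\S_a\phi$, the same idea applies with the complementary predicate: in one top-down pass I mark a node iff it lies in $win_a(\[\phi\])$ and its parent is marked, so that an outcome is marked exactly when all of its ancestors-and-self lie in $win_a(\[\phi\])$.

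Clause 5(b) requires locating $\[\phi\]$-achievement points. Using Definition~\ref{df:achievement point}, I would first compute, by one bottom-up pass, a flag at each node recording whether all of its descendant outcomes lie in $\[\phi\]$. A node $n$ is then a $\[\phi\]$-achievement point by $a$ exactly when $parent(n)$ is labelled $a$, the flag at $parent(n)$ is false (some descendant outcome of the parent escapes $\[\phi\]$), and the flag at $n$ is true; this is an $O(1)$ test per node. A final top-down pass marks every node that is such an achievement point or has a marked parent, so that in aggregate $O(|G|)$ time each outcome learns whether some ancestor-or-self is a $\[\phi\]$-achievement point by $a$. Intersecting the marks from 5(a) and 5(b) over the outcomes gives $\[\S_a\phi\]$ in $O(|G|)$. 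Each construct therefore costs $O(|G|)$, completing the induction.
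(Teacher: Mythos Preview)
Your proposal is correct and follows essentially the same approach as the paper: structural induction on $\phi$, with each construct handled in $O(|G|)$ via tree traversals---a bottom-up pass for $win_a(\cdot)$ and the achievement-point flags, and a top-down pass to propagate the relevant ancestor predicates to the outcomes. The paper's appendix presents the modal cases as explicit breadth-first-search algorithms with subtree pruning rather than mark propagation, but this is a purely presentational difference; your description is, if anything, a slightly cleaner rendering of the same linear-time computation.
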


\section{Responsibility gap}\label{sec:gap}

One of the important questions discussed in the ethics literature is the responsibility gap. That is, if something happens, is there always an agent that can be held responsible for it? 
I now discuss if the two forms of responsibility considered in the paper are enough to avoid responsibility gaps in extensive form games. 
Note that, as I discussed in Section~\ref{sec:literature}, nobody should be responsible for a vacuous truth. Hence, in this section, I only consider the responsibility gaps for statements that are {\em not} trivially true.

Let us go back to the example depicted in Figure~\ref{fig:board and governor A}. Recall that if Beach is left in prison, then in outcome $w_1$, the Board is responsible for seeing to it; in outcome $w_2$, the Governor is responsible for seeing to it and also counterfactually responsible for it. If Beach is set free in outcome $w_3$, then the Governor is counterfactually responsible for it. Thus, for the statements ``Beach is left in prison'' and ``Beach is set free'', there is no responsibility gap in this game.

\subsection{In extensive form games with two agents}

In Theorem~\ref{no gap 2 agents theorem} below, I show that the two forms of responsibility discussed in this paper leave no gap in any extensive form games with only two agents. Let us start, however, by formally defining the {\bf\em gap formulae} $\Gc(\phi)$ and $\Gs(\phi)$ for any formula $\phi\in\Phi$. Informally, the formula $\Gc(\phi)$ means that $\phi$ is true and nobody is counterfactually responsible for it. The formula $\Gs(\phi)$ says the same for the seeing-to-it form of responsibility.
\begin{align}
    \Gc(\phi)&:=\phi\wedge \bigwedge_{a\in\mathcal{A}}\neg\C_a\phi,\label{eq:Gapc definition}\\
    \Gs(\phi)&:=\phi\wedge \bigwedge_{a\in\mathcal{A}}\neg\S_a\phi.\label{eq:Gaps definition}
\end{align}
I define the combined responsibility gap formula $\Gcs(\phi)$ as the conjunction $\Gc(\phi)\wedge\Gs(\phi)$.
%
%
%
%
%
%
%
%
%
%
The proof of Theorem~\ref{no gap 2 agents theorem} uses the following well-known lemma. To keep this paper self-contained, I prove the lemma in Appendix~\ref{app_sec:proof lm win in two agent}.

\begin{lemma}\label{lm:win in two agent}
For any formula $\phi\in\Phi$ and any node $n$ in a two-agent extensive form game between agents $a$ and $b$, if $n\notin win_a(\[\phi\])$, then $n\in win_b(\[\neg\phi\])$.
\end{lemma}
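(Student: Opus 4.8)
The plan is to prove the statement by backward induction on the game tree, establishing for \emph{every} node $n$ the implication that $n\notin win_a(\[\phi\])$ forces $n\in win_b(\[\neg\phi\])$. Write $X=\[\phi\]$ and note that $\[\neg\phi\]=\Omega(G)\setminus X$ by item~2 of Definition~\ref{sat}. Since the game is a finite tree, I would induct on the height of the subtree rooted at $n$, so that the induction hypothesis is available for all children of $n$. I would also record at the outset that the two-agent assumption is what makes the argument work: every non-leaf node is labelled either $a$ or $b$, so ``not labelled $a$'' coincides with ``labelled $b$'' and conversely, which is precisely what lets a failure to win for $a$ propagate into a win for $b$.

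Before the induction I would extract from Definition~\ref{df:winning set} the \emph{biconditional} form of membership that the induction needs. Conditions~2 and~3 there are one-way closure implications, so their converses are the only nontrivial content, and I would derive them from the \emph{minimality} of $win_a(X)$. Concretely, if a non-leaf node $m$ labelled $a$ had no child in $win_a(X)$, then $win_a(X)\setminus\{m\}$ would still satisfy conditions~1--3 --- here one uses that $m$, being a non-leaf, is not in $X$, and that deleting $m$ cannot invalidate a closure step at any other node (a node forced into $win_a(X)$ by a witnessing child $c\neq m$ stays forced, and such a node is never $m$ itself). This contradicts minimality, so $m\in win_a(X)$ must have a child in $win_a(X)$; the symmetric argument shows a non-leaf node labelled $b$ lies in $win_a(X)$ only if \emph{all} its children do. I expect this extraction of the converse closure conditions from minimality to be the main obstacle, since it is the single place where the minimal-fixpoint form of the definition is genuinely used; everything afterwards is bookkeeping.

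With the biconditional in hand, the induction is routine. In the base case $n$ is a leaf, where membership in $win_a(X)$ reduces to $n\in X$; hence $n\notin win_a(X)$ gives $n\in\Omega(G)\setminus X=\[\neg\phi\]$, so $n\in win_b(\[\neg\phi\])$ by item~1 of Definition~\ref{df:winning set}. For the inductive step assume $n\notin win_a(X)$ and split on the label of the non-leaf node $n$. If $n$ is labelled $a$, the characterisation forces \emph{every} child $c$ to satisfy $c\notin win_a(X)$; the induction hypothesis puts every child in $win_b(\[\neg\phi\])$; and since $n$ is not labelled $b$, condition~3 for $win_b$ yields $n\in win_b(\[\neg\phi\])$. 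If $n$ is labelled $b$, the characterisation gives \emph{some} child $c\notin win_a(X)$; the induction hypothesis gives $c\in win_b(\[\neg\phi\])$; and since $n$ is labelled $b$, condition~2 for $win_b$ yields $n\in win_b(\[\neg\phi\])$. This completes the induction and hence the lemma. The two cases are exactly a finite-game determinacy statement, and they also make transparent why exactly two agents are needed: a node labelled by a third agent would satisfy neither the existential nor the universal closure condition for $win_b$, and the propagation would break.
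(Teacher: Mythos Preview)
Your proposal is correct and follows essentially the same backward-induction argument as the paper. The paragraph on extracting the biconditional characterisation of $win_a(X)$ from minimality is more than you need---the inductive step only uses the contrapositives of the forward closure conditions in Definition~\ref{df:winning set} (e.g., $n$ labelled $a$ with $n\notin win_a(X)$ forces every child out of $win_a(X)$ directly by contraposing item~2), which the paper invokes without appealing to minimality---but this extra care does not affect correctness.
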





\begin{theorem}\label{no gap 2 agents theorem}
For any formula $\phi\in\Phi$ and any two-agent extensive form game $G$, if $\[\phi\]\neq\Omega(G)$, then $\[\Gcs(\phi)\]=\varnothing$. 
\end{theorem}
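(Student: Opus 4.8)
The plan is to argue by contradiction. Suppose some outcome $w$ lies in $\[\Gcs(\phi)\]$; I aim to exhibit an agent who is in fact counterfactually responsible for $\phi$ at $w$, contradicting one of the conjuncts of $\Gc(\phi)$. Unfolding $\Gcs(\phi)=\Gc(\phi)\wedge\Gs(\phi)$ together with~\eqref{eq:Gapc definition} and~\eqref{eq:Gaps definition}, the assumption $w\in\[\Gcs(\phi)\]$ gives me three facts: (i) $w\in\[\phi\]$, (ii) $w\notin\[\C_c\phi\]$ for every agent $c$, and (iii) $w\notin\[\S_c\phi\]$ for every agent $c$.

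The first step is to locate an achievement point and read off what the failure of seeing-to-it must mean. Since $\[\phi\]\neq\Omega(G)$ and $w\in\[\phi\]$, the hypothesis $\[\phi\]\subsetneq\Omega(G)$ of Lemma~\ref{lm:achievement point existence} holds, so there is a (unique) agent $c$ and a $\[\phi\]$-achievement point $n$ with $w\preceq n$. This is precisely clause~5(b) of Definition~\ref{sat} holding for agent $c$. Because $w\notin\[\S_c\phi\]$ by~(iii), the other clause must be the one that fails, so clause~5(a) fails for $c$: there is a node $m$ on the play path to $w$, that is with $w\preceq m$, such that $m\notin win_c(\[\phi\])$. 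The key structural point is that achievement-seeing-to-it is \emph{automatically} present for agent $c$ via Lemma~\ref{lm:achievement point existence}, which forces the seeing-to-it gap at $w$ to originate entirely from the backwards-strategic clause~5(a).

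The second step converts this into counterfactual responsibility for the remaining agent. The game is played between two agents, and $c$ is one of them; let $d$ be the other. Applying Lemma~\ref{lm:win in two agent} to the node $m$ (in the orientation matching whether $c$ plays the role of $a$ or of $b$), the fact $m\notin win_c(\[\phi\])$ yields $m\in win_d(\[\neg\phi\])$. Now combine $w\in\[\phi\]$ from~(i), the relation $w\preceq m$, and $m\in win_d(\[\neg\phi\])$: by clause~4 of Definition~\ref{sat} this is exactly the condition for $w\in\[\C_d\phi\]$, contradicting~(ii). Hence no such $w$ exists and $\[\Gcs(\phi)\]=\varnothing$.

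I expect the only delicate part to be the logical bookkeeping of the first step, namely ensuring that clause~5(b) is genuinely secured for a \emph{named} agent $c$ (so that the failure of $\S_c\phi$ is pinned to clause~5(a) and thereby produces the witnessing node $m$); the rest is driven cleanly by the two-agent determinacy packaged in Lemma~\ref{lm:win in two agent}, which transforms one agent's inability to force $\phi$ into the other agent's ability to force $\neg\phi$. Intuitively, the combined gap cannot occur in a two-agent game because the two gaps must fail for complementary reasons: the seeing-to-it gap can only arise from the backwards-strategic clause, and in a two-agent game that clause's failure is the very thing that hands the opponent a counterfactual-responsibility witness.
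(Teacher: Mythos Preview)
Your proposal is correct and follows essentially the same approach as the paper: use Lemma~\ref{lm:achievement point existence} to pin clause~5(b) of Definition~\ref{sat} on a specific agent, deduce that clause~5(a) must fail for that agent, and then invoke the two-agent determinacy of Lemma~\ref{lm:win in two agent} to hand the opponent a counterfactual-responsibility witness. The only cosmetic difference is that the paper frames the argument as showing $w\in\[\Gs(\phi)\]\Rightarrow w\notin\[\Gc(\phi)\]$ rather than as a direct contradiction from $w\in\[\Gcs(\phi)\]$, but the logical content is identical.
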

\begin{proof}
I prove this theorem by showing that, for any outcome $w\in\Omega(G)$, if $w\in\[\Gs(\phi)\]$, then $w\notin\[\Gc(\phi)\]$. 
By statement~\eqref{eq:Gapc definition} and items~2 and 3 of Definition~\ref{sat}, it suffices to show the existence of an agent $a$ such that $w\in\[\C_a\phi\]$.

By statement~\eqref{eq:Gaps definition} and items~2 and 3 of Definition~\ref{sat}, the assumption $w\in\[\Gs(\phi)\]$ implies that
\begin{equation}\label{18-apr-f}
w\in\[\phi\]
\end{equation}
and
\begin{equation}\label{18-apr-e}
w\notin\[\S_b\phi\]
\end{equation}
for each agent $b\in\mathcal{A}$.
At the same time, by the assumption $\[\phi\]\neq\Omega(G)$, statement~\eqref{18-apr-f}, and Lemma \ref{lm:achievement point existence}, there exists an agent $b$ and a $\[\phi\]$-achievement point $n$ by agent $b$ such that $w\preceq n$.  
Hence, by item~5 of Definition~\ref{sat} and statement~\eqref{18-apr-e}, there is a node $m$ such that $w\preceq m$ and $m\notin win_b(\[\phi\])$. Since $G$ is a two-agent game, let $a$ be the agent in the game distinct from agent $b$. Then, $m\in win_a(\[\neg\phi\])$ by Lemma~\ref{lm:win in two agent}. Hence, $w\in\[\C_a\phi\]$ by item~4 of Definition \ref{sat} because of statement~\eqref{18-apr-f} and that $w\preceq m$.
\end{proof}

\subsection{In extensive form games with more agents}

To see if there is a responsibility gap in extensive form games with {\em more than} two agents, let us go back to the story in the introduction, which is not as simple as I tried to make it. In over 30 years that separate Kim Nees's murder and Beach's attorney filing an executive clemency petition, the case became highly controversial in Montana due to the lack of direct evidence and doubts about the integrity of the interrogators. By the time the petition was filed, the Board had already made clear its intention to deny the petition, while the Governor expressed his support for the clemency~\cite{b15mt}. 

Then, something very unusual happened.
On 4 December 2014, a bill was introduced in the Montana House of Representatives that would allow the Governor to grant executive clemency no matter what the decision of the Board is. This bill aimed to strip the Board from the power that it had from the day the State of Montana was founded in 1889. Although the bill would affect the Governor's power to grant clemency in other cases as well, the primary goal of the legislation was to give the Governor a chance to free Beach~\cite{mip2023never}.

\begin{figure}[ht]
\begin{center}
\vspace{0mm}
\scalebox{0.5}{\includegraphics{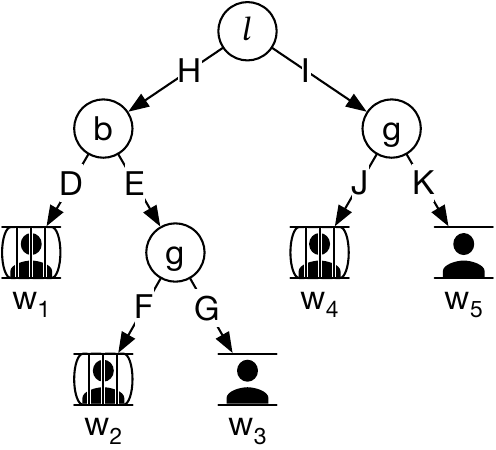}}
\caption{Barry Beach's case of clemency}
\label{fig:house, board and governor}
\end{center}
\end{figure}
Figure~\ref{fig:house, board and governor} depicts the extensive form game that captures the situation after the bill was introduced. If the Montana State Legislature rejects (H) the bill, then the game continues as in Figure~\ref{fig:board and governor A}. If the Legislature approves (I) the bill, then the Governor unilaterally decides whether to grant the clemency. 
In this new three-agent game, the Governor is responsible for Beach being left in prison in outcomes $w_2$ and $w_4$ both counterfactually and for seeing to it. The Governor is also counterfactually responsible for Beach being freed in outcomes $w_3$ and $w_5$. However, in outcome $w_1$, nobody is responsible for the fact that Beach is left in prison either counterfactually or for seeing to it. In particular, the Board is not responsible for seeing to this because it no longer has an upfront ability to guarantee that Beach is left in prison in the outcome. Therefore, by statements~\eqref{eq:Gapc definition} and \eqref{eq:Gaps definition},
\begin{equation}\label{9-may-a}
\[\Gcs(\text{``Beach is left in prison''})\]=\{w_1\}.    
\end{equation}
This example shows that the responsibility gap may exist in extensive form games with more than two agents. In other words, the two forms of responsibility discussed here are not enough to have a responsible agent in every situation. 

\subsection{Hierarchy of responsibility gaps}

A further question about the responsibility gap is if there is an agent responsible for the gap. The responsibility for the gap, or {\em the responsibility for the lack of a responsible agent}, is a natural concept that applies to many real-world situations. For instance, the managers who assign tasks and the governing bodies that set the rules are often responsible for the lack of a responsible person. In the example in Figure~\ref{fig:house, board and governor}, it is the Legislature that is counterfactually responsible for the gap in outcome $w_1$. Indeed, the Legislature could prevent the formula $\Gcs(\text{``Beach is left in prison''})$ from being true by approving (I) the bill:
$$w_1\in\[\C_l\Gcs(\text{``Beach is left in prison''})\].$$
In addition, in my example, the Board is also counterfactually responsible for the gap in outcome $w_1$.

I also consider the lack of responsibility for the gap. By {\em second-order gap} for a formula $\phi$ I mean the presence of outcomes in which $\Gcs(\phi)$ is true and nobody is responsible for it. In a real-world situation, the first-order responsibility gap often shows that the managers do not assign tasks in an accountable way, while the second-order responsibility gap is often caused by a failure of the leadership to properly define the roles of the managers so that the managers had no way to assign tasks in an accountable way.

In general, for an arbitrary formula $\phi\in\Phi$ and any integer $i\ge 0$, I consider the $i^\text{th}$-order gap statement $\Gcs_i(\phi)$ defined recursively as:
\begin{equation}\label{Gcs definition}
\Gcs_i(\phi) := \begin{cases}
\Gcs_{i-1}(\phi) \wedge
\bigwedge_{a\in\mathcal{A}}\neg\C_a\Gcs_{i-1}(\phi)\\ \hspace{1.32cm}  \wedge \bigwedge_{a\in\mathcal{A}} \neg\S_a\Gcs_{i-1}(\phi), & i\geq 1;\\
\phi,   & i=0.
\end{cases}
\end{equation}
\noindent
I also consider the $i^\text{th}$-order counterfactual gap statement $\Gc_i(\phi)$ defined recursivedly as:
\begin{equation}\label{Gc definition}
\Gc_i(\phi) := \begin{cases}
\Gc_{i-1}(\phi)\wedge\bigwedge_{a\in\mathcal{A}}\neg\C_a\Gc_{i-1}(\phi), & i\geq 1;\\
\phi, & i=0.
\end{cases}
\end{equation} 
One can similarly define the $i^\text{th}$-order seeing-to-it gap statement $\Gs_i(\phi)$. It is easy to see from statements~\eqref{eq:Gapc definition} and \eqref{eq:Gaps definition} that the first order gap statements $\Gcs_1(\phi)$, $\Gc_1(\phi)$, and $\Gs_1(\phi)$ are equivalent to the previously discussed gap statement $\Gcs(\phi)$, $\Gc(\phi)$, and $\Gs(\phi)$, respectively.

As shown in Theorem~\ref{no gap 2 agents theorem}, in two-agent extensive form games, the truth set $\[\Gcs(\phi)\]$ is empty for each formula $\phi\in\Phi$ such that $\[\phi\]\neq\Omega(G)$. Informally, this means that there is no responsibility gap in two-agent extensive form games. At the same time, the example depicted in Figure~\ref{fig:house, board and governor} shows that such a gap might exist in games with more than two agents. This observation is correct.
In Appendix~\ref{app_sec:gap instance}, for each integer $i\ge 2$, I construct an extensive form game in which the truth set $\[\Gcs_i(\phi)\]$ is {\em not} empty.

In spite of this, in Theorem~\ref{th:no higher-order Gc} and Corollary~\ref{cr:no higher-order Gcs gap} below, I show that, for any extensive form game, the sets $\[\Gc_i(\phi)\]$ and $\[\Gcs_i(\phi)\]$ are empty for large enough integer $i$. Informally, the higher-order responsibility gap does not exist in any extensive form game if I consider sufficiently high order. 

Let us first show two lemmas that are used later to prove Theorem~\ref{th:no higher-order Gc}. These lemmas show that the set $\[\Gc_{i}(\phi)\]$ monotonously shrinks to empty as the order $i$ increases.

\begin{lemma}\label{23-apr-a}
$\[\Gc_{i+1}(\phi)\]\subseteq\[\Gc_{i}(\phi)\]$
for any formula $\phi\in\Phi$ and any integer $i\ge 0$. 
\end{lemma}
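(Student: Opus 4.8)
The plan is to recognise that this containment is essentially immediate from the recursive definition~\eqref{Gc definition}: the formula $\Gc_{i+1}(\phi)$ is a conjunction that carries $\Gc_{i}(\phi)$ as one of its conjuncts, so its truth set can only shrink. No induction or game-theoretic machinery is required, and none of the earlier structural results (about $win_a$, achievement points, or the undefinability theorems) is needed.

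First I would unfold the recursion. Since $i\ge 0$ implies $i+1\ge 1$, the first case of~\eqref{Gc definition} applies with index $i+1$, giving $\Gc_{i+1}(\phi) = \Gc_{i}(\phi)\wedge\bigwedge_{a\in\mathcal{A}}\neg\C_a\Gc_{i}(\phi)$, so that $\Gc_{i}(\phi)$ appears explicitly as the leading conjunct. Next I would apply item~3 of Definition~\ref{sat}, the semantics of conjunction, to rewrite the truth set as an intersection: $\[\Gc_{i+1}(\phi)\] = \[\Gc_{i}(\phi)\]\cap\[\bigwedge_{a\in\mathcal{A}}\neg\C_a\Gc_{i}(\phi)\]$. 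Finally, since any intersection is contained in each of its factors, the left-hand side is a subset of $\[\Gc_{i}(\phi)\]$, which is exactly the claim. This single chain covers all $i\ge 0$, including the base index $i=0$ where $\Gc_{0}(\phi)=\phi$, so no separate case analysis is needed.

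The only point requiring any care is unwinding the recursion correctly so that $\Gc_{i}(\phi)$ is exposed as a visible conjunct rather than being absorbed into the notation; beyond that there is no genuine obstacle. I would also note that the very same one-step argument establishes the analogous monotonicity statements $\[\Gcs_{i+1}(\phi)\]\subseteq\[\Gcs_{i}(\phi)\]$ and $\[\Gs_{i+1}(\phi)\]\subseteq\[\Gs_{i}(\phi)\]$, since by~\eqref{Gcs definition} each higher-order gap formula is likewise defined by conjoining further constraints onto its immediate predecessor.
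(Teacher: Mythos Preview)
Your proposal is correct and follows exactly the paper's own approach: the paper's proof consists of the single sentence that the lemma follows from statement~\eqref{Gc definition} and item~3 of Definition~\ref{sat}. Your write-up merely spells out this one-line justification in more detail.
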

\begin{proof}
The statement of the lemma follows from statement~\eqref{Gc definition} and item 3 of Definition~\ref{sat}.
\end{proof}






\begin{lemma}\label{23-apr-b}
For any formula $\phi\in\Phi$, any integer $i\ge 0$, and any extensive form game $G$, if
$\varnothing\subsetneq\[\Gc_{i}(\phi)\]\subsetneq \Omega(G)$,
then
$\[\Gc_{i+1}(\phi)\]\subsetneq\[\Gc_{i}(\phi)\]$.
\end{lemma}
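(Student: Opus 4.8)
The plan is to reduce the strict-inclusion claim to a single existence statement about counterfactual responsibility. Write $\psi := \Gc_i(\phi)$ and $X := \[\psi\]$, so that $\[\neg\psi\] = \Omega(G)\setminus X$ and, by hypothesis, $\varnothing\subsetneq X\subsetneq\Omega(G)$. Unfolding statement~\eqref{Gc definition} together with items~2 and~3 of Definition~\ref{sat} gives $\[\Gc_{i+1}(\phi)\] = X\setminus\bigcup_{a\in\mathcal{A}}\[\C_a\psi\]$. Since $\[\C_a\psi\]\subseteq\[\psi\]=X$ by item~4 of Definition~\ref{sat}, the inclusion $\[\Gc_{i+1}(\phi)\]\subseteq\[\Gc_{i}(\phi)\]$ furnished by Lemma~\ref{23-apr-a} is \emph{strict} precisely when $\bigcup_{a\in\mathcal{A}}\[\C_a\psi\]\neq\varnothing$. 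So it suffices to exhibit one outcome $w\in X$ and one agent $a$ with $w\in\[\C_a\psi\]$.

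First I would record an auxiliary observation: if every outcome $u$ with $u\preceq m$ lies in a set $Y$, then $m\in win_a(Y)$ for every agent $a$. This follows by a routine induction on the height of the subtree rooted at $m$ using the three clauses of Definition~\ref{df:winning set}: a leaf in $Y$ is in $win_a(Y)$ by clause~1, and at an internal node all children are in $win_a(Y)$ by the induction hypothesis, so the node qualifies under clause~2 or~3 regardless of its label.

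The core of the argument is locating the right node. Since $X\subsetneq\Omega(G)$, fix an outcome $v\notin X$ and consider the simple path from the root to $v$. Let $m$ be the node highest on this path (closest to the root) whose subtree lies entirely in $\Omega(G)\setminus X$, i.e. $u\notin X$ for every outcome $u\preceq m$; such a node exists because $v$ itself has this property. Because $X\neq\varnothing$, the root's subtree is not contained in $\Omega(G)\setminus X$, so $m$ is not the root and $parent(m)$ exists. Let $a$ be the agent labelling $parent(m)$. By the auxiliary observation $m\in win_a(\Omega(G)\setminus X) = win_a(\[\neg\psi\])$; since $parent(m)$ is labelled $a$ and has the child $m$ in this set, clause~2 of Definition~\ref{df:winning set} gives $parent(m)\in win_a(\[\neg\psi\])$. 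By the maximality of $m$, the subtree at $parent(m)$ is not contained in $\Omega(G)\setminus X$, so it contains an outcome $w\in X$, which satisfies $w\preceq parent(m)$. Then $w\in X=\[\psi\]$ and $parent(m)\in win_a(\[\neg\psi\])$ with $w\preceq parent(m)$, so $w\in\[\C_a\psi\]$ by item~4 of Definition~\ref{sat}, which is what was needed.

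The hard part is the choice of this node: one must select the maximal entirely-outside-$X$ subtree along a path to a losing outcome, so that its parent simultaneously lies in $win_a(\[\neg\psi\])$ (inherited upward from the child $m$, exploiting that the parent is owned by the agent $a$ labelling it) and still contains an $X$-outcome in its own subtree (forced by maximality). Everything else is bookkeeping. Note that exactly the two hypotheses are used: $X\neq\varnothing$ ensures $m$ is not the root so that $parent(m)$ exists, and $X\neq\Omega(G)$ provides the starting outcome $v$; these match the two strict inclusions assumed in the statement.
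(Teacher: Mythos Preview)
Your proof is correct and follows essentially the same approach as the paper's: the node $m$ you construct (the highest node on the path to $v$ whose subtree lies entirely outside $X$) is exactly the $\[\neg\psi\]$-achievement point the paper obtains by invoking Lemma~\ref{lm:achievement point existence}, and from there both arguments push $win_a(\[\neg\psi\])$ up to $parent(m)$, extract an outcome $w\in X$ below $parent(m)$, and conclude $w\in\[\C_a\psi\]$. The only difference is that you inline the achievement-point construction and the auxiliary ``all-leaves-in-$Y$ implies node-in-$win_a(Y)$'' fact rather than citing the packaged lemma.
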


\begin{proof}
The assumption $\varnothing\subsetneq\[\Gc_{i}(\phi)\]\subsetneq\Omega(G)$, by item~2 of Definition~\ref{sat}, implies that $\varnothing\subsetneq\neg\[\Gc_{i}(\phi)\]\subsetneq\Omega(G)$. Then, on the one hand, there is an outcome $w\in \[\neg\Gc_{i}(\phi)\]$.
On the other hand, by Lemma~\ref{lm:achievement point existence}, there is an $\[\neg\Gc_{i}(\phi)\]$-achievement point $n$ by an agent $a$ such that $w\preceq n$. Thus, by Definition~\ref{df:achievement point}, 
\begin{enumerate}
    \item $parent(n)$ is labelled with agent $a$;
    \item there exists an outcome $w'$ such that $w'\preceq parent(n)$ and  $w'\notin \[\neg\Gc_{i}(\phi)\]$;
    \item $w''\in \[\neg\Gc_{i}(\phi)\]$ for each outcome $w''$ such that $w''\preceq n$.
\end{enumerate}
Item~3 above implies that
$n\in win_a(\[\neg\Gc_{i}(\phi)\])$ 
by Definition~\ref{df:winning set}.
Hence, by item~2 of Definition~\ref{df:winning set} and item~1 above,
\begin{equation}\label{eq:30-July-9}
parent(n)\in win_a(\[\neg\Gc_{i}(\phi)\]).
\end{equation}
By the part $w'\notin \[\neg\Gc_{i}(\phi)\]$ of item~2 above and item~2 of Definition~\ref{sat},
\begin{equation}\label{eq:30-July-10}
w'\in\[\Gc_{i}(\phi)\].
\end{equation}
Thus, $w'\in \[\C_a\Gc_{i}(\phi)\]$ by the part $w'\preceq parent(n)$ of item~2 above, statement~\eqref{eq:30-July-9}, and item~4 of Definition~\ref{sat}.
Then, $w'\notin\[\neg\C_a\Gc_{i}(\phi)\]$ by item~2 of Definition~\ref{sat}. 
Hence, $w'\notin\[\Gc_{i+1}(\phi)\]$ by statement~\eqref{Gc definition} and item~3 of Definition~\ref{sat}.
Then, $\[\Gc_{i+1}(\phi)\]\neq\[\Gc_{i}(\phi)\]$ by statement~\eqref{eq:30-July-10}.
Therefore, $\[\Gc_{i+1}(\phi)\]\subsetneq\[\Gc_{i}(\phi)\]$ by Lemma~\ref{23-apr-a}.
\end{proof}

\begin{theorem}\label{th:no higher-order Gc}
$\[\Gc_i(\phi)\]=\varnothing$ for each integer $i\ge |\Omega(G)|-1$ and each formula $\phi\in\Phi$ such that $\[\phi\]\subsetneq\Omega(G)$.
\end{theorem}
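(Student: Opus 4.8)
The plan is to feed the two preceding lemmas into a short cardinality-counting argument; no new analytic content is needed beyond them. Write $k=|\Omega(G)|$. Since $\Gc_0(\phi)=\phi$ by statement~\eqref{Gc definition}, the hypothesis $\[\phi\]\subsetneq\Omega(G)$ gives $\[\Gc_0(\phi)\]=\[\phi\]\subsetneq\Omega(G)$ and hence $|\[\Gc_0(\phi)\]|\le k-1$. First I would invoke Lemma~\ref{23-apr-a} to note that the sequence $\{\[\Gc_i(\phi)\]\}_{i\ge 0}$ is monotonically decreasing, so that every term is contained in $\[\Gc_0(\phi)\]=\[\phi\]$ and is therefore a proper subset of $\Omega(G)$.

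The second step is to keep the hypothesis of Lemma~\ref{23-apr-b} available at every stage. That lemma needs $\varnothing\subsetneq\[\Gc_i(\phi)\]\subsetneq\Omega(G)$, and the proper-subset half is exactly the observation just made: each truth set sits inside $\[\phi\]$, which is a proper subset of $\Omega(G)$. Consequently, as long as $\[\Gc_i(\phi)\]$ is nonempty, Lemma~\ref{23-apr-b} applies and yields the strict inclusion $\[\Gc_{i+1}(\phi)\]\subsetneq\[\Gc_i(\phi)\]$.

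Finally I would run the count. A strict inclusion of finite sets drops cardinality by at least one, so an easy induction shows $|\[\Gc_i(\phi)\]|\le|\[\phi\]|-i$ for every $i$ at which the set is still nonempty. Since cardinalities are nonnegative integers and $|\[\phi\]|\le k-1$, the set is forced to empty by the time $i$ reaches $|\[\phi\]|$; that is, $\[\Gc_{|\[\phi\]|}(\phi)\]=\varnothing$, and in particular $\[\Gc_{k-1}(\phi)\]=\varnothing$. Monotonicity from Lemma~\ref{23-apr-a} then propagates emptiness to all $i\ge k-1=|\Omega(G)|-1$, which is the assertion. I do not anticipate a real obstacle here; the only point deserving explicit care is confirming that the hypothesis $\[\Gc_i(\phi)\]\subsetneq\Omega(G)$ of Lemma~\ref{23-apr-b} persists across all stages, since without it that lemma gives no strict drop and the count collapses.
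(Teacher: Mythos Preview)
Your proposal is correct and follows essentially the same cardinality-counting argument as the paper: chain the inclusions from Lemma~\ref{23-apr-a}, invoke Lemma~\ref{23-apr-b} to force a strict drop at every stage where the set is nonempty, and conclude that the sequence must hit $\varnothing$ within $|\[\phi\]|\le|\Omega(G)|-1$ steps. Your write-up is in fact more explicit than the paper's about why the upper hypothesis $\[\Gc_i(\phi)\]\subsetneq\Omega(G)$ of Lemma~\ref{23-apr-b} persists, which is a worthwhile clarification.
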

\begin{proof}
By the assumption $\[\phi\]\subsetneq\Omega(G)$ of this theorem, statement~\eqref{Gc definition}, and Lemma~\ref{23-apr-a},
$$
\Omega(G)\supsetneq \[\phi\]= \[\Gc_0(\phi)\]\supseteq \[\Gc_1(\phi)\]\supseteq \[\Gc_2(\phi)\]\supseteq \dots 
$$
Note that $|\[\phi\]|\leq|\Omega(G)|-1$ by the assumption $\[\phi\]\subsetneq\Omega(G)$ of this theorem. Therefore, $\[\Gc_i(\phi)\]=\varnothing$ for each integer $i\ge |\Omega(G)|-1$ by Lemma~\ref{23-apr-b}.
\end{proof}

The next corollary follows from the above theorem and the observation that $\[\Gcs_i(\phi)\]\subseteq\[\Gc_i(\phi)\]$. I give the formal proof in Appendix~\ref{app_sec:corollary no Gcs gap}.

\begin{corollary}\label{cr:no higher-order Gcs gap}
$\[\Gcs_i(\phi)\]=\varnothing$ for each integer $i\ge |\Omega(G)|-1$ and each formula $\phi\in\Phi$ such that $\[\phi\]\subsetneq\Omega(G)$.
\end{corollary}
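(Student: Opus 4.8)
The plan is to reduce the corollary to Theorem~\ref{th:no higher-order Gc} by establishing the set inclusion $\[\Gcs_i(\phi)\]\subseteq\[\Gc_i(\phi)\]$ for every integer $i\ge 0$. Once this inclusion is in hand, the assumption $\[\phi\]\subsetneq\Omega(G)$ together with Theorem~\ref{th:no higher-order Gc} gives $\[\Gc_i(\phi)\]=\varnothing$ for each $i\ge|\Omega(G)|-1$, and the inclusion then forces $\[\Gcs_i(\phi)\]=\varnothing$ for the same range of $i$, which is exactly the claim.

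To prove the inclusion I would argue by induction on $i$. The base case $i=0$ is immediate, since $\Gcs_0(\phi)$ and $\Gc_0(\phi)$ are both defined to be $\phi$ in statements~\eqref{Gcs definition} and \eqref{Gc definition}. For the inductive step I would fix an outcome $w\in\[\Gcs_i(\phi)\]$ and unfold statement~\eqref{Gcs definition} using items~2 and 3 of Definition~\ref{sat} to obtain $w\in\[\Gcs_{i-1}(\phi)\]$, together with $w\notin\[\C_a\Gcs_{i-1}(\phi)\]$ and $w\notin\[\S_a\Gcs_{i-1}(\phi)\]$ for each agent $a$. The induction hypothesis $\[\Gcs_{i-1}(\phi)\]\subseteq\[\Gc_{i-1}(\phi)\]$ immediately yields $w\in\[\Gc_{i-1}(\phi)\]$, so it only remains to verify $w\notin\[\C_a\Gc_{i-1}(\phi)\]$ for each agent $a$. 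Notice that the seeing-to-it conjuncts $\neg\S_a\Gcs_{i-1}(\phi)$ of $\Gcs_i$ are simply discarded here, because the target formula $\Gc_i$ in statement~\eqref{Gc definition} carries no $\S$ conjuncts at all.

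The main work, and the only genuine obstacle, is transferring the negative condition $w\notin\[\C_a\Gcs_{i-1}(\phi)\]$ into $w\notin\[\C_a\Gc_{i-1}(\phi)\]$, since the counterfactual modality is applied to two syntactically different formulae. Here I would first record the \emph{monotonicity of $win_a$}: if $X\subseteq Y$ then $win_a(X)\subseteq win_a(Y)$, which follows directly from Definition~\ref{df:winning set}, as $win_a(\cdot)$ is the minimal set closed under rules whose applicability can only increase when the target set grows. Then I would argue by contradiction: assuming $w\in\[\C_a\Gc_{i-1}(\phi)\]$, item~4 of Definition~\ref{sat} supplies a node $n$ with $w\preceq n$ and $n\in win_a(\[\neg\Gc_{i-1}(\phi)\])$. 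The induction hypothesis gives $\[\neg\Gc_{i-1}(\phi)\]\subseteq\[\neg\Gcs_{i-1}(\phi)\]$, so monotonicity of $win_a$ places $n$ in $win_a(\[\neg\Gcs_{i-1}(\phi)\])$. Combining this with the already-established membership $w\in\[\Gcs_{i-1}(\phi)\]$ and the relation $w\preceq n$, item~4 of Definition~\ref{sat} yields $w\in\[\C_a\Gcs_{i-1}(\phi)\]$, contradicting the unfolding above. This contradiction completes the inductive step, establishing the inclusion and hence the corollary.
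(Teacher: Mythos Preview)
Your proposal is correct and follows essentially the same route as the paper: both reduce the corollary to Theorem~\ref{th:no higher-order Gc} by proving $\[\Gcs_i(\phi)\]\subseteq\[\Gc_i(\phi)\]$ via induction on $i$, and both handle the key step $w\notin\[\C_a\Gc_{i-1}(\phi)\]$ by the same contradiction argument using monotonicity of $win_a$ (which the paper records separately as Lemma~\ref{lm:26-apr-a}). The only cosmetic difference is that you state the monotonicity for arbitrary sets of outcomes while the paper states it for truth sets of formulae.
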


\section{Conclusion}

The existing definitions of seeing-to-it modalities have clear shortcomings when viewed as possible forms of responsibility. In this paper, I combined them into a single definition of seeing-to-it responsibility that addresses the shortcomings. By proving the undefinability results, I have shown that the proposed notion is semantically independent of the counterfactual responsibility already discussed in the literature. The other important contribution of this work is the hierarchy of responsibility gaps. I believe that taking into account higher-order responsibilities is an important step towards designing better mechanisms in terms of responsibility attribution. In the future, I would like to study how the gap results could be extended to the setting of games with imperfect information, where even Lemma~\ref{lm:win in two agent} does not hold.

One more thing, if you are curious about the ending of Beach's story, 
in January 2015, the Montana House of Representatives approved the bill that changes the clemency procedure. By doing so, they, perhaps unintentionally, prevented the potential responsibility gap existing in outcome $w_1$ of Figure~\ref{fig:house, board and governor}. 
In November of the same year, the Governor granted clemency to Beach~\cite{b15mt}.

\clearpage

\bibliography{naumov,this}

\begin{thebibliography}{41}
\providecommand{\natexlab}[1]{#1}

\bibitem[{Abarca and Broersen(2022)}]{abarca2022stit}
Abarca, A. I.~R.; and Broersen, J.~M. 2022.
\newblock A STIT logic of responsibility.
\newblock In \emph{Proceeding of the 21st International Conference on
  Autonomous Agents and Multiagent Systems ({AAMAS}-22)}, 1717--1719.

\bibitem[{{Associated Press}(2015)}]{ap15guardian}
{Associated Press}. 2015.
\newblock Montana governor frees man convicted in 1979 beating death of
  classmate.
\newblock \emph{The Guardian}, November 20.
\newblock
  \url{https://www.theguardian.com/us-news/2015/nov/20/montana-governor-grants-clemency-barry-beach}.

\bibitem[{Baier, Funke, and Majumdar(2021)}]{baier2021game}
Baier, C.; Funke, F.; and Majumdar, R. 2021.
\newblock A game-theoretic account of responsibility allocation.
\newblock In \emph{Proceedings of the 30th International Joint Conference on
  Artificial Intelligence (IJCAI-21)}, 1773--1779.

\bibitem[{Balbiani, Herzig, and Troquard(2008)}]{bht08jpl}
Balbiani, P.; Herzig, A.; and Troquard, N. 2008.
\newblock Alternative axiomatics and complexity of deliberative STIT theories.
\newblock \emph{Journal of Philosophical Logic}, 37(4): 387--406.

\bibitem[{Belnap and Perloff(1990)}]{belnap1990seeing}
Belnap, N.; and Perloff, M. 1990.
\newblock Seeing to it that: a canonical form for agentives.
\newblock In \emph{Knowledge Representation and Defeasible Reasoning},
  167--190. Springer.

\bibitem[{Belnap and Perloff(1992)}]{belnap1992way}
Belnap, N.; and Perloff, M. 1992.
\newblock The way of the agent.
\newblock \emph{Studia Logica}, 51: 463--484.

\bibitem[{Braham and van Hees(2018)}]{braham2018voids}
Braham, M.; and van Hees, M. 2018.
\newblock Voids or fragmentation: moral responsibility for collective outcomes.
\newblock \emph{The Economic Journal}, 128(612): F95--F113.

\bibitem[{Braham and VanHees(2011)}]{braham2011responsibility}
Braham, M.; and VanHees, M. 2011.
\newblock Responsibility voids.
\newblock \emph{The Philosophical Quarterly}, 61(242): 6--15.

\bibitem[{Broersen(2009)}]{broersen2009stit}
Broersen, J. 2009.
\newblock A STIT-logic for extensive form group strategies.
\newblock In \emph{Proceedings of the 2009 IEEE/WIC/ACM International Joint
  Conference on Web Intelligence and Intelligent Agent Technology}, volume~3,
  484--487. IEEE.

\bibitem[{Broersen and Herzig(2015)}]{broersen2015using}
Broersen, J.; and Herzig, A. 2015.
\newblock Using STIT theory to talk about strategies.
\newblock \emph{Models of Strategic Reasoning: Logics, Games, and Communities},
  137--173.

\bibitem[{Broersen, Herzig, and Troquard(2006)}]{bht06jelia}
Broersen, J.; Herzig, A.; and Troquard, N. 2006.
\newblock A {S}{T}{I}{T}-extension of {A}{T}{L}.
\newblock In \emph{European Workshop on Logics in Artificial Intelligence},
  69--81. Springer.

\bibitem[{Bullock(2015)}]{b15mt}
Bullock, S. 2015.
\newblock Executive Order Granting Clemency to Barry Allan Beach.
\newblock
  \url{https://formergovernors.mt.gov/bullock/docs/2015EOs/EO_19_2015_Beach.pdf}.
\newblock {Accessed: 2023-05-14}.

\bibitem[{Burton et~al.(2020)Burton, Habli, Lawton, McDermid, Morgan, and
  Porter}]{burton2020mind}
Burton, S.; Habli, I.; Lawton, T.; McDermid, J.; Morgan, P.; and Porter, Z.
  2020.
\newblock Mind the gaps: assuring the safety of autonomous systems from an
  engineering, ethical, and legal perspective.
\newblock \emph{Artificial Intelligence}, 279: 103201.

\bibitem[{Chellas(1969)}]{chellas1969logical}
Chellas, B.~F. 1969.
\newblock \emph{The logical form of imperatives}.
\newblock Stanford University.

\bibitem[{{Constitution Convention}(1889)}]{cc89url}
{Constitution Convention}. 1889.
\newblock Constitution of the State of Montana.
\newblock \url{https://courts.mt.gov/external/library/docs/1889cons.pdf}.
\newblock {Accessed: 2023-05-14}.

\bibitem[{Duijf(2018)}]{duijf2018responsibility}
Duijf, H. 2018.
\newblock Responsibility voids and cooperation.
\newblock \emph{Philosophy of the Social Sciences}, 48(4): 434--460.

\bibitem[{Edwards(2021)}]{e21sep}
Edwards, J. 2021.
\newblock {Theories of Criminal Law}.
\newblock In Zalta, E.~N., ed., \emph{The {Stanford} Encyclopedia of
  Philosophy}. Metaphysics Research Lab, Stanford University, {F}all 2021
  edition.

\bibitem[{Frankfurt(1969)}]{f69tjop}
Frankfurt, H.~G. 1969.
\newblock Alternate possibilities and moral responsibility.
\newblock \emph{The Journal of Philosophy}, 66(23): 829--839.

\bibitem[{Goetze(2022)}]{goetze2022mind}
Goetze, T.~S. 2022.
\newblock Mind the gap: autonomous systems, the responsibility gap, and moral
  entanglement.
\newblock In \emph{Proceedings of the 2022 ACM Conference on Fairness,
  Accountability, and Transparency}, 390--400.

\bibitem[{Gunkel(2020)}]{gunkel2020mind}
Gunkel, D.~J. 2020.
\newblock Mind the gap: responsible robotics and the problem of responsibility.
\newblock \emph{Ethics and Information Technology}, 22(4): 307--320.

\bibitem[{Hiller, Israel, and Heitzig(2022)}]{hiller2022axiomatic}
Hiller, S.; Israel, J.; and Heitzig, J. 2022.
\newblock An axiomatic approach to formalized responsibility ascription.
\newblock In \emph{Proceedings of the 24th International Conference on
  Principles and Practice of Multi-Agent Systems (PRIMA-22)}, 435--457.
  Springer.

\bibitem[{Hollingsworth(2007)}]{hollingsworth2007responsibility}
Hollingsworth, K. 2007.
\newblock Responsibility and rights: children and their parents in the youth
  justice system.
\newblock \emph{International Journal of Law, Policy and the Family}, 21(2):
  190--219.

\bibitem[{Horty and Pacuit(2017)}]{hp17rsl}
Horty, J.; and Pacuit, E. 2017.
\newblock Action types in {S}{T}{I}{T} semantics.
\newblock \emph{The Review of Symbolic Logic}, 10(4): 617--637.

\bibitem[{Horty(2001)}]{h01}
Horty, J.~F. 2001.
\newblock \emph{Agency and deontic logic}.
\newblock Oxford University Press.

\bibitem[{Horty and Belnap(1995)}]{horty1995deliberative}
Horty, J.~F.; and Belnap, N. 1995.
\newblock The deliberative STIT: a study of action, omission, ability, and
  obligation.
\newblock \emph{Journal of Philosophical Logic}, 24(6): 583--644.

\bibitem[{Knight et~al.(2022)Knight, Naumov, Shi, and Suntharraj}]{kn22arxiv}
Knight, S.; Naumov, P.; Shi, Q.; and Suntharraj, V. 2022.
\newblock Truth Set Algebra: A New Way to Prove Undefinability.
\newblock \emph{arXiv:2208.04422}.

\bibitem[{Langer et~al.(2021)Langer, Oster, Speith, Hermanns, K{\"a}stner,
  Schmidt, Sesing, and Baum}]{langer2021we}
Langer, M.; Oster, D.; Speith, T.; Hermanns, H.; K{\"a}stner, L.; Schmidt, E.;
  Sesing, A.; and Baum, K. 2021.
\newblock What do we want from Explainable Artificial Intelligence (XAI)?--A
  stakeholder perspective on XAI and a conceptual model guiding
  interdisciplinary XAI research.
\newblock \emph{Artificial Intelligence}, 296: 103473.

\bibitem[{Lorini, Longin, and Mayor(2014)}]{lorini2014logical}
Lorini, E.; Longin, D.; and Mayor, E. 2014.
\newblock A logical analysis of responsibility attribution: emotions,
  individuals and collectives.
\newblock \emph{Journal of Logic and Computation}, 24(6): 1313--1339.

\bibitem[{Lorini and Schwarzentruber(2011)}]{ls11ai}
Lorini, E.; and Schwarzentruber, F. 2011.
\newblock A logic for reasoning about counterfactual emotions.
\newblock \emph{Artificial Intelligence}, 175(3): 814--847.

\bibitem[{Matthias(2004)}]{matthias2004responsibility}
Matthias, A. 2004.
\newblock The responsibility gap: ascribing responsibility for the actions of
  learning automata.
\newblock \emph{Ethics and Information Technology}, 6: 175--183.

\bibitem[{{Montana Board of Pardons and Parole}(2023)}]{bopp23url}
{Montana Board of Pardons and Parole}. 2023.
\newblock History.
\newblock \url{https://bopp.mt.gov/History}.
\newblock {Accessed: 2023-05-14}.

\bibitem[{{Montana Innocence Project}(2023)}]{mip2023never}
{Montana Innocence Project}. 2023.
\newblock Never, ever, ever give up: Barry Beach’s resilient fight for
  freedom.
\newblock \url{https://mtinnocenceproject.org/barry-beach/}.
\newblock {Accessed: 2023-05-26}.

\bibitem[{Naumov and Tao(2019)}]{naumov2019blameworthiness}
Naumov, P.; and Tao, J. 2019.
\newblock Blameworthiness in strategic games.
\newblock In \emph{Proceedings of the 33rd AAAI Conference on Artificial
  Intelligence (AAAI-19)}, 3011--3018.

\bibitem[{Naumov and Tao(2020)}]{naumov2020epistemic}
Naumov, P.; and Tao, J. 2020.
\newblock An epistemic logic of blameworthiness.
\newblock \emph{Artificial Intelligence}, 283: 103269.

\bibitem[{Naumov and Tao(2021)}]{naumov2021two}
Naumov, P.; and Tao, J. 2021.
\newblock Two forms of responsibility in strategic games.
\newblock In \emph{Proceedings of the 30th International Joint Conference on
  Artificial Intelligence (IJCAI-21)}.

\bibitem[{Olkhovikov and Wansing(2019)}]{ow16sl}
Olkhovikov, G.~K.; and Wansing, H. 2019.
\newblock Inference as doxastic agency. Part I: The basics of justification
  {S}{T}{I}{T} logic.
\newblock \emph{Studia Logica}, 107(1): 167--194.

\bibitem[{Perloff(1991)}]{p91synthese}
Perloff, M. 1991.
\newblock STIT and the language of agency.
\newblock \emph{Synthese}, 86: 379--408.

\bibitem[{{US Supreme Court}(1993)}]{herrera93case}
{US Supreme Court}. 1993.
\newblock Herrera v. {C}ollins.
\newblock \url{https://supreme.justia.com/cases/federal/us/506/390}.
\newblock 506 U.S. 390.

\bibitem[{Widerker(2017)}]{w17}
Widerker, D. 2017.
\newblock \emph{Moral responsibility and alternative possibilities: Essays on
  the importance of alternative possibilities}.
\newblock Routledge.

\bibitem[{Xu(1998)}]{x98jpl}
Xu, M. 1998.
\newblock Axioms for deliberative {S}{T}{I}{T}.
\newblock \emph{Journal of Philosophical Logic}, 27(5): 505--552.

\bibitem[{Yazdanpanah et~al.(2019)Yazdanpanah, Dastani, Alechina, Logan, and
  Jamroga}]{yazdanpanah2019strategic}
Yazdanpanah, V.; Dastani, M.; Alechina, N.; Logan, B.; and Jamroga, W. 2019.
\newblock Strategic responsibility under imperfect information.
\newblock In \emph{Proceedings of the 18th International Conference on
  Autonomous Agents and Multiagent Systems (AAMAS-19)}, 592--600.

\end{thebibliography}

\clearpage

\appendix

\begin{center}
\bf \Large   Technical Appendix

\vspace{2mm}
\end{center}

\section{Mutual undefinability between $\C$ and $\S$}\label{app_sec:undefinability}

I use a technique named ``truth set algebra''~\cite{kn22arxiv} that is different from the traditional ``bisimulation'' method. Unlike the ``bisimulation'' method, the ``truth sets algebra'' technique uses a single model. Grossly speaking, I define an extensive form game, use it to show the semantic inequivalence between formula $\C_ap$ and any formula in language $\Phi$ that does not use modality $\C$, and do the same for modality $\S$.

\subsection{Undefinability of $\C$ via $\S$}\label{app_sec:undefinability C via S}

In this subsection, I consider an extensive form game between agents $a$, $b$, and $c$ depicted in the top of Figure~\ref{C via S game figure}. It has four outcomes: $w_1$, $w_2$, $w_3$, and $w_4$. Without loss of generality\footnote{Alternatively, additional agents and propositional variables can be assumed to be present but not used as labels. In particular, according to items~4 and 5 of Definition~\ref{sat}, it can be deduced that $\[\S_d\phi\]=\[\C_d\phi\]=\varnothing=\[\bot\]$ for any agent $d$ which is {\em not} used as a label in the game and any formula $\phi\in\Phi$.}, I assume that the language contains only agents $a$, $b$, $c$ and a single propositional variable $p$. Outcomes $w_1$ and $w_3$ are labelled with the set $\{p\}$ and outcomes $w_2$ and $w_4$ are labelled with the empty set, see the top of Figure~\ref{C via S game figure}.

\begin{figure}[ht]
\begin{center}
\vspace{0mm}
\scalebox{0.47}{\includegraphics{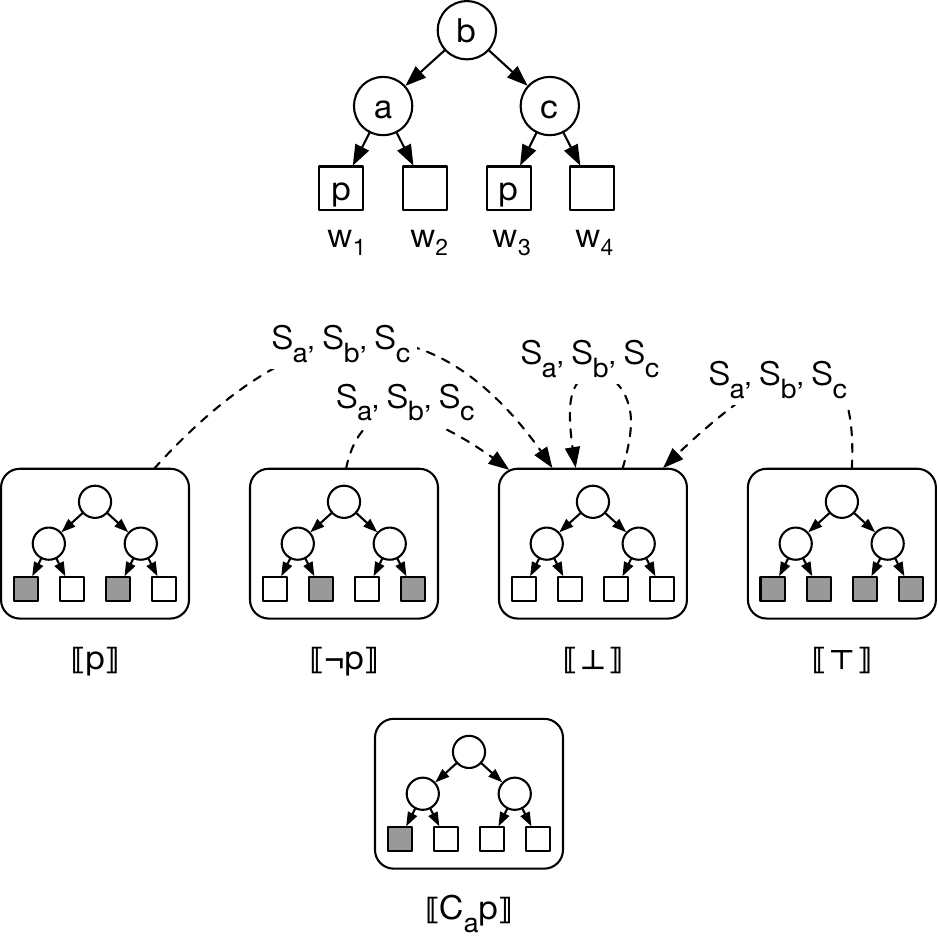}}
\caption{Towards the proof of undefinability of $\C$ via $\S$}\label{C via S game figure}
\end{center}
\end{figure}

I visualise the truth set $\[\phi\]$ of a formula $\phi\in\Phi$ by {\em shading grey} the outcomes of the game that belong to the set $\[\phi\]$. To be specific, I consider a family of truth sets $\mathcal{F}=\{\[p\],\[\neg p\],\[\bot\],\[\top\]\}$ for the above game and visualise these truth sets with the four diagrams in the middle row of Figure~\ref{C via S game figure}.

\begin{lemma}\label{13-oct-a}
$\[\S_g\phi\]=\[\bot\]$ for any agent $g\in\{a,b,c\}$ and any formula $\phi\in\Phi$ such that $\[\phi\]\in\mathcal{F}$. 
\end{lemma}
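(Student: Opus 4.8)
The plan is to prove $\[\S_g\phi\]=\varnothing$ by splitting on the four possible values that $\[\phi\]$ may take in $\mathcal{F}$, namely $\[p\]$, $\[\neg p\]$, $\[\bot\]$, and $\[\top\]$ (the four diagrams in the middle row of Figure~\ref{C via S game figure}), and showing in each case that no outcome $w$ can satisfy both parts of item~5 of Definition~\ref{sat}.

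First I would dispose of the two degenerate truth sets. If $\[\phi\]=\[\bot\]=\varnothing$, then part~5(b) would require a $\varnothing$-achievement point by $g$; but condition~3 of Definition~\ref{df:achievement point} forces every outcome below such a node to lie in $\varnothing$, which is impossible because every node dominates at least one outcome. If $\[\phi\]=\[\top\]=\Omega(G)$, then part~5(b) would require an $\Omega(G)$-achievement point, which condition~2 of Definition~\ref{df:achievement point} rules out since no outcome lies outside $\Omega(G)$; this is also the case $X=\Omega(G)$ excluded by the hypothesis of Lemma~\ref{lm:achievement point existence}. In both cases part~5(b) fails for every $w$, so $\[\S_g\phi\]=\varnothing$.

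The substance of the argument is the remaining two cases $\[\phi\]=\[p\]=\{w_1,w_3\}$ and $\[\phi\]=\[\neg p\]=\{w_2,w_4\}$. Here I would read the tree off the top of Figure~\ref{C via S game figure} and compute, by the backward induction of Definition~\ref{df:winning set}, the set $win_g(\[\phi\])$ for each agent $g\in\{a,b,c\}$. The key fact I expect to record in all six of these computations is that the root of the game never enters $win_g(\[\phi\])$: when $g$ labels the root, none of its children already lies in $win_g(\[\phi\])$, so item~2 of Definition~\ref{df:winning set} cannot add it; and when $g$ does not label the root, at least one child subtree contains, below a node not owned by $g$, an outcome outside $\[\phi\]$, so item~3 of Definition~\ref{df:winning set}, which demands that \emph{all} children be winning, keeps that subtree and hence the root out. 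Since the root lies on the path of play to every outcome, the inclusion $\{n\mid w\preceq n\}\subseteq win_g(\[\phi\])$ demanded by part~5(a) fails for every outcome $w$, and therefore $\[\S_g\phi\]=\varnothing$.

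I expect the main obstacle to be this last step, and in particular the fact that the emptiness here is not vacuous: for $\[p\]$ and $\[\neg p\]$ some agents do own $\[\phi\]$-achievement points, so part~5(b) can be met, and the set $\[\S_g\phi\]$ is forced to be empty purely through the failure of the maintenance condition~5(a) at the root. The delicate point is therefore to carry out all six winning-set computations carefully and to confirm that the root is excluded uniformly across the three agents and both truth sets, rather than appealing to a single generic argument.
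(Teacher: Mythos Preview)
Your proposal is correct and follows essentially the same approach as the paper: for $\[\top\]$ you use the absence of an achievement point to violate part~5(b), and for $\[p\]$ and $\[\neg p\]$ you show the root lies outside $win_g(\[\phi\])$ to violate part~5(a), exactly as the paper does. The only cosmetic difference is that the paper groups the case $\[\phi\]=\[\bot\]$ with $\[p\]$ and $\[\neg p\]$ (since $win_g(\varnothing)=\varnothing$ trivially excludes the root), whereas you group it with $\[\top\]$ via the nonexistence of a $\varnothing$-achievement point; both arguments are valid and equally short.
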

\begin{proof}
I first show that if $\[\phi\]=\[p\]$, then $\[\S_g\phi\]=\[\bot\]$ for each agent $g\in\{a,b,c\}$.

Indeed, $\[\phi\]=\[p\]=\{w_1,w_3\}$. Then, by Definition~\ref{df:winning set}, the root node of the tree does {\em not} belong to the set $win_g(\[\phi\])$ for each agent $g\in\{a,b,c\}$. Thus, for any agent $g\in \{a,b,c\}$, there is no single path from the root to an outcome such that all nodes of this path belong to the set $win_g(\[\phi\])$. Hence, by item 5(a) of Definition~\ref{sat}, none of the agents is responsible for seeing to $\phi$ in any of the outcomes. Therefore, for each formula $\phi\in\Phi$ and each agent $g\in\{a,b,c\}$, if $\[\phi\]=\[p\]$, then $\[\S_g\phi\]=\[\bot\]$. This is captured by the dashed arrow from the diagram for the set $\[p\]$ to the diagram for the set $\[\bot\]$ in the middle row of Figure~\ref{C via S game figure}.

The justifications for the cases where $\[\phi\]=\[\neg p\]$ and $\[\phi\]=\[\bot\]$ are similar.

In the case where $\[\phi\]=\[\top\]$, observe that the set $\[\top\]$ is the set of all outcomes in the game. Thus, there is no $\[\phi\]$-achievement point by Definition~\ref{df:achievement point}. Hence, by item~5(b) of Definition~\ref{sat}, none of the agents is responsible for seeing to $\phi$ in any of the outcomes. Then, for each formula $\phi\in\Phi$ and each agent $g\in\{a,b,c\}$, if $\[\phi\]=\[\top\]$, then $\[\S_g\phi\]=\[\bot\]$.  This is captured by the dashed arrow from the diagram for the set $\[\top\]$ to the diagram for the set $\[\bot\]$.
\end{proof}

\begin{lemma}\label{10-nov-a}
$\[\phi\]\in\mathcal{F}$ for any formula $\phi$ that only contains modalities $\S_a$, $\S_b$, and $\S_c$.
\end{lemma}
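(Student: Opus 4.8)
The plan is to proceed by structural induction on the formula $\phi$, exploiting the fact that the four-element family $\mathcal{F}=\{\[p\],\[\neg p\],\[\bot\],\[\top\]\}$ is closed under every operation that can appear in such a $\phi$. Concretely, since $\phi$ is built only from the propositional variable $p$, the Boolean connectives $\neg$ and $\wedge$, and the modalities $\S_a$, $\S_b$, $\S_c$, it suffices to show that each of these constructors maps $\mathcal{F}$ into itself.

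The base case is $\phi=p$, where $\[p\]\in\mathcal{F}$ holds by the very definition of the family $\mathcal{F}$. For the inductive step I would treat the three constructors in turn. For negation, I would first verify that $\mathcal{F}$ is closed under complementation in $\Omega(G)$: the complement of $\[p\]=\{w_1,w_3\}$ is $\[\neg p\]=\{w_2,w_4\}$ and vice versa, while the complement of $\[\bot\]=\varnothing$ is $\[\top\]=\Omega(G)$ and vice versa. Combined with item~2 of Definition~\ref{sat}, this yields $\[\neg\psi\]\in\mathcal{F}$ whenever the induction hypothesis gives $\[\psi\]\in\mathcal{F}$. For conjunction, I would check that $\mathcal{F}$ is closed under intersection; since the two nontrivial members $\{w_1,w_3\}$ and $\{w_2,w_4\}$ are disjoint and $\varnothing,\Omega(G)$ behave as expected, every pairwise intersection again lands in $\mathcal{F}$, so item~3 of Definition~\ref{sat} gives $\[\psi_1\wedge\psi_2\]\in\mathcal{F}$.

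The modal case is where the real content sits, but it has already been isolated into Lemma~\ref{13-oct-a}. By the induction hypothesis $\[\psi\]\in\mathcal{F}$, so Lemma~\ref{13-oct-a} applies directly and gives $\[\S_g\psi\]=\[\bot\]$ for each $g\in\{a,b,c\}$; since $\[\bot\]\in\mathcal{F}$, this closes the induction. In effect $\mathcal{F}$ is a Boolean subalgebra of $2^{\Omega(G)}$ that is additionally absorbed to $\[\bot\]$ by every seeing-to-it modality, which is exactly what the visualisation in the middle row of Figure~\ref{C via S game figure} records.

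I do not expect a genuine obstacle here: once Lemma~\ref{13-oct-a} is in hand, the argument is a routine finite verification of the Boolean closure properties together with one appeal to that lemma for the modal step. The only thing worth stating carefully is that the three constructors are the only ones available to a formula using solely $\S_a,\S_b,\S_c$ (no occurrence of $\C$), so that no further cases arise in the induction.
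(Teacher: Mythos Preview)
Your proposal is correct and follows essentially the same approach as the paper: a structural induction on $\phi$, verifying closure of $\mathcal{F}$ under complement and intersection for the Boolean connectives, and invoking Lemma~\ref{13-oct-a} for the modal case. The paper's proof is organised identically, so there is nothing further to add.
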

\begin{proof}
I prove the statement of the lemma by induction on the structural complexity of formula $\phi$. 

If $\phi$ is propositional variable $p$, then the statement of the lemma is true because $\[p\]$ is an element of the set $\mathcal{F}$.    

If formula $\phi$ has the form $\neg\psi$, where $\[\psi\]\in\mathcal{F}$ by the induction hypothesis, then, by item~2 of Definition~\ref{sat}, the set of outcomes $\[\phi\]$ is the complement of the set $\[\psi\]$. In other words, the diagram for the set $\[\phi\]$ is obtained from the diagram for the set $\[\psi\]$ by swapping the white and grey colours of the squares. Observe in the middle row of Figure~\ref{C via S game figure} that such a swap for any of the sets in $\mathcal{F}$ is again one of the sets in $\mathcal{F}$. Therefore, $\[\phi\]\in\mathcal{F}$.

If formula $\phi$ has the form $\psi_1\wedge\psi_2$, where $\[\psi_1\],\[\psi_2\]\in\mathcal{F}$ by the induction hypothesis, then, by item~3 of Definition~\ref{sat}, the set $\[\phi\]$ is the intersection of the sets $\[\psi_1\]$ and $\[\psi_2\]$. Observe in the middle row of Figure~\ref{C via S game figure} that the intersection of any two of the sets in $\mathcal{F}$ is again one of the sets in $\mathcal{F}$. For example, $\[p\]\cap \[\neg p\]= \[\bot\]$. Therefore, $\[\phi\]\in\mathcal{F}$.

If formula $\phi$ has the form $\S_g\psi$, where $g\in\{a,b,c\}$, then $\[\psi\]\in\mathcal{F}$ by the induction hypothesis. In this case, the statement of the lemma follows from Lemma~\ref{13-oct-a} and that $\[\bot\]\in\mathcal{F}$.
\end{proof}

\begin{lemma}
    $\[\C_a p\]\notin\mathcal{F}$.
\end{lemma}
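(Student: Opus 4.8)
The plan is to compute $\llbracket\C_a p\rrbracket$ explicitly from the game in Figure~\ref{C via S game figure} and show that the resulting set is not one of the four diagrams collected in $\mathcal{F}$. First recall that $\llbracket p\rrbracket=\{w_1,w_3\}$ and $\llbracket\neg p\rrbracket=\{w_2,w_4\}$. By item~4 of Definition~\ref{sat}, every outcome in $\llbracket\C_a p\rrbracket$ lies in $\llbracket p\rrbracket$, so $\llbracket\C_a p\rrbracket\subseteq\{w_1,w_3\}$. The key reduction is to observe that the only members of $\mathcal{F}$ contained in $\{w_1,w_3\}$ are $\llbracket\bot\rrbracket=\varnothing$ and $\llbracket p\rrbracket=\{w_1,w_3\}$. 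Hence, to establish $\llbracket\C_a p\rrbracket\notin\mathcal{F}$, it suffices to show that $\llbracket\C_a p\rrbracket$ contains \emph{exactly one} of $w_1$ and $w_3$.

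To decide membership, I would compute the set $win_a(\llbracket\neg p\rrbracket)=win_a(\{w_2,w_4\})$ by the backward induction of Definition~\ref{df:winning set}: seed the set with $\{w_2,w_4\}$, then repeatedly add any $a$-labelled node having \emph{at least one} child already in the set and any non-$a$-labelled node having \emph{all} children in the set. Once $win_a(\{w_2,w_4\})$ is determined, I would apply the counterfactual condition of item~4 of Definition~\ref{sat} to each of $w_1$ and $w_3$ separately, checking whether some node $n$ on the root-to-outcome path (that is, with $w\preceq n$) belongs to $win_a(\{w_2,w_4\})$. The tree is arranged precisely so that agent $a$ can steer the play into $\{w_2,w_4\}$ from some node along the path to one of the two $p$-outcomes, while having no such node on the path to the other; this asymmetry is exactly what makes the counterfactual condition hold for one outcome and fail for the other.

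Concluding, the computation yields a singleton, either $\{w_1\}$ or $\{w_3\}$, which is neither $\varnothing$ nor $\{w_1,w_3\}$, so $\llbracket\C_a p\rrbracket\notin\mathcal{F}$. I expect the main obstacle to be the concrete backward-induction bookkeeping for $win_a(\{w_2,w_4\})$ and verifying the claimed asymmetry between the two $p$-paths; this is where the specific agent-labelling of the internal nodes matters, and it is the only genuinely game-dependent step. Combined with Lemma~\ref{10-nov-a}, which guarantees that the truth set of any $\C$-free formula stays inside $\mathcal{F}$, this lemma immediately yields Theorem~\ref{th:undefinability C via S}: since $\llbracket\C_a p\rrbracket$ escapes $\mathcal{F}$, no $\C$-free formula can be semantically equivalent to $\C_a p$.
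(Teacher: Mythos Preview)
Your proposal is correct and follows essentially the same approach as the paper: compute $\llbracket\C_a p\rrbracket$ directly from the game and observe that the resulting set is not one of the four sets in $\mathcal{F}$. The paper is slightly more concrete, pinning down $\llbracket\C_a p\rrbracket=\{w_1\}$ by noting that the $a$-labelled node on the path to $w_1$ lies in $win_a(\llbracket\neg p\rrbracket)$ (agent $a$ can ``go right'' to reach $w_2$), whereas no node on the path to $w_3$ does; your argument leaves the singleton unspecified but correctly isolates the asymmetry as the decisive point.
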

\begin{proof}
    The diagram for the truth set $\[\C_a p\]$ is depicted in the bottom of Figure~\ref{C via S game figure}. Indeed, $\[p\]=\{w_1,w_3\}$. On the path to outcome $w_1$, the node labelled with agent $a$ belongs to the set $win_a(\[p\])$ by Definition~\ref{df:winning set}. This means agent $a$ has a strategy (``go right'') to prevent $p$. However, agent $a$ has no such a strategy on the path to outcome $w_3$. Therefore, $\[\C_a p\]=\{w_1\}$ by item~4 of Definition~\ref{sat}. 
\end{proof} 

\begin{figure*}[ht]
\begin{center}
\vspace{0mm}
\scalebox{0.47}{\includegraphics{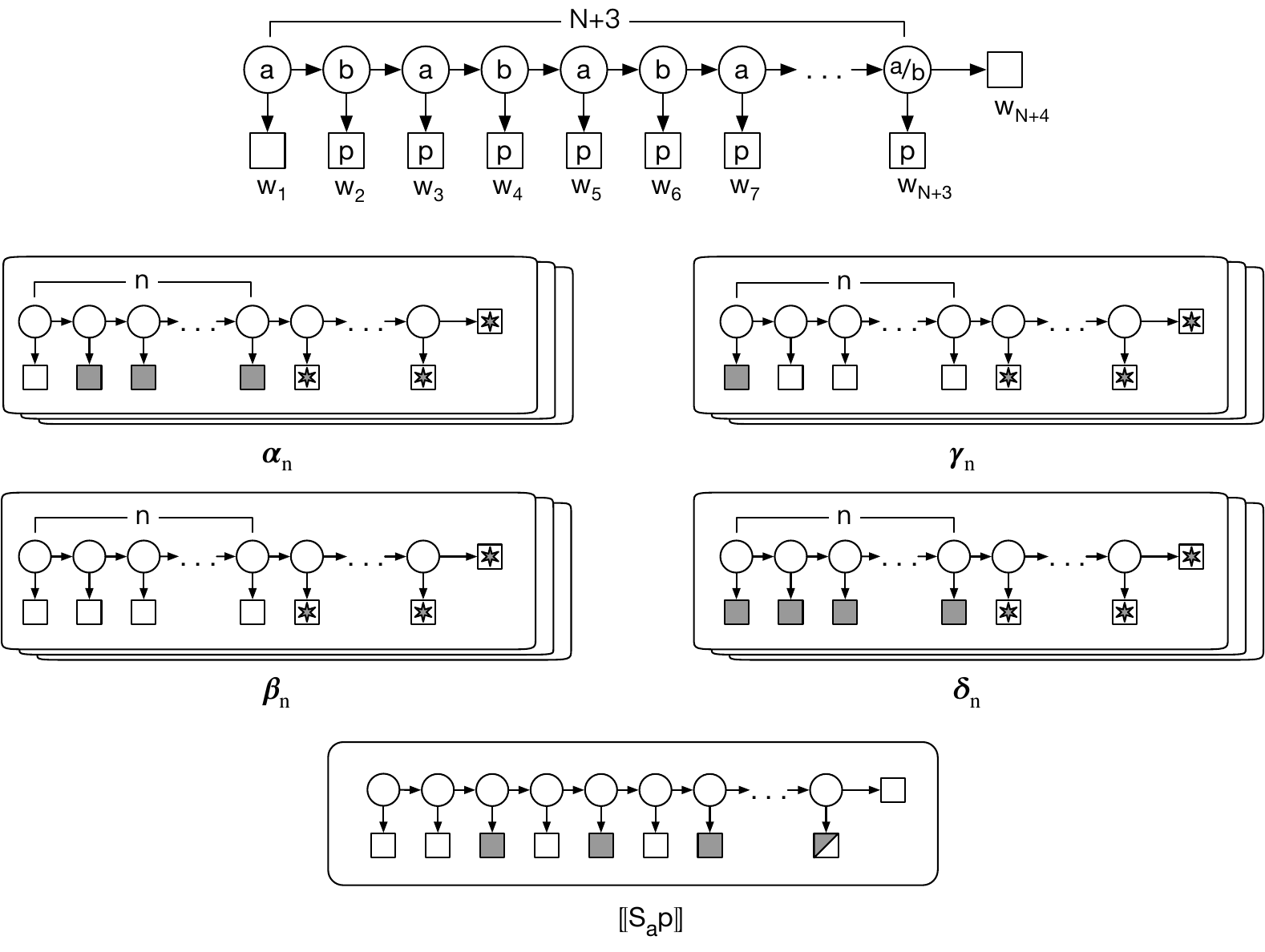}}
\caption{Towards the proof of undefinability of $\S$ via $\C$}\label{S via C game figure}
\end{center}
\end{figure*}

\vspace{1mm}
Theorem~\ref{th:undefinability C via S} follows from Definition~\ref{semantically equivalent} and the two previous lemmas.

\noindent\textbf{Theorem~\ref{th:undefinability C via S} (undefinability of $\C$ via $\S$)}
\textit{The formula $\C_a p$ is not semantically equivalent to any formula in language $\Phi$ that does not contain modality $\C$.}

\subsection{Undefinability of $\S$ via $\C$}\label{app_sec:undefinability S via C}

To show that a formula $\phi$ is not semantically equivalent to any formula in a language $\Psi$, by Definition~\ref{semantically equivalent}, it suffices for each formula $\psi\in\Psi$ to construct a game (model) $G_\psi$ such that $\[\phi\]\neq \[\psi\]$ in game $G_\psi$. In Section~\ref{app_sec:undefinability C via S}, I have been able to construct a {\em uniform} game that does not depend on formula $\psi$. This made the whole proof relatively simple.  I do not know how to construct such a uniform game for my second undefinability result presented in this subsection. The proof here constructs a different game $G_\psi$ for each formula $\psi\in\Psi$. More precisely, I construct a game $G_N$ where $N$ is the number of occurrences of modality $\C$ in formula $\psi$. I formally describe the construction below.

Without loss of generality, in this subsection, I assume that the language has a single propositional variable $p$ and two agents, $a$ and $b$. I consider a game $G_N$ with a parameter $N$, where $N$ is an arbitrary positive integer. Game $G_N$ is depicted at the top of Figure~\ref{S via C game figure}. It has $N+3$ non-leaf nodes and $N+4$ outcomes (leaf nodes): $w_1$, \dots, $w_{N+4}$. The non-leaf nodes are labelled with agents $a$ and $b$, who take turns to decide whether to terminate the game (by going down) or to continue (by going to the right). The game terminates after at most $N+3$ turns. Which agent makes the last move depends on the parity of $N$. Outcomes $w_1$ and $w_{N+4}$ are labelled with the empty set while  outcomes $w_2$, \dots, $w_{N+3}$ are labelled with the set $\{p\}$.

I consider four {\em families} of truth sets: $\alpha_n$, $\beta_n$, $\gamma_n$, and $\delta_n$ for each integer $n$ such that $3\le n\le N+3$.
The family $\alpha_n$ of truth sets consists of all subsets of the set $\{w_1,\dots,w_{N+4}\}$ that exclude outcome $w_1$ and include outcomes $w_2,\dots,w_n$:
\begin{equation*}
\alpha_n:=\big\{\{w_2,\dots,w_n\}\cup X\,|\, X\subseteq \{w_{n+1},\dots,w_{N+4}\}\big\}.
\end{equation*}
I similarly define other families of truth sets: 
\begin{align*}
&\beta_n:=\big\{X\,|\, X\subseteq \{w_{n+1},\dots,w_{N+4}\}\big\};\\
&\gamma_n:=\big\{\{w_1\}\cup X\,|\, X\subseteq \{w_{n+1},\dots,w_{N+4}\}\big\};\\
&\delta_n:=\big\{\{w_1, w_2,\dots,w_n\}\cup X\,|\, X\subseteq \{w_{n+1},\dots,w_{N+4}\}\big\}.
\end{align*}
I visualise families $\alpha_n$, $\beta_n$, $\gamma_n$, and $\delta_n$ in the middle two rows of Figure~\ref{S via C game figure}. In this figure, the asterisk $*$ is used as the {\em wildcard} to mark the outcomes that {\em might but do not have to} belong to a set in the corresponding family. 

\begin{lemma}\label{15-oct-a}
For any formulae $\phi,\psi\in \Phi$ and any $n\ge 0$, if $\[\phi\],\[\psi\]\in\alpha_n\cup\beta_n\cup\gamma_n\cup\delta_n$, then $$\[\neg\phi\],\[\phi\wedge\psi\]\in\alpha_n\cup\beta_n\cup\gamma_n\cup\delta_n.$$
\end{lemma}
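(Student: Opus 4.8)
The plan is to exploit the fact that $\[\neg\phi\]=\Omega(G)\setminus\[\phi\]$ and $\[\phi\wedge\psi\]=\[\phi\]\cap\[\psi\]$ act on each outcome independently, together with the observation that the four families differ only in the membership \emph{pattern} they impose on the ``determined'' block $\{w_1,\dots,w_n\}$, while imposing no restriction at all on the ``free'' block $\{w_{n+1},\dots,w_{N+4}\}$. Concretely, I would first record how each family constrains the determined outcomes: a set in $\alpha_n$ excludes $w_1$ and contains $w_2,\dots,w_n$; a set in $\beta_n$ excludes all of $w_1,\dots,w_n$; a set in $\gamma_n$ contains $w_1$ and excludes $w_2,\dots,w_n$; and a set in $\delta_n$ contains all of $w_1,\dots,w_n$. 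On the free block every family permits an arbitrary subset.

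Since complement flips membership in every outcome and intersection takes the outcome-wise ``and'', and since the complement (respectively intersection) of arbitrary subsets of the free block is again an arbitrary subset of the free block, the free block never threatens membership in any family. Hence it suffices to track the effect of the two operations on the determined block alone. For the complement I would check directly that flipping membership on $\{w_1,\dots,w_n\}$ exchanges the $\alpha_n$-pattern with the $\gamma_n$-pattern and the $\beta_n$-pattern with the $\delta_n$-pattern; thus $\[\neg\phi\]\in\alpha_n\cup\beta_n\cup\gamma_n\cup\delta_n$ whenever $\[\phi\]$ is.

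For the intersection I would run through the finitely many cases, grouped by symmetry. Intersecting with a $\beta_n$-set always yields a $\beta_n$-set, because $\beta_n$ excludes the entire determined block; intersecting with a $\delta_n$-set leaves the other pattern unchanged, because $\delta_n$ contains the entire determined block. Of the cases not covered by these two observations, $\alpha_n\cap\alpha_n$ stays in $\alpha_n$ and $\gamma_n\cap\gamma_n$ stays in $\gamma_n$, while $\alpha_n\cap\gamma_n$ lands in $\beta_n$ since the $\alpha_n$- and $\gamma_n$-patterns disagree on every determined outcome. In each case the result lies in $\alpha_n\cup\beta_n\cup\gamma_n\cup\delta_n$, completing the argument.

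There is no serious obstacle here: the content is a mechanical verification that the four determined-block patterns form a collection closed under outcome-wise complement and intersection. The only point requiring mild care is to confirm that the free block genuinely carries no constraint in any of the four families, so that its behaviour under the Boolean operations is irrelevant to membership; this is precisely what the wildcard entries $*$ in the diagrams of Figure~\ref{S via C game figure} encode, and I would appeal to that figure to keep the case analysis short.
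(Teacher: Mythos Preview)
Your proposal is correct and follows essentially the same approach as the paper: both arguments reduce the lemma, via items~2 and~3 of Definition~\ref{sat}, to the observation that the family $\alpha_n\cup\beta_n\cup\gamma_n\cup\delta_n$ is closed under complement and intersection. The paper simply asserts this closure in one sentence, whereas you spell out the determined-block/free-block decomposition and the resulting case analysis; your level of detail is entirely appropriate and makes explicit exactly what the paper leaves to the reader.
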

\begin{proof}
Observe that, by the definition of families $\alpha_n$, $\beta_n$, $\gamma_n$, and $\delta_n$, the family of sets $\alpha_n\cup\beta_n\cup\gamma_n\cup\delta_n$ is close respect to negation and conjunction. Then, the statement of this lemma follows from items~2 and 3 of Definition~\ref{sat}.
\end{proof}

\begin{lemma}\label{16-oct-a}
For any integer $n\ge 3$ and any formula $\phi\in\Phi$,
\begin{enumerate}
    \item if $\[\phi\]\in \alpha_n$, then $\[\C_a\phi\]\in \alpha_n$ and $\[\C_b\phi\]\in \beta_{n-1}$;
   
    \item if $\[\phi\]\in \beta_n$, then $\[\C_a\phi\]\in \beta_n$ and $\[\C_b\phi\]\in \beta_n$;
    
    \item if $\[\phi\]\in \gamma_n$, then $\[\C_a\phi\]\in \gamma_n$ and $\[\C_b\phi\]\in \beta_n$;
    
    \item if $\[\phi\]\in \delta_n$, then $\[\C_a\phi\]\in \beta_{n-1}$ and $\[\C_b\phi\]\in \beta_{n-1}$.
\end{enumerate}
\end{lemma}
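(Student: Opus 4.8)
The plan is to exploit the backward-induction characterisation of the winning sets from Definition~\ref{df:winning set} together with the linear (``caterpillar'') shape of $G_N$. First I would fix notation for the spine: write $v_1,\dots,v_{N+3}$ for the non-leaf nodes listed from the root, so that $v_1$ is the root, $v_{i+1}$ is the right child of $v_i$, the down child of $v_i$ is the outcome $w_i$ for $i\le N+3$, and $w_{N+4}$ is the right child of $v_{N+3}$. The nodes alternate between the two agents with $v_1$ labelled $a$, so $v_i$ is labelled $a$ exactly when $i$ is odd. The only ancestors of an outcome $w_j$ are the nodes $v_1,\dots,v_j$, hence $w_j\preceq v_i$ iff $i\le j$; consequently, by item~4 of Definition~\ref{sat}, $w_j\in\[\C_g\phi\]$ iff $w_j\in\[\phi\]$ and $v_i\in win_g(\[\neg\phi\])$ for some $i\le j$.

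Writing $Y=\[\neg\phi\]$, the second step is to read off a ``climbing'' recursion along the spine directly from Definition~\ref{df:winning set}: at a node $v_i$ controlled by $g$ one has $v_i\in win_g(Y)$ iff $w_i\in Y$ or $v_{i+1}\in win_g(Y)$, whereas at a node $v_i$ controlled by the opponent of $g$ one has $v_i\in win_g(Y)$ iff $w_i\in Y$ and $v_{i+1}\in win_g(Y)$. The decisive consequence I would extract is a \emph{blocking principle}: if $v_i$ is an opponent node and $w_i\notin Y$, then $v_i\notin win_g(Y)$ \emph{regardless of the right subtree}; and at a $g$-node with $w_i\notin Y$ membership simply passes up from $v_{i+1}$. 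Because the labels alternate, whenever $w_i\notin Y$ for a whole block of consecutive prefix indices, non-membership propagates all the way down to $v_1$, the chain always terminating at an opponent node without ever referring to the tail $w_{n+1},\dots,w_{N+4}$.

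The third step is the four-way case analysis, where for each family I only need the values of $Y$ on the prefix $w_1,\dots,w_n$. For $\alpha_n$ we have $w_1\in Y$ and $w_2,\dots,w_n\notin Y$; since $v_1$ is an $a$-node with $w_1\in Y$, we get $v_1\in win_a(Y)$ directly, so $a$ is responsible at \emph{every} outcome satisfying $\phi$ and $\[\C_a\phi\]=\[\phi\]\in\alpha_n$; for $b$, the blocking principle at the opponent ($a$-)nodes of the prefix (using $n\ge 3$) forces $v_1,\dots,v_{n-1}\notin win_b(Y)$, so $w_2,\dots,w_{n-1}$ are dropped and $\[\C_b\phi\]\in\beta_{n-1}$. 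For $\beta_n$ all of $w_1,\dots,w_n$ already fail $\phi$, so neither agent can be responsible at them and both $\C$-sets stay in $\beta_n$. For $\gamma_n$ we have $w_1\notin Y$ and $w_2,\dots,w_n\in Y$: climbing from $v_1$ reaches the $a$-node $v_3$ where $w_3\in Y$, so $v_1\in win_a(Y)$ and $w_1$ is retained, giving $\[\C_a\phi\]\in\gamma_n$, while $v_1$ is an opponent node of $b$ with $w_1\notin Y$, so the blocking principle drops $w_1$ and $\[\C_b\phi\]\in\beta_n$. For $\delta_n$ we have $w_1,\dots,w_n\notin Y$, and the blocking principle now fires for \emph{both} agents at the opponent nodes of the prefix, so $v_1,\dots,v_{n-1}$ lie in neither $win_a(Y)$ nor $win_b(Y)$, whence $\[\C_a\phi\],\[\C_b\phi\]\in\beta_{n-1}$. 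In every case the tail outcomes are unconstrained in the target family, so no information about $win_g(Y)$ beyond index $n$ is required.

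I expect the main obstacle to be the alternation bookkeeping in the third step: one must check that the blocking non-membership really propagates down to $v_1$ and is not ``revived'' at some $g$-node whose down child lies in $Y$, and one must locate exactly where the index drops from $n$ to $n-1$. The boundary outcome $w_n$ may re-enter $win_g(Y)$ through the tail, which is precisely why $w_n$ becomes a wildcard and the result lands in $\beta_{n-1}$ rather than $\beta_n$. The hypothesis $n\ge 3$ is what guarantees that each prefix contains an opponent node at which the chain can bottom out, so I would verify that every climbing argument terminates strictly before index $n+1$.
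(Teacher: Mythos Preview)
Your proposal is correct and follows essentially the same approach as the paper: a direct case analysis using the backward-induction structure of $win_g(\[\neg\phi\])$ along the spine of $G_N$. The paper only spells out the $\alpha_n$ case in detail and declares the other three ``similar'', whereas you make the alternation bookkeeping explicit via the spine notation $v_1,\dots,v_{N+3}$ and the named ``blocking principle'', and you sketch all four cases; the underlying argument is the same.
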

\begin{proof}
Suppose that $\[\phi\]\in \alpha_n$ for some integer $n\ge 3$. Then, 
\begin{equation}\label{eq:26-July-1}
w_1\notin \[\phi\],
\end{equation}
\begin{equation}\label{eq:26-July-2}
w_2,\dots,w_n\in \[\phi\].
\end{equation}
Statement~\eqref{eq:26-July-1} implies
\begin{equation}\label{eq:26-July-3}
w_1\notin\[\C_a\phi\]
\end{equation}
by item~4 of Definition~\ref{sat} and
$
w_1\in\[\neg\phi\]
$
by item~2 of Definition~\ref{sat}. Hence, the root node, which is labelled by agent $a$, belongs to the set $win_a(\[\neg\phi\])$ by Definition~\ref{df:winning set}. Then,
\begin{equation}\label{eq:26-July-4}
w_2,\dots,w_n\in \[\C_a\phi\]
\end{equation}
by statement~\eqref{eq:26-July-2} and item~4 of Definition~\ref{sat}. Therefore, $\[\C_a\phi\]\in \alpha_n$ by statements~\eqref{eq:26-July-3} and \eqref{eq:26-July-4} and the definition of family $\alpha_n$. 

\noindent Note that, by statement~\eqref{eq:26-July-1} and item~4 of Definition~\ref{sat},
\begin{equation}\label{eq:26-July-5}
w_1\notin \[\C_b\phi\].
\end{equation}
Also, observe that 
none of the non-leaf nodes above outcomes $w_1,\dots,w_{n-1}$ belongs to the set $win_b(\[\neg\phi\])$ by Definition~\ref{df:winning set} because of statements~\eqref{eq:26-July-1} and \eqref{eq:26-July-2} and that the root node above $w_1$ is labelled by agent $a$. Hence,
\begin{equation}\label{eq:26-July-6}
w_2,\dots,w_{n-1}\notin\[\C_b\phi\]
\end{equation}
by item~4 of Definition~\ref{sat}. Therefore,
$\[\C_b\phi\]\in\beta_{n-1}$
by statements~\eqref{eq:26-July-5} and \eqref{eq:26-July-6} and the definition of family $\beta_n$.


The proofs of the other three parts of the lemma are similar.
\end{proof}

\begin{lemma}\label{16-oct-b}
For any formula $\phi\in\Phi$, any agent $g\in\{a,b\}$, and any integer $n\ge 3$, if $\[\phi\]\in \alpha_n\cup \beta_n\cup \gamma_n \cup \delta_n$, then  $\[\C_g\phi\]\in \alpha_{n-1}\cup \beta_{n-1}\cup \gamma_{n-1} \cup \delta_{n-1}$.  
\end{lemma}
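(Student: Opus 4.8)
The plan is to derive this unified statement directly from the four cases of Lemma~\ref{16-oct-a} by coarsening their conclusions. The crucial observation I would establish first is a \emph{monotonicity} (nesting) property of the four families under decrementing the index, namely that $\alpha_n\subseteq\alpha_{n-1}$, $\beta_n\subseteq\beta_{n-1}$, $\gamma_n\subseteq\gamma_{n-1}$, and $\delta_n\subseteq\delta_{n-1}$ for every $n\ge 3$. Each inclusion is a one-line check from the definitions. For instance, given a set $S=\{w_2,\dots,w_n\}\cup X\in\alpha_n$ with $X\subseteq\{w_{n+1},\dots,w_{N+4}\}$, I rewrite $S=\{w_2,\dots,w_{n-1}\}\cup(\{w_n\}\cup X)$ and note that $\{w_n\}\cup X\subseteq\{w_n,\dots,w_{N+4}\}$, so $S\in\alpha_{n-1}$. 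The inclusions for $\beta$ and $\gamma$ are even simpler, since shifting the ``free'' tail index set from $\{w_{n+1},\dots,w_{N+4}\}$ to $\{w_n,\dots,w_{N+4}\}$ only enlarges the admissible tails, and the $\delta$ case absorbs $w_n$ into the tail exactly as in the $\alpha$ case.

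With the nesting in hand, I would finish by a four-way case analysis on which family contains $\[\phi\]$. Suppose $\[\phi\]\in\alpha_n$; then part~1 of Lemma~\ref{16-oct-a} gives $\[\C_a\phi\]\in\alpha_n\subseteq\alpha_{n-1}$ and $\[\C_b\phi\]\in\beta_{n-1}$, so for either agent $\[\C_g\phi\]\in\alpha_{n-1}\cup\beta_{n-1}\cup\gamma_{n-1}\cup\delta_{n-1}$. The cases $\[\phi\]\in\beta_n$, $\[\phi\]\in\gamma_n$, and $\[\phi\]\in\delta_n$ are dispatched identically using parts~2, 3, and 4 of Lemma~\ref{16-oct-a}: every conclusion there already lands in some family at level $n$ or $n-1$, and the monotonicity property pushes the level-$n$ memberships down to level $n-1$. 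Collecting the four cases yields the claim for every $g\in\{a,b\}$.

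There is essentially no hard step here; the content of the lemma is pure bookkeeping, and the only point requiring care is the \emph{direction} of the nesting inclusions---decrementing the index \emph{enlarges} each family rather than shrinking it. I would therefore present the monotonicity observation explicitly, perhaps as a short displayed chain of inclusions, and then handle the four cases in a compact enumerated list that mirrors the structure of Lemma~\ref{16-oct-a}.
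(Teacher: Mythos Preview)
Your proposal is correct and matches the paper's own proof essentially verbatim: the paper also derives the lemma from Lemma~\ref{16-oct-a} together with the nesting inclusions $\alpha_n\subseteq\alpha_{n-1}$, $\beta_n\subseteq\beta_{n-1}$, $\gamma_n\subseteq\gamma_{n-1}$, $\delta_n\subseteq\delta_{n-1}$. The only difference is that the paper states the monotonicity as a fact without spelling out the one-line verifications you give.
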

\begin{proof}
The statement of the lemma follows from Lemma~\ref{16-oct-a} and the facts that 
$\alpha_n\subseteq\alpha_{n-1}$, 
$\beta_n\subseteq\beta_{n-1}$, 
$\gamma_n\subseteq\gamma_{n-1}$, and
$\delta_n\subseteq\delta_{n-1}$ for each $n\ge 1$.
\end{proof}

\begin{lemma}\label{16-oct-c}
For any formula $\phi$ that does not contain modality $\S$ and any integer $k\le N$, if formula $\phi$ contains at most $k$ occurrences of modality $\C$, then, 
\begin{equation}
    \[\phi\]\in \alpha_{N+3-k}\cup \beta_{N+3-k}\cup \gamma_{N+3-k} \cup \delta_{N+3-k}.\notag
\end{equation}
\end{lemma}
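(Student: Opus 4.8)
The plan is to argue by induction on the structure of $\phi$, exploiting the fact that each Boolean connective preserves the index $n$ while each occurrence of $\C$ lowers it by exactly one. To streamline the bookkeeping I would write $\mathcal{U}_n:=\alpha_n\cup\beta_n\cup\gamma_n\cup\delta_n$ and recall from the proof of Lemma~\ref{16-oct-b} that these unions are nested, $\mathcal{U}_n\subseteq\mathcal{U}_{n-1}$. I would also note at the outset that the hypothesis $k\le N$ is exactly what keeps the index $N+3-k$ inside the admissible range $3\le n\le N+3$ on which the four families are defined.

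First I would settle the base case $\phi=p$. Since the outcomes labelled with $p$ are precisely $w_2,\dots,w_{N+3}$, one has $\[p\]=\{w_2,\dots,w_{N+3}\}\in\alpha_{N+3}\subseteq\mathcal{U}_{N+3}$. As $p$ has no occurrence of $\C$, it suffices to observe that $\mathcal{U}_{N+3}\subseteq\mathcal{U}_{N+3-k}$ for every $k$ with $0\le k\le N$, which is immediate from the nesting.

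For the inductive step I would treat the three syntactic forms. If $\phi=\neg\psi$ or $\phi=\psi_1\wedge\psi_2$, then each immediate subformula contains no $\S$ and, since the $\C$-count is monotone under passing to subformulas, has at most $k$ occurrences of $\C$ as well; the induction hypothesis therefore places every such truth set in the \emph{same} family $\mathcal{U}_{N+3-k}$, and Lemma~\ref{15-oct-a}, applied at the single index $n=N+3-k$, yields $\[\phi\]\in\mathcal{U}_{N+3-k}$. The only moving case is $\phi=\C_g\psi$ with $g\in\{a,b\}$: here $\psi$ contains at most $k-1$ occurrences of $\C$ (so $k\ge 1$), and the induction hypothesis gives $\[\psi\]\in\mathcal{U}_{N+3-(k-1)}=\mathcal{U}_{N+4-k}$. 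Because $k\le N$ forces $N+4-k\ge 4\ge 3$, Lemma~\ref{16-oct-b} applies and drops the index by one, delivering $\[\C_g\psi\]\in\mathcal{U}_{N+3-k}$, as required.

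I do not anticipate a real obstacle, since Lemmas~\ref{15-oct-a} and~\ref{16-oct-b} already encapsulate the two substantive facts --- closure of each $\mathcal{U}_n$ under the Booleans, and the decrease of the index by exactly one under a single $\C$. What remains is bookkeeping, and the two points deserving attention are: (i) in the conjunction case I must invoke the monotonicity of the $\C$-count under subformulas so that both conjuncts land at the \emph{same} index $N+3-k$ before applying Lemma~\ref{15-oct-a}; and (ii) in the modality case I must check that the index $N+4-k$ at which Lemma~\ref{16-oct-b} is used remains at least $3$, which is precisely where the standing hypothesis $k\le N$ is consumed.
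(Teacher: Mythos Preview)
Your proposal is correct and follows essentially the same structural induction as the paper, invoking Lemma~\ref{15-oct-a} for the Boolean cases and Lemma~\ref{16-oct-b} for the $\C_g$ case. You are slightly more explicit than the paper about the nesting $\mathcal{U}_n\subseteq\mathcal{U}_{n-1}$ and about verifying that the index stays in the admissible range when applying Lemma~\ref{16-oct-b}, but the argument is the same.
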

\begin{proof}
I prove the statement of the lemma by structural induction on formula $\phi$. If $\phi$ is propositional variable $p$, then $\[\phi\]=\[p\]=\{w_2,\dots,w_{N+3}\}\in\alpha_{N+3}\subseteq\alpha_{N+3-k}$. 

If formula $\phi$ is a disjunction or a negation, then the statement of the lemma follows from the induction hypothesis by Lemma~\ref{15-oct-a}.  

If formula $\phi$ has the form $\C_g\psi$, where $g\in\{a,b\}$, then formula $\psi$ contains at most $k-1$ occurrences of modality $\C$. Then, the statement of the lemma follows from the induction hypothesis by Lemma~\ref{16-oct-b}.
\end{proof}

\vspace{1mm}
The truth set $\[\S_a p\]$ is shown at the bottom of Figure~\ref{S via C game figure}. However, to prove the undefinability result, I only need to observe the following lemma:
\begin{lemma}\label{16-oct-d}
$w_2\notin \[\S_a p\]$ and $w_3\in \[\S_a p\]$.
\end{lemma}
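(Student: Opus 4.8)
The plan is to unfold the two conditions of item~5 of Definition~\ref{sat} on the single game $G_N$ and to observe that the whole lemma hinges on the achievement-point condition~5(b). Recall the shape of $G_N$: its non-leaf nodes form a path $v_1,\dots,v_{N+3}$ with $v_1$ the root labelled by agent $a$ and the labels alternating, so $v_i$ is labelled $a$ for odd $i$ and $b$ for even $i$; at each $v_i$ the ``down'' edge leads to the leaf $w_i$ and the ``right'' edge continues to $v_{i+1}$ (the last node $v_{N+3}$ has $w_{N+3}$ and $w_{N+4}$ as children). Since $\[p\]=\{w_2,\dots,w_{N+3}\}$, both $w_2$ and $w_3$ lie in $\[p\]$, and the short $win_a(\[p\])$ computation below shows that both also satisfy the backwards-strategic clause~5(a); hence the difference between the two outcomes must come purely from clause~5(b).

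For $w_2\notin\[\S_a p\]$ I would show that clause~5(b) fails, i.e.\ that no $\[p\]$-achievement point by agent $a$ lies on the path from the root to $w_2$. The only nodes $n$ with $w_2\preceq n$ are $w_2$, its parent $v_2$, and the root $v_1$. The root $v_1$ is excluded because an achievement point must be a non-root node (it needs a parent). For $n=w_2$, its parent is $v_2$, which is labelled $b$, violating clause~1 of Definition~\ref{df:achievement point}. For $n=v_2$, although $parent(v_2)=v_1$ is labelled $a$, the subtree below $v_2$ still contains the outcome $w_{N+4}\notin\[p\]$, so clause~3 of Definition~\ref{df:achievement point} fails. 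Thus there is no $\[p\]$-achievement point by $a$ above $w_2$, and so $w_2\notin\[\S_a p\]$ by item~5(b) of Definition~\ref{sat}.

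For $w_3\in\[\S_a p\]$ I would verify both clauses. For clause~5(a), a short backward-induction computation of $win_a(\[p\])$ via Definition~\ref{df:winning set} gives $w_3\in\[p\]\subseteq win_a(\[p\])$; then $v_3$ (an $a$-node with child $w_3\in win_a(\[p\])$) is in $win_a(\[p\])$; then $v_2$ (a $b$-node whose two children $w_2$ and $v_3$ both lie in $win_a(\[p\])$) is in $win_a(\[p\])$; and finally $v_1$ (an $a$-node with child $v_2\in win_a(\[p\])$) is in $win_a(\[p\])$. Since $v_1,v_2,v_3,w_3$ are exactly the nodes $n$ with $w_3\preceq n$, clause~5(a) holds. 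For clause~5(b), the leaf $w_3$ itself is a $\[p\]$-achievement point by $a$: its parent $v_3$ is labelled $a$ (clause~1), the outcome $w_{N+4}\notin\[p\]$ lies below $v_3$ (clause~2), and $w_3\in\[p\]$ is the only outcome below the leaf $w_3$ (clause~3). Hence $w_3\in\[\S_a p\]$.

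The main obstacle is the bookkeeping for $w_2$: I must rule out \emph{every} candidate achievement point above it, and the only delicate case is $n=v_2$, which satisfies clauses~1 and~2 yet fails clause~3 because the far-away outcome $w_{N+4}\notin\[p\]$ still sits in its subtree. This is exactly where the alternation of labels with root $=a$ becomes decisive: at $w_3$ the parent $v_3$ is an $a$-node, so $a$'s ``down'' move lands directly on the leaf $w_3\in\[p\]$ and seals $\[p\]$, whereas at $w_2$ the parent $v_2$ is a $b$-node and the only $a$-owned move above $w_2$ (the root's ``right'' move into $v_2$) does not seal $\[p\]$. I expect the rest of the argument to be routine verification against the three clauses of Definition~\ref{df:achievement point} and the recursion of Definition~\ref{df:winning set}.
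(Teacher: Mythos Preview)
Your proposal is correct and follows essentially the same approach as the paper: both verify item~5(b) fails at $w_2$ and both items~5(a) and~5(b) hold at $w_3$. The only minor difference is that for $w_2$ the paper first observes that $w_2$ itself is a $\[p\]$-achievement point by agent~$b$ and then invokes the uniqueness part of Lemma~\ref{lm:achievement point existence} to rule out any $\[p\]$-achievement point by~$a$ above $w_2$, whereas you do the equivalent direct case analysis on $w_2$, $v_2$, and $v_1$; your explicit verification of clause~5(a) for $w_3$ is in fact slightly more careful than the paper's one-line remark.
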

\begin{proof}
Observe that, in game $G$ depicted in the top of Figure~\ref{S via C game figure}, node $w_2$ is the $\[p\]$-achievement point by agent $b$ on the path of play toward outcome $w_2$. Then, by Lemma~\ref{lm:achievement point existence}, there is no $\[p\]$-achievement point by agent $a$ on the path of play toward outcome $w_2$. Hence, $w_2\notin \[\S_a p\]$ by item~5(b) of Definition~\ref{sat}.

The non-leaf nodes above outcomes $w_1,w_2,w_3$ belong to the set $win_a(\[p\])$ by Definition~\ref{df:winning set}. At the same time, node $w_3$ is the $\[p\]$-achievement point by agent $a$ on the path of play toward outcome $w_3$. Hence, $w_3\in\[\S_a p\]$ by item~5 of Definition~\ref{sat}.
\end{proof}

\vspace{1.5mm}
\noindent\textbf{Theorem~\ref{th:undefinability S via C} (undefinability of $\S$ via $\C$)}
\textit{The formula $\S_a p$ is not semantically equivalent to any formula in language $\Phi$ that does not contain modality $\S$.}

\begin{proof} Assume the opposite. Then, by Definition~\ref{semantically equivalent}, there is a formula $\phi\in\Phi$ not containing modality $\S$ such that $\[\S_a p\]=\[\phi\]$ in any extensive form game. Let $N$ be the number of occurrences of modality $\C$ in formula $\phi$. Then, by Lemma~\ref{16-oct-c}, in game $G_N$,
\begin{equation}
\begin{aligned}
    \[\S_a p\]\!=\!\[\phi\]&\in \alpha_{N+3-N}\cup \beta_{N+3-N}\cup \gamma_{N+3-N} \cup \delta_{N+3-N}\\
    &=\alpha_{3}\cup \beta_{3}\cup \gamma_{3} \cup \delta_{3}. \notag
\end{aligned}
\end{equation}
However, $\[\S_a p\]\notin \alpha_{3}\cup \beta_{3}\cup \gamma_{3} \cup \delta_{3}$ by Lemma~\ref{16-oct-d} and the definition of families $\alpha_3$, $\beta_3$, $\gamma_3$, and $\delta_3$.
\end{proof}

\section{Properties of higher-order responsibility}\label{app_sec:properties of higher order responsibility}

\subsection{Semantic equivalence of $\C_a\C_a\phi$ and $\C_a\phi$}\label{app_sec:CC=C}

The main result of this subsection is in Property~\ref{pp:CC=C}. I first show the next lemma to increase the readability of its proof.

\begin{lemma}\label{lm:26-apr-a}
For any formulae $\phi,\psi\in\Phi$ and each agent $a$, if $\[\phi\]\subseteq\[\psi\]$, then $win_a(\[\phi\])\subseteq win_a(\[\psi\])$.
\end{lemma}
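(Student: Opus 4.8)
The plan is to prove the monotonicity statement $\[\phi\]\subseteq\[\psi\]\implies win_a(\[\phi\])\subseteq win_a(\[\psi\])$ by exploiting the fact that $win_a(\[\psi\])$ is, by Definition~\ref{df:winning set}, the \emph{minimal} set of nodes satisfying the three closure conditions. Since $win_a(\[\phi\])$ is built up from $\[\phi\]$ by the same inductive rules, the natural strategy is to show that $win_a(\[\psi\])$ itself already satisfies all three closure conditions relative to the base set $\[\phi\]$; minimality of $win_a(\[\phi\])$ then forces $win_a(\[\phi\])\subseteq win_a(\[\psi\])$.

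Concretely, I would verify the three items of Definition~\ref{df:winning set} with $X=\[\phi\]$, checking each against the set $win_a(\[\psi\])$ in place of a candidate $win$-set. For item~1, I need $\[\phi\]\subseteq win_a(\[\psi\])$; this follows from the hypothesis $\[\phi\]\subseteq\[\psi\]$ together with item~1 of Definition~\ref{df:winning set} applied to $\psi$, which gives $\[\psi\]\subseteq win_a(\[\psi\])$. Items~2 and 3 are immediate because $win_a(\[\psi\])$ is itself closed under exactly those two rules (it satisfies them as a $win$-set for $\[\psi\]$), and those rules make no reference to the base set $X$ at all---they only concern the labelling of nodes and membership of children. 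Hence $win_a(\[\psi\])$ satisfies all three conditions required of a $win$-set for $\[\phi\]$.

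The key observation making this clean is the asymmetry in the definition: condition~1 is the only one that depends on the base set $X$, while conditions~2 and 3 depend only on the tree structure and the set itself. This means that enlarging the base set can only enlarge (never shrink) the inductively generated set. I would then invoke minimality: since $win_a(\[\phi\])$ is the \emph{smallest} set containing $\[\phi\]$ and closed under conditions~2 and 3, and since $win_a(\[\psi\])$ is one such set, we conclude $win_a(\[\phi\])\subseteq win_a(\[\psi\])$.

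I do not anticipate a serious obstacle here; the main care needed is formal rather than conceptual. The one point to state precisely is that Definition~\ref{df:winning set} genuinely characterises $win_a(X)$ as a least fixed point, so that ``satisfies the closure conditions'' legitimately yields ``contains the minimal such set.'' An alternative, should one prefer an explicit induction, would be to induct on the stage at which a node enters $win_a(\[\phi\])$ (root-distance or the number of backward-induction steps) and show by cases on whether the node is labelled $a$ that it must already lie in $win_a(\[\psi\])$; but the minimality argument is shorter and avoids bookkeeping over stages.
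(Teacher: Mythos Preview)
Your proposal is correct but takes a different route from the paper. The paper proves the inclusion by backward induction on the node $n$ (i.e., induction on the depth of the subtree rooted at $n$): for a leaf it uses minimality of $win_a(\[\phi\])$ to get $n\in\[\phi\]\subseteq\[\psi\]$, and for a non-leaf it splits on whether $n$ is labelled $a$ and pushes the argument to the children via the induction hypothesis. You instead argue globally from the least-fixed-point description in Definition~\ref{df:winning set}: since conditions~2 and~3 are closure conditions that do not mention the base set, $win_a(\[\psi\])$ is automatically one of the sets competing in the definition of $win_a(\[\phi\])$ once you observe $\[\phi\]\subseteq\[\psi\]\subseteq win_a(\[\psi\])$, and minimality gives the inclusion directly. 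Your approach is shorter and cleanly isolates \emph{why} monotonicity holds (only item~1 depends on $X$); the paper's approach is more explicit and avoids appealing to the minimality clause as a black box, which some readers may find more self-contained. You even anticipate the paper's method as your ``alternative'' induction.
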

\begin{proof}
I prove this lemma by backward induction on node $n$. In other words, by induction on the depth of the subtree rooted at node $n$. I show that if $n\in win_a(\[\phi\])$, then $n\in win_a(\[\psi\])$.

If node $n$ is a leaf, then the assumption $n\in win_a(\[\phi\])$ implies $n\in\[\phi\]$ by the requirement of ``minimal set'' in Definition~\ref{df:winning set}. Then, $n\in\[\psi\]$ by the assumption $\[\phi\]\subseteq\[\psi\]$ of this lemma. Therefore, $n\in win_a\[\psi\]$ by item~1 of Definition~\ref{df:winning set}.

If node $n$ is a non-leaf node, then $n$ is labelled by either agent $a$ or some other agent. In the first case, by item~2 of Definition~\ref{df:winning set}, the assumption $n\in win_a(\[\phi\])$ implies the existence of $n$'s child node $m$ such that $m\in win_a(\[\phi\])$. Then, $m\in win_a(\[\psi\])$ by the induction hypothesis. Therefore, $n\in win_a(\[\psi\])$ by item~2 of Definition~\ref{df:winning set}.

In the second case, by item~3 of Definition~\ref{df:winning set}, the assumption $n\in win_a(\[\phi\])$ implies that each child node $m$ of node $n$ satisfies that $m\in win_a(\[\phi\])$. Then, by the induction hypothesis, $m\in win_a(\[\psi\])$. Therefore, $n\in win_a(\[\psi\])$ by item~3 of Definition~\ref{df:winning set}.
\end{proof}

\begin{property}\label{pp:CC=C}
For any formula $\phi\in\Phi$ and any agent $a\in\mathcal{A}$, formulae $\C_a\C_a\phi$ and $\C_a\phi$ are semantically equivalent.
\end{property}

\begin{proof}
By Definition~\ref{semantically equivalent}, it suffices to show that for any outcome $w$ in any extensive form game, $w\in\[\C_a\C_a\phi\]$ if and only if $w\in\[\C_a\phi\]$.

For the ``\textit{if}'' part, consider any outcome $w\in\[\C_a\phi\]$. By item~4 of Definition~\ref{sat}, there is a node $n$ such that $w\preceq n$ and
\begin{equation}\label{eq:26-apr-a}
    n\in win_a(\[\neg\phi\]).
\end{equation}
At the same time, $\[\phi\]\supseteq\[\C_a\phi\]$ by item~4 of Definition~\ref{sat}. Then, $\[\neg\phi\]\subseteq\[\neg\C_a\phi\]$ by item~2 of Definition~\ref{sat}. It further implies that $win_a(\[\neg\phi\])\subseteq win_a(\[\neg\C_a\phi\])$ by Lemma~\ref{lm:26-apr-a}.
Hence, $n\in win_a(\[\neg\C_a\phi\])$ by statement~\eqref{eq:26-apr-a}. Together with the statement that $w\preceq n$ and the assumption $w\in\[\C_a\phi\]$, it can be concluded that $w\in\[\C_a\C_a\phi\]$ by item~4 of Definition~\ref{sat}.

The ``\textit{only if}'' part follows directly from item~4 of Definition~\ref{sat}.
\end{proof}

Note that, {\em formulae $\S_a\S_a\phi$ and $\S_a\phi$ are not semantically equivalent}. This can be observed in outcome $w_2$ of the game depicted in Figure~\ref{fig:board and governor A}. Let formula $\phi$ represent ``Beach is left in prison''. Then, $w_2\in\[\S_g\phi\]$. However, since $w_1\notin\[\S_g\phi\]$, by item~3 of Definition~\ref{df:winning set}, the root node does not belong to the set $win_g(\[\S_g\phi\])$. Then, $w_2\notin\[\S_g\S_g\phi\]$ by item~5(a) of Definition~\ref{sat}. Hence, $\[\S_g\S_g\phi\]\neq\[\S_g\phi\]$ in the game depicted in Figure~\ref{fig:board and governor A}. Therefore, formulae $\S_a\S_a\phi$ and $\S_a\phi$ are not semantically equivalent by Definition~\ref{semantically equivalent}.

\subsection{Semantic equivalence of $\S_b\S_a\phi$ and $\bot$}\label{app_sec:SS=false}

I state the main result of this subsection as Property~\ref{pp:SS empty}. To increase the readability of its proof, I first show the next two lemmas.

\begin{lemma}\label{lm:Saphi To phi}
$\[\S_a\phi\]\subseteq\[\phi\]$ for any formula $\phi\in\Phi$, any agent $a\in\mathcal{A}$, and any extensive form game $G$.
\end{lemma}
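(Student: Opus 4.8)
The plan is to fix an arbitrary extensive form game $G$, an agent $a\in\mathcal{A}$, a formula $\phi\in\Phi$, and an outcome $w\in\[\S_a\phi\]$, and to derive $w\in\[\phi\]$ directly from the conditions in item~5 of Definition~\ref{sat}. First I would exploit part~5(a), namely $\{n\mid w\preceq n\}\subseteq win_a(\[\phi\])$. Since $w$ lies on the simple path between the root and itself, we have $w\preceq w$, so $w$ belongs to the set $\{n\mid w\preceq n\}$, and hence $w\in win_a(\[\phi\])$.

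The crucial step is then to observe that a leaf node can enter $win_a(\[\phi\])$ only through item~1 of Definition~\ref{df:winning set}. Indeed, items~2 and 3 of that definition only ever place \emph{non-leaf} nodes into the set, while $win_a(\[\phi\])$ is the \emph{minimal} set closed under these three rules; consequently the leaves it contains are exactly those already forced in by item~1, that is, the outcomes in $\[\phi\]$. Since $w$ is an outcome (a leaf) with $w\in win_a(\[\phi\])$, this forces $w\in\[\phi\]$, which is what the lemma asserts.

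The main obstacle I anticipate is making the minimality argument fully rigorous: one must verify that the collection consisting of $\[\phi\]$ together with all non-leaf nodes added by items~2 and 3 is itself closed under all three rules, so that it coincides with the minimal set $win_a(\[\phi\])$ and therefore contains no leaf outside $\[\phi\]$. If I wanted to avoid reasoning about the $win_a$ operator altogether, I could instead invoke part~5(b): it supplies a $\[\phi\]$-achievement point $n$ with $w\preceq n$, and condition~3 of Definition~\ref{df:achievement point} states that every outcome below $n$ lies in $\[\phi\]$; since $w$ is precisely such an outcome, $w\in\[\phi\]$ follows at once. Either route closes the proof, so I would present the part~5(a) argument as the primary one and note the part~5(b) shortcut as an alternative.
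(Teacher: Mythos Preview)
Your proposal is correct and follows essentially the same route as the paper: use item~5(a) of Definition~\ref{sat} to get $w\in win_a(\[\phi\])$ (the paper phrases this as $\[\S_a\phi\]\subseteq \Omega(G)\cap win_a(\[\phi\])$), and then invoke the minimality condition in Definition~\ref{df:winning set} to conclude that the only leaves in $win_a(\[\phi\])$ are those in $\[\phi\]$ (the paper states this as $\Omega(G)\cap win_a(\[\phi\])=\[\phi\]$). Your alternative via part~5(b) is also valid and is a pleasant shortcut not mentioned in the paper.
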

\begin{proof}
By item~5 of Definition~\ref{sat},
$$
\[\S_a\phi\]\subseteq\Omega(G)
$$
and
$$
\[\S_a\phi\]\subseteq win_a(\[\phi\]).
$$
Hence, 
\begin{equation}\label{eq:2-Aug-1}
\[\S_a\phi\]\subseteq\Omega(G)\cap win_a(\[\phi\]).
\end{equation}
By the minimality condition in Definition~\ref{df:winning set},
\begin{equation*}\label{eq:2-Aug-2}
\Omega(G)\cap win_a(\[\phi\])= \[\phi\].
\end{equation*}
Therefore, 
$
\[\S_a\phi\]\subseteq\[\phi\]
$
by statements~\eqref{eq:2-Aug-1}.
\end{proof}


\begin{lemma}\label{lm:26-apr-b}
For any outcomes $w$ and $w'$, any formula $\phi$, and any $\[\phi\]$-achievement point $n$ by some agent, if $w\in\[\S_a\phi\]$,  $w\preceq n$, and  $w'\preceq n$, then $w'\in\[\S_a\phi\]$.
\end{lemma}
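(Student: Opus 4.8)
The plan is to exploit the uniqueness of achievement points (Lemma~\ref{lm:achievement point existence}) to show that the given achievement point $n$ is in fact an achievement point \emph{by agent $a$}, and then to verify the two conditions of item~5 of Definition~\ref{sat} for the outcome $w'$.

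First I would record some preliminary facts. Since $w\in\[\S_a\phi\]$, Lemma~\ref{lm:Saphi To phi} gives $w\in\[\phi\]$, and the existence of the achievement point $n$ forces $\[\phi\]\subsetneq\Omega(G)$ by item~2 of Definition~\ref{df:achievement point}. Next, item~5(b) of Definition~\ref{sat} applied to $w\in\[\S_a\phi\]$ yields a $\[\phi\]$-achievement point $n_0$ by agent $a$ with $w\preceq n_0$. But $n$ is also a $\[\phi\]$-achievement point with $w\preceq n$, so by the uniqueness part of Lemma~\ref{lm:achievement point existence} we get $n=n_0$; in particular, $n$ is an achievement point by agent $a$. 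Combined with the hypothesis $w'\preceq n$, this immediately verifies condition~5(b) of Definition~\ref{sat} for $w'$.

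It remains to verify condition~5(a), i.e. that every node $m$ with $w'\preceq m$ lies in $win_a(\[\phi\])$. I would split the path from the root to $w'$ at the node $n$. For a node $m$ on the upper segment (with $n\preceq m$), transitivity of $\preceq$ together with $w\preceq n$ gives $w\preceq m$, so $m\in win_a(\[\phi\])$ follows from condition~5(a) for $w$. For a node $m$ on the lower segment (with $m\preceq n$), I would argue that the entire subtree rooted at $n$ is contained in $win_a(\[\phi\])$: by item~3 of Definition~\ref{df:achievement point} every outcome $u$ with $u\preceq n$ lies in $\[\phi\]$, and a short backward induction on this subtree (using items~1--3 of Definition~\ref{df:winning set}) shows that every node of it belongs to $win_a(\[\phi\])$, since at each non-leaf node \emph{all} of its children are already known to be winning, which suffices regardless of the agent labelling that node.

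Having verified both 5(a) and 5(b) for $w'$, I conclude $w'\in\[\S_a\phi\]$. The one nonobvious step is the use of uniqueness to identify the agent of $n$ as $a$; without it, condition~5(b) for $w'$ would not be available. The remainder, namely the subtree containment $win_a(\[\phi\])\supseteq\{m\mid m\preceq n\}$, is a routine backward induction of the same flavour as the proof of Lemma~\ref{lm:26-apr-a}.
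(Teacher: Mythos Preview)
Your proposal is correct and follows essentially the same approach as the paper's proof: both use the uniqueness part of Lemma~\ref{lm:achievement point existence} to identify $n$ as a $\[\phi\]$-achievement point by agent $a$, then split the root-to-$w'$ path at $n$, handling the upper segment via condition~5(a) for $w$ and the lower segment via item~3 of Definition~\ref{df:achievement point} together with Definition~\ref{df:winning set}. Your explicit verification of the hypotheses $w\in\[\phi\]$ and $\[\phi\]\subsetneq\Omega(G)$ needed to invoke Lemma~\ref{lm:achievement point existence} is a nice touch that the paper leaves implicit.
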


\begin{proof}
By Lemma~\ref{lm:achievement point existence} and the assumption of the lemma, node $n$ is the unique $\[\phi\]$-achievement point such that $w\preceq n$.
Then, node $n$ is the $\[\phi\]$-achievement point by agent~$a$ by item~5(b) of Definition~\ref{sat} and the assumption $w\in\[\S_a\phi\]$.
By item~3 of Definition~\ref{df:achievement point}, the statement that node $n$ is the $\[\phi\]$-achievement point implies that $w''\in \[\phi\]$ for each outcome $w''$ such that $w''\preceq n$.
Thus, by Definition~\ref{df:winning set},
\begin{equation}\label{eq:26-apr-b}
\{m\;|\; m\preceq n\}\subseteq win_a(\[\phi\]).
\end{equation}
Since $w\preceq n$ by the assumption of the lemma, the set of nodes on the path from the root to node $n$ is a subset of the set of nodes on the path from the root to outcome $w$. That is,
\begin{equation}\label{eq:27-July-1}
\{m'\;|\; n\preceq m'\}\subseteq\{m'\;|\; w\preceq m'\}.
\end{equation}
By item~5(a) of Definition~\ref{sat}, the assumption $w\in\[\S_a\phi\]$ of the lemma implies that
\begin{equation*}\label{eq:27-July-2}
\{m'\;|\; w\preceq m'\}\subseteq win_a(\[\phi\]).
\end{equation*}
Hence, by statements~\eqref{eq:27-July-1},
\begin{equation}\label{eq:26-apr-c}
\{m'\;|\; n\preceq m'\}\subseteq win_a(\[\phi\]).
\end{equation}
Note that, by the assumption $w'\preceq n$ of the lemma, every node on the path from the root to outcome $w'$ is in the set $\{m\;|\; m\preceq n\}$ or in the set $\{m'\;|\; n\preceq m'\}$. Thus, by statements~\eqref{eq:26-apr-b} and \eqref{eq:26-apr-c},
\begin{equation}\label{eq:26-apr-d}
\{m''\;|\; w'\preceq m''\} \subseteq win_a(\[\phi\])
\end{equation}
Observe that, node $n$ is also the $\[\phi\]$-achievement point by agent~$a$ on the path from the root to outcome $w'$. Therefore, $w'\in\[\S_a\phi\]$ by statement~\eqref{eq:26-apr-d} and item~5 of Definition~\ref{sat}.
\end{proof}

\begin{property}\label{pp:SS empty}
For any formula $\phi\in\Phi$ and any distinct agents $a,b\in\mathcal{A}$, formula $\S_b\S_a\phi$ is semantically equivalent to $\bot$.
\end{property}

\begin{proof}
By Definition~\ref{semantically equivalent}, it suffices to prove $\[\S_b\S_a\phi\]=\varnothing$ for each extensive form game. Moreover, $\[\S_b\S_a\phi\]\subseteq\[\S_a\phi\]$ by Lemma~\ref{lm:Saphi To phi}. Thus, it suffices to show that $w\notin\[\S_b\S_a\phi\]$ for each outcome $w\in\[\S_a\phi\]$.

Consider an outcome $w\in\[\S_a\phi\]$. Then, by item~5(b) of Definition~\ref{sat} and Definition~\ref{df:achievement point}, there is a $\[\phi\]$-achievement point $n$ by agent $a$ such that
\begin{enumerate}
    \item $w\preceq n$;
    \item $parent(n)$ is labelled with agent $a$;
    \item there exists an outcome $w'$ such that $w'\preceq parent(n)$ and $w'\notin \[\phi\]$.
\end{enumerate}
By Lemma~\ref{lm:Saphi To phi}, the statement $w'\notin \[\phi\]$ implies that
\begin{equation}\label{eq:27-July-3}
w'\notin \[\S_a\phi\].
\end{equation}
On the other hand, because $n$ is the $\[\phi\]$-achievement point such that $w\preceq n$, by Lemma~\ref{lm:26-apr-b}, the assumption $w\in\[\S_a\phi\]$ implies that
\begin{equation}\label{eq:27-July-4}
w''\in\[\S_a\phi\]
\end{equation}
for each outcome $w''$ such that $w''\preceq n$. Hence, by Definition~\ref{df:achievement point}, statements~\eqref{eq:27-July-3}, \eqref{eq:27-July-4}, and items~2 and 3 above imply that node $n$ is a $\[\S_a\phi\]$-achievement point by agent $a$. Thus, by Lemma~\ref{lm:achievement point existence} and item~1 above, there is no $\[\S_a\phi\]$-achievement point $m$ by agent $b$ such that $w\preceq m$. Therefore, $w\notin\[\S_b\S_a\phi\]$ by item~5(b) of Definition~\ref{sat}.
\end{proof}

\subsection{Semantic equivalence of $\S_b\C_a\phi$ and $\bot$}\label{app_sec:SC=false}

I state the main result of this subsection in Property~\ref{pp:SC empty}. To increase the readability of its proof, I first show the next lemma. It shows, if one agent has a strategy to achieve a statement at some node, then at the same node, another agent cannot achieve the negation of the statement.

\begin{lemma}\label{lm:win in any agents}
For any formula $\phi\in\Phi$, any node $n$ in an extensive form game, and any distinct agents $a,b\in\mathcal{A}$, if $n\in win_a(\[\phi\])$, then $n\notin win_b(\[\neg\phi\])$.
\end{lemma}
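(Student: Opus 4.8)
The plan is to argue by backward induction on node $n$, that is, by induction on the depth of the subtree rooted at $n$, proving directly that $n\in win_a(\[\phi\])$ implies $n\notin win_b(\[\neg\phi\])$. The idea behind the proof is determinacy: a strategy of agent $a$ that forces the play into $\[\phi\]$ and a strategy of agent $b$ that forces it into $\[\neg\phi\]$ cannot coexist, since the truth sets $\[\phi\]$ and $\[\neg\phi\]$ are disjoint by item~2 of Definition~\ref{sat}.

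For the base case, suppose $n$ is a leaf. Because $win_a(X)$ and $win_b(X)$ are the minimal sets closed under the conditions of Definition~\ref{df:winning set}, and items~2 and 3 of that definition add only non-leaf nodes, a leaf belongs to $win_a(X)$ exactly when it lies in $X$. Hence $n\in win_a(\[\phi\])$ forces $n\in\[\phi\]$, so $n\notin\[\neg\phi\]$ and therefore $n\notin win_b(\[\neg\phi\])$.

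For the inductive step, let $n$ be a non-leaf node and assume, towards a contradiction, that both $n\in win_a(\[\phi\])$ and $n\in win_b(\[\neg\phi\])$ hold. I would split into three cases according to the agent labelling $n$. If $n$ is labelled with agent $a$, then item~2 of Definition~\ref{df:winning set} yields a child $m\in win_a(\[\phi\])$; since $n$ is not labelled with agent $b$ (as $a\ne b$), item~3 of the same definition places every child of $n$, in particular $m$, in $win_b(\[\neg\phi\])$, contradicting the induction hypothesis at $m$. The case where $n$ is labelled with agent $b$ is symmetric, with the roles of items~2 and 3 interchanged. If $n$ is labelled with a third agent distinct from both $a$ and $b$, then item~3 applies to both winning sets, so every child of $n$ lies in $win_a(\[\phi\])\cap win_b(\[\neg\phi\])$; picking any one child (a non-leaf node has at least one) again contradicts the induction hypothesis.

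I expect the main obstacle to be the bookkeeping of the case analysis rather than any deep idea: the crux is that, no matter which agent owns node $n$, at least one child must simultaneously witness both winning conditions, and this is precisely where the distinctness of $a$ and $b$ enters. This is also the point at which the argument goes beyond the two-agent Lemma~\ref{lm:win in two agent}: a third case, for a node owned by some other agent, appears here, and the present consistency statement holds for any number of agents.
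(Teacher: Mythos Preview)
Your proposal is correct and follows essentially the same approach as the paper's proof: backward induction on the depth of the subtree rooted at $n$, with the leaf case handled via the minimality of $win_a$ and $win_b$, and the non-leaf case via the appropriate items of Definition~\ref{df:winning set}. The only cosmetic differences are that the paper argues directly rather than by contradiction and collapses your ``labelled with $b$'' and ``labelled with a third agent'' cases into a single ``not labelled with $a$'' case (using item~3 for $win_a$ and then items~2 and~3 together for $win_b$).
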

\begin{proof}
I prove this lemma by backward induction on node $n$. In other words, by induction on the depth of the subtree rooted at node $n$. 

If node $n$ is a leaf, then the assumption $n\in win_a(\[\phi\])$ of the lemma implies that $n\in\[\phi\]$ by item~1 and the minimality condition in Definition~\ref{df:winning set}. Then, $n\notin\[\neg\phi\]$ by item~2 of Definition~\ref{sat}. Therefore, $n\notin win_b(\[\neg\phi\])$ again by the minimality condition in Definition~\ref{df:winning set}.

If node $n$ is a non-leaf node, then $n$ is labelled by either agent $a$ or some other agent. In the first case, by item~2 of Definition~\ref{df:winning set}, the assumption $n\in win_a(\[\phi\])$ of the lemma implies that $m\in win_a(\[\phi\])$ for some child node $m$ of node $n$. Then, $m\notin win_b(\[\neg\phi\])$ by the induction hypothesis. Therefore, $n\notin win_b(\[\neg\phi\])$ by item~3 of Definition~\ref{df:winning set}.

In the second case, by item~3 of Definition~\ref{df:winning set}, the assumption $n\in win_a(\[\phi\])$ of the lemma implies that node $l\in win_a(\[\phi\])$ for each child node $l$ of node $n$. Hence, $l\notin win_b(\[\neg\phi\])$ by the induction hypothesis. Therefore, $n\notin win_b(\[\neg\phi\])$ by items~2 and 3 of Definition~\ref{df:winning set}.
\end{proof}

\begin{property}\label{pp:SC empty}
For any formula $\phi\in\Phi$ and any distinct agents $a,b\in\mathcal{A}$, formula $\S_b\C_a\phi$ is semantically equivalent to $\bot$.
\end{property}
\begin{proof}
By Lemma~\ref{lm:Saphi To phi} and Definition~\ref{semantically equivalent}, it suffices to show that, for any outcome $w$, if $w\in\[\C_a\phi\]$, then $w\notin\[\S_b\C_a\phi\]$.

Consider an outcome $w\in\[\C_a\phi\]$. By item~4 of Definition~\ref{sat}, there exists a node $n$ such that $w\preceq n$ and
\begin{equation}\label{eq:27-apr-a}
n\in win_a(\[\neg\phi\]).
\end{equation}
Meanwhile, $\[\phi\]\supseteq\[\C_a\phi\]$ by item~4 of Definition~\ref{sat}. Then, $\[\neg\phi\]\subseteq\[\neg\C_a\phi\]$ by item~2 of Definition~\ref{sat}. Hence, by Lemma~\ref{lm:26-apr-a},
\begin{equation}\label{eq:27-apr-b}
win_a(\[\neg\phi\])\subseteq win_a(\[\neg\C_a\phi\]).
\end{equation}
Then, $n\in win_a(\[\neg\C_a\phi\])$ by statements~\eqref{eq:27-apr-a} and \eqref{eq:27-apr-b}. Thus, $n\notin win_b(\[\C_a\phi\])$ by Lemma~\ref{lm:win in any agents}. Together with the statement $w\preceq n$ and item~5(a) of Definition~\ref{sat}, it can be concluded that $w\notin\[\S_b\C_a\phi\]$.
\end{proof}

\section{Complexity of model checking}\label{app_sec:complexity}

Denote by $|G|$ the number of nodes in game $G$ and by $|\phi|$ the size of formula $\phi$.
For a given tree (game) $G$, a subset of the nodes can be represented with a Boolean array of the size $O(|G|)$. Then, the set operations that determine whether a set contains an element, add an element to a set, or remove an element from a set take $O(1)$. The other set operations (union, intersection, difference and complement) take $O(|G|)$.

{\bf Algorithm~\ref{alg:decide achievement point}} shows the pseudocode for computing the set of all $X$-achievement points (by any agent) in game $G$. By Definition~\ref{df:achievement point}, a node is an $X$-achievement point if items~2 and 3 of Definition~\ref{df:achievement point} are satisfied. 
Algorithm~\ref{alg:decide achievement point} checks these two conditions for each node by backward induction in the tree. 
Note that, for a leaf node, item~3 of Definition~\ref{df:achievement point} is satisfied if the node itself is in set $X$; for a non-leaf node, this item is satisfied if it holds in all children of this node. For any node, item~2 of Definition~\ref{df:achievement point} is satisfied if item~3 does {\em not} hold in at least one of its siblings (including itself). Hence, backward induction can be processed in the {\em reversed breadth-first-search order} of nodes.

In Algorithm~\ref{alg:decide achievement point}, lines~\ref{line:alg 1.1} to \ref{line:alg 1.9} use a first-in-first-out queue $Q$ to push all the non-leaf nodes into a first-in-last-out stack $S$ in the {\em breadth-first-search order}.
Line~\ref{line:alg 1.10} defines a set $A$ to store all the nodes satisfying item~3 of Definition~\ref{df:achievement point}.
Line~\ref{line:alg 1.11} defines a set $W$ to store all the $X$-achievement points.
The while-loop starting at line~\ref{line:alg 1.12} is the core of this algorithm. 
Line~\ref{line:alg 1.13} pops the non-leaf nodes out of stack $S$ in the reversed breadth-first-search order. For each non-leaf node $n$, lines~\ref{line:alg 1.14} to \ref{line:alg 1.20} check if it satisfies item~3 of Definition~\ref{df:achievement point}. 
That is to check if every child of node $n$ satisfies item~3 of Definition~\ref{df:achievement point}.
At the same time, the children of $n$ that satisfy this item are included in a temporal set $T$ by line~\ref{line:alg 1.18}. 
Then, in lines~\ref{line:alg 1.21} to \ref{line:alg 1.24}, node $n$ is added to set $A$ if it satisfies item~3 of Definition~\ref{df:achievement point}. Otherwise, each node in set $T$ satisfies item~2 of Definition~\ref{df:achievement point} because one of their siblings does not satisfy item~3 of Definition~\ref{df:achievement point}. Since the nodes in set $T$ also satisfy item~3 of the same definition, they are $X$-achievement points and thus added to set $W$ in line~\ref{line:alg 1.24}.

Note that, each node has at most one parent node. Hence, the for-loops starting at lines~\ref{line:alg 1.7} and \ref{line:alg 1.16} are executed at most $O(|G|)$ times in total. Also, line~\ref{line:alg 1.24} takes $O(|G|)$ in total because the elements of set $T$ are distinct nodes in each loop.
Therefore, the whole process in Algorithm~\ref{alg:decide achievement point} takes $O(|G|)$.

\begin{algorithm}[hbt] 
\caption{Computing the $X$-achievement points}
\label{alg:decide achievement point}
\DontPrintSemicolon
\KwIn{game $G$, set $X$;}
\KwOut{the set of all $X$-achievement points.}
$Q\leftarrow$ an empty (FIFO) queue\label{line:alg 1.1}\;
$S\leftarrow$ an empty (FILO) stack\label{line:alg 1.2}\;
$Q.enqueue(\text{the root node})$\;
\While(){$Q$ is not empty}{
    $n\leftarrow Q.dequeue$\;
    \If{$n$ is a non-leaf node}{
      \For({\scriptsize\tcp*[f]{$O(|G|)$ total}\normalsize}){each $m\in Child(n)$\label{line:alg 1.7}}{
            $Q.enqueue(m)$\; 
        }
      $S.push(n)$\label{line:alg 1.9}\;
    }
}
$A\leftarrow X$\label{line:alg 1.10}\;
$W\leftarrow\varnothing$\label{line:alg 1.11}\;
\While(){$S$ is not empty\label{line:alg 1.12}}{
    $n\leftarrow S.pop\label{line:alg 1.13}$\;
    $nInA \leftarrow true$\label{line:alg 1.14}\;
    $T\leftarrow\varnothing$\label{line:alg 1.15}\;
    \For({\scriptsize\tcp*[f]{$O(|G|)$ total}\normalsize}){each $m\in Child(n)$\label{line:alg 1.16}}{
        \eIf{$m\in A$}{
            $T.add(m)$\label{line:alg 1.18}\;
        }{
            $nInA\leftarrow false$\label{line:alg 1.20}\;
        }
    }
    \eIf(){$nInA$\label{line:alg 1.21}}{
        $A.add(n)$\label{line:alg 1.22}\;
    }{
        $W\leftarrow W\cup T$\label{line:alg 1.24}{\scriptsize\tcp*{$O(|G|)$ total}\normalsize}
    }
}
\Return{$W$}\;
\end{algorithm}

{\bf Algorithm~\ref{alg:win set}} shows the pseudocode for computing the set $win_a(X)$ in game $G$, given an agent $a$ and a set $X$ of outcomes.
By Definition~\ref{df:winning set}, the computation of the set $win_a(X)$ uses backward induction in the tree. A leaf node is in the set $win_a(X)$ if it is in set $X$; a non-leaf node is in the set $win_a(X)$ if it satisfies either item~2 or item~3 of Definition~\ref{df:winning set}.
Algorithm~\ref{alg:win set} checks if the above statements are satisfied for each node in the reversed breadth-first-search order.

In Algorithm~\ref{alg:win set}, lines~\ref{line:alg 2.1} to \ref{line:alg 2.9} are the same as that in Algorithm~\ref{alg:decide achievement point}. 
Line~\ref{line:alg 2.10} defines a set $W$ to store all the nodes in the set $win_a(X)$ and initialize it with the set $X$ according to item~1 of Definition~\ref{df:winning set}.
The core of this algorithm is the while-loop starting at line~\ref{line:alg 2.11}. For each non-leaf node, lines~\ref{line:alg 2.14} to \ref{line:alg 2.17} check if item~2 of Definition~\ref{df:winning set} is satisfied, while lines~\ref{line:alg 2.19} to \ref{line:alg 2.23} check if item~3 of the same definition is satisfied.

Since each node has at most one parent node, the for-loops starting at lines~\ref{line:alg 2.14} and \ref{line:alg 2.20} shall be executed at most $O(|G|)$ times in all the while-loops starting at line~\ref{line:alg 2.11}. This same is true for the for-loop starting at line~\ref{line:alg 2.7}. Therefore, the whole process in Algorithm~\ref{alg:win set} takes $O(|G|)$.

\begin{algorithm}[hbt]
\caption{Computing $win_a(X)$}
\label{alg:win set}
\DontPrintSemicolon
\KwIn{game $G$, agent $a$, set $X$;}
\KwOut{$win_a(X)$.}
$Q\leftarrow$ an empty (FIFO) queue\label{line:alg 2.1}\;
$S\leftarrow$ an empty (FILO) stack\label{line:alg 2.2}\;
$Q.enqueue(\text{the root node})$\label{line:alg 2.3}\;
\While(){$Q$ is not empty\label{line:alg 2.4}}{
    $n\leftarrow Q.dequeue$\;
    \If{$n$ is a non-leaf node}{
      \For({\scriptsize\tcp*[f]{$O(|G|)$ total}\normalsize}){each $m\in Child(n)$\label{line:alg 2.7}}{
            $Q.enqueue(m)$\; 
        }
      $S.push(n)$\label{line:alg 2.9}\;
    }
}
$W\leftarrow X$\label{line:alg 2.10}\;
\While(){$S$ is not empty\label{line:alg 2.11}}{
    $n\leftarrow S.pop$\;
    \eIf{$n$ is labelled with $a$\label{line:alg 2.13}}{
        \For({\scriptsize\tcp*[f]{$O(|G|)$ total}\normalsize}){each $m\in Child(n)$\label{line:alg 2.14}}{
            \If(){$m\in W$}{
                $W.add(n)$\;
                break\label{line:alg 2.17}\;
            }
        }
    }{
        $W.add(n)$\label{line:alg 2.19}\;
        \For({\scriptsize\tcp*[f]{$O(|G|)$ total}\normalsize}){each $m\in Child(n)$\label{line:alg 2.20}}{
            \If(){$m\notin W$}{
                 $W.remove(n)$\;
                 break\label{line:alg 2.23}\;
            }
        }
    }
}
\Return{$W$}
\end{algorithm}

For any formula $\phi\in\Phi$, {\bf the calculation of the truth set $\[\phi\]$ uses a recursive process on the structural complexity of formula $\phi$}, which is the standard way of model checking.
Specifically, if formula $\phi$ is a propositional variable $p$, then the computation of the set $\[\phi\]$ takes $O(|G|)$ by item~1 of Definition~\ref{sat}\footnote{The statement is true under the assumption that checking if an outcome is labelled with a propositional variable takes constant time. I state this assumption in Subsection~\ref{sec:complexity} of the main text.}.
If formula $\phi$ has the form $\neg\psi$ or $\psi\wedge\psi'$, then, given the sets $\[\psi\]$ and $\[\psi'\]$, the computation of the set $\[\phi\]$ takes $O(|G|)$ by items~2 and 3 of Definition~\ref{sat}.

\begin{algorithm}[ht]
\caption{Computing $\[\C_a\psi\]$}
\label{alg:Ca}
\DontPrintSemicolon
\KwIn{game $G$, agent $a$, set $\[\psi\]$;}
\KwOut{$\[\C_a\psi\]$.}
$\[\neg\psi\]\leftarrow \Omega(G)\setminus \[\psi\]$\label{line:alg 3.1}\;
$win_a\[\neg\psi\]\leftarrow$ Algorithm~\ref{alg:win set}\label{line:alg 3.2}{\scriptsize\tcp*{$O(|G|)$}\normalsize}
$W\leftarrow\varnothing$\label{line:alg 3.3}\;
$Q\leftarrow$ an empty (FIFO) queue\label{line:alg 3.4}\;
$Q.enqueue(\text{the root node})$\label{line:alg 3.5}\;
\While(){$Q$ is not empty\label{line:alg 3.6}}{
    $n\leftarrow Q.dequeue$\label{line:alg 3.7}\;
    \eIf(){$n\in win_a\[\neg\psi\]$\label{line:alg 3.8}}{
        $T\leftarrow\{w\in\Omega(G)\,|\,w\preceq n\}$\label{line:alg 3.9}{\scriptsize\tcp*{$O(|G|)$ total}\normalsize}
        $W\leftarrow W\cup T$\label{line:alg 3.10} {\scriptsize\tcp*{$O(|G|)$ total}\normalsize}
    }{
        \For({\scriptsize\tcp*[f]{$O(|G|)$ total}\normalsize}){each $m\in Child(n)$\label{line:alg 3.12}}{
            $Q.enqueue(m)$\;\label{line:alg 3.13}
        }
    }
}
\Return{$W\cap\[\psi\]$}\;
\end{algorithm}

If formula $\phi$ has the form $\C_a\psi$, then, by item~4 of Definition~\ref{sat}, the set $\[\C_a\psi\]$ can be computed with {\bf Algorithm~\ref{alg:Ca}}. The key of this algorithm is to compute the set $W$ of all {\em outcomes} to which the path of play contains a node in the set $win_a\[\neg\psi\]$ (\textit{i.e.} agent $a$ has a strategy to prevent $\psi$).
Algorithm~\ref{alg:Ca} checks for each node if it is in the set $win_a\[\neg\psi\]$ in the {\em breadth-first-search} order. Once a node $n\in win_a\[\neg\psi\]$ is found, all the descendant outcomes of $n$ are put into set $W$.
Note that, the nodes in the subtree rooted at such node $n$ need {\em not} to be checked any more. This is the pruning strategy to increase the efficiency of the algorithm. 

In Algorithm~\ref{alg:Ca}, lines~\ref{line:alg 3.1} and \ref{line:alg 3.2} compute the set $win_a(\[\neg\psi\])$ using Algorithm~\ref{alg:win set}. Line~\ref{line:alg 3.3} defines the set $W$ described in the above paragraph. The first-in-first-out queue $Q$ defined in line~\ref{line:alg 3.4} is used for getting the breadth-first-search order of the nodes. 
The while-loop starting at line~\ref{line:alg 3.6} is the core of this algorithm. The if-condition in line~\ref{line:alg 3.8} checks for each node $n$ if it belongs to the set $win_a(\[\neg\psi\])$. If so, then, on the path of play to each descendant outcome of $n$, node $n$ is the place where agent $a$ has a strategy to prevent $\psi$. Hence, lines~\ref{line:alg 3.9} and \ref{line:alg 3.10} put these descendant outcomes into set $W$. Note that, in this case, the subtree rooted at node $n$ is pruned. 
If the if-condition in line~\ref{line:alg 3.8} is not satisfied, then, due to the pruning strategy, no node on the path of play from the root to node $n$ belongs to the set $win_a(\[\neg\psi\])$. 
Thus, in order to check if the descendant outcomes of node $n$ belong to set $W$, all children of node $n$ are added to the queue by lines~\ref{line:alg 3.12} and \ref{line:alg 3.13}. 
Finally, after computing set $W$ with the while-loop, the algorithm returns the intersection of sets $W$ and $\[\psi\]$ following item~4 of Definition~\ref{sat}.

Note that, each execution of lines~\ref{line:alg 3.9} and \ref{line:alg 3.10} takes linear time in terms of the size of the subtree rooted at node $n$. Thus, the execution of these two lines takes $O(|G|)$ in total. Also, because each node has at most one parent, the for-loop starting at line~\ref{line:alg 3.12} is executed at most $O(|G|)$ times in total. Therefore, the whole process in Algorithm~\ref{alg:Ca} takes $O(|G|)$.

\begin{algorithm}[ht]
\caption{Computing $\[\S_a\psi\]$}
\label{alg:Sa}
\DontPrintSemicolon
\KwIn{game $G$, agent $a$, set $\[\psi\]$;}
\KwOut{$\[\S_a\psi\]$.}
\If({\scriptsize\tcp*[f]{$O(|G|)$}\normalsize}){$\[\psi\]==\Omega(G)$\label{line:alg 4.1}}{
    \Return{$\varnothing$}\label{line:alg 4.2}\;
}
$achieve(\[\psi\])\leftarrow$ Algorithm~\ref{alg:decide achievement point}\label{line:alg 4.3}{\scriptsize\tcp*{$O(|G|)$}\normalsize}
$win_a(\[\psi\])\leftarrow$ Algorithm~\ref{alg:win set}\label{line:alg 4.4}{\scriptsize\tcp*{$O(|G|)$}\normalsize}
$W\leftarrow\varnothing$\label{line:alg 4.5}\;
$Q\leftarrow$ an empty (FIFO) queue\label{line:alg 4.6}\;
$Q.enqueue(\text{the root node})$\label{line:alg 4.7}\;
\While(){$Q$ is not empty\label{line:alg 4.8}}{
    $n\leftarrow Q.dequeue$\label{line:alg 4.9}\;
    \If{$n\notin win_a(\[\psi\])$\label{line:alg 4.10}}{
        continue\;
    }
    \eIf(){$n\in achieve(\[\psi\])$\label{line:alg 4.12}}{
        \If(){$parent(n)$ is labelled by $a$\label{line:alg 4.13}}{
            $T\leftarrow\{w\in\Omega(G)\,|\,w\preceq n\}$\label{line:alg 4.14}\!\!\!\! {\scriptsize\tcp*{$O(|G|)$ total}\normalsize}
            $W\leftarrow W\cup T$\label{line:alg 4.15} {\scriptsize\tcp*{$O(|G|)$ total}\normalsize}
        }
    }{
        \For({\scriptsize\tcp*[f]{$O(|G|)$ total}\normalsize}){each $m\in Child(n)$\label{line:alg 4.17}}{
        $Q.enqueue(m)$\label{line:alg 4.18}\;
        }
    }
}
\Return{$W$}\;
\end{algorithm}

If formula $\phi$ has the form $\S_a\psi$, then, by item~5 of Definition~\ref{sat}, the set $\[\S_a\psi\]$ can be computed with {\bf Algorithm~\ref{alg:Sa}}. 
Consider a $\[\psi\]$-achievement point $n$. Then, by Definition~\ref{df:winning set} and item~3 of Definition~\ref{df:achievement point}, every node in the subtree rooted at node $n$ belongs to the set $win_a\[\psi\]$. Thus, by item~5 of Definition~\ref{sat}, outcome $w\in\[\S_a\psi\]$ if and only if
\begin{enumerate}[label=C\arabic*., ref=C\arabic*,left=0pt]
\item there is a $\[\psi\]$-achievement point $n$ by agent $a$ such that $w\preceq n$;\label{item:9-Aug-1}
\item $m\in win_a\[\psi\]$ for each node $m$ such that $n\preceq m$.\label{item:9-Aug-2}
\end{enumerate}
Hence, if $n$ is a node satisfying the above two conditions in terms of any outcome, then all the descendant outcomes of node $n$ belong to the set $\[\S_a\psi\]$. 
Algorithm~\ref{alg:Sa} searches for all such node $n$ in the tree in the {\em breadth-first-search} order. In this process, two pruning strategies are used:
\begin{enumerate}[label=S\arabic*., ref=S\arabic*,left=0pt]
\item if node $n\notin win_a\[\psi\]$, then the subtree rooted at node $n$ is pruned because no node in this subtree satisfies condition~\ref{item:9-Aug-2} above;\label{item:9-Aug-3}
\item if node $n$ is a $\[\psi\]$-achievement by some agent other than $a$, then the subtree rooted at node $n$ is pruned because no node in this subtree is a $\[\psi\]$-achievement by agent $a$ by Lemma~\ref{lm:achievement point existence} and thus condition~\ref{item:9-Aug-1} above is not satisfied for any node in this subtree.\label{item:9-Aug-4}
\end{enumerate}
In order for simplicity, Algorithm~\ref{alg:Sa} deals with the trivial condition separately in lines~\ref{line:alg 4.1} and \ref{line:alg 4.2}.


For the non-trivial conditions, in Algorithm~\ref{alg:Sa}, lines~\ref{line:alg 4.3} and \ref{line:alg 4.4} compute the sets of $\[\psi\]$-achievement points and $win_a(\[\psi\])$ using Algorithm~\ref{alg:decide achievement point} and Algorithm~\ref{alg:win set}. Line~\ref{line:alg 4.5} defines the set $W$ to collect all the nodes in the set $\[\S_a\psi\]$ (\textit{i.e.} the descendant outcomes of nodes satisfying the above conditions~\ref{item:9-Aug-1} and \ref{item:9-Aug-2} in terms of any outcome). The first-in-first-out queue $Q$ defined in line~\ref{line:alg 4.6} is used for getting the breadth-first-search order of the nodes.
The core of this algorithm is the while-loop starting at line~\ref{line:alg 4.8}. Line~\ref{line:alg 4.10} checkes for each node $n$ if it does {\em not} belong to the set $win_a\[\psi\]$. 
If so, then the subtree rooted at $n$ is pruned by strategy~\ref{item:9-Aug-3}. Otherwise, due to strategy~\ref{item:9-Aug-3}, node $n$ satisfies condition~\ref{item:9-Aug-2} above.
Then, line~\ref{line:alg 4.12} checks if node $n$ is a $\[\psi\]$-achievement point. In particular, if node $n$ is a $\[\psi\]$-achievement point by agent $a$ as checked by line~\ref{line:alg 4.13}, then $n$ satisfies both conditions~\ref{item:9-Aug-1} and \ref{item:9-Aug-2} in terms of any of its descendant outcome. Thus, all the descendant outcomes of node $n$ are collected by set $W$. Hence, the subtree rooted at node $n$ is pruned. 
If node $n$ is a $\[\psi\]$-achievement by some agent other than $a$, then the subtree rooted at $n$ is pruned by strategy~\ref{item:9-Aug-4}.
However, if node $n$ is not a $\[\psi\]$-achievement point, then node $n$ does not satisfy condition~\ref{item:9-Aug-1} above. Since condition~\ref{item:9-Aug-2} is still satisfied for node $n$, some node in the subtree rooted at $n$ might satisfy both conditions~\ref{item:9-Aug-1} and \ref{item:9-Aug-2}. Thus, the children of node $n$ are added to the queue for further search.

In Algorithm~\ref{alg:Sa}, each execution of lines~\ref{line:alg 4.14} and \ref{line:alg 4.15} takes linear time in terms of the size of the subtree rooted at node $n$. Then, the execution of these two lines takes $O(|G|)$ in total. Also, because each node has at most one parent, the for-loop starting at line~\ref{line:alg 4.17} is executed at most $O(|G|)$ times in total. Therefore, the whole process in Algorithm~\ref{alg:Sa} takes $O(|G|)$.

In conclusion, for any formula $\phi\in\Phi$, the calculation of the set $\[\phi\]$ takes $O(|\phi|\cdot |G|)$.

\section{Proof toward responsibility gap}

\subsection{Proof of Lemma~\ref{lm:win in two agent}}\label{app_sec:proof lm win in two agent}

\textbf{Lemma \ref{lm:win in two agent}. }\textit{For any formula $\phi\in\Phi$ and any node $n$ in a two-agent extensive form game between agents $a$ and $b$, if $n\notin win_a(\[\phi\])$, then $n\in win_b(\[\neg\phi\])$.}

\vspace{1mm}
\begin{proof}
I prove this lemma by backward induction on node $n$. In other words, by induction on the depth of the subtree rooted at node $n$. 

If node $n$ is a leaf, then the assumption $n\notin win_a(\[\phi\])$ of the lemma implies that $n\notin\[\phi\]$ by item~1 of Definition~\ref{df:winning set}. This further implies $n\in\[\neg\phi\]$ by item~2 of Definition~\ref{sat}. Therefore, $n\in win_b(\[\neg\phi\])$ by item~1 of Definition~\ref{df:winning set}.

If $n$ is a non-leaf node, then $n$ is labelled by either agent $a$ or agent $b$. In the first case, by item~2 of Definition~\ref{df:winning set}, the assumption $n\notin win_a(\[\phi\])$ of the lemma implies that $m\notin win_a(\[\phi\])$ for each child node $m$ of node $n$. Then, $m\in win_b(\[\neg\phi\])$ by the induction hypothesis. Therefore, $n\in win_b(\[\neg\phi\])$ by item~3 of Definition~\ref{df:winning set}.

In the second case, by item~3 of Definition~\ref{df:winning set}, the assumption $n\notin win_a(\[\phi\])$ of the lemma implies that there is a child node $m$ of node $n$ such that $m\notin win_a(\[\phi\])$. Hence, $m\in win_b(\[\neg\phi\])$ by the induction hypothesis. Therefore, $n\in win_b(\[\neg\phi\])$ by item~2 of Definition~\ref{df:winning set}.
\end{proof}

\subsection{Instance of higher-order responsibility gap}\label{app_sec:gap instance}

\begin{figure}
\centering
\scalebox{0.5}{
\includegraphics{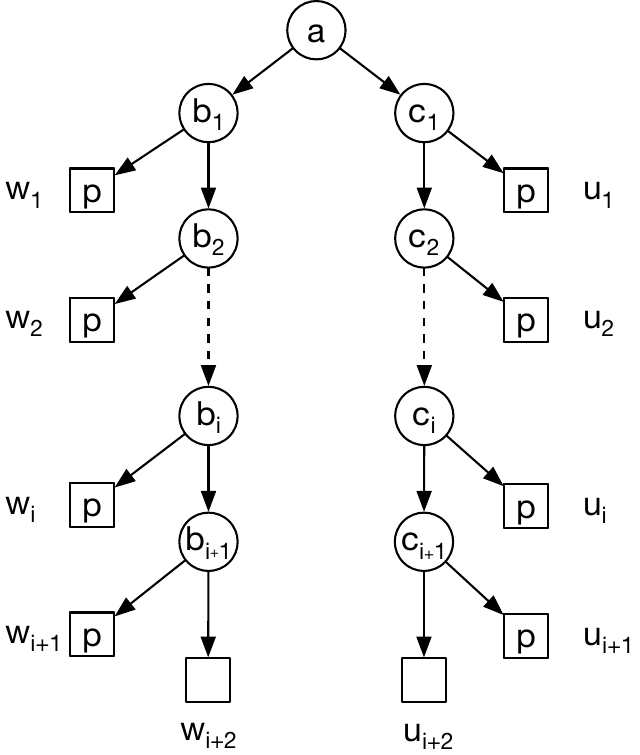}}
\caption{A game in which the set $\[\Gcs_i(p)\]$ is not empty}
\label{fig:higher order gap}
\end{figure}

For any integer $i\ge 2$, I show that in the extensive form game depicted in Figure~\ref{fig:higher order gap}, the set $\[\Gcs_i(p)\]$ is not empty. In this game, there are
\begin{itemize}
\item $2i+3$ distinct agents: $a$, $b_1$, \dots, $b_{i+1}$, $c_1$, \dots, $c_{i+1}$;
\item $2i+3$ non-leaf nodes;
\item $2i+4$ outcomes: $w_1$, \dots, $w_{i+2}$, $u_1$, \dots, $u_{i+2}$.
\end{itemize}
All the outcomes except for $w_{i+2}$ and $u_{i+2}$ are labelled with propositional variable $p$. Then, by statement~\eqref{Gcs definition},
$$
\[\Gcs_0(p)\]=\[p\]=\{w_1,\dots,w_{i+1},u_1,\dots,u_{i+1}\}.
$$
Note that, no agent in this game has an upfront strategy to guarantee that the game ends with an outcome where $p$ is true. That is, the root node does not belong to the set $win_g(\[p\])$ for each agent $g$ in this game. Thus, by item~5(a) of Definition~\ref{sat}, no agent is responsible for seeing to $p$ in any of the outcomes.
At the same time, in all the outcomes where $p$ is true, only $w_{i+1}$ and $u_{i+1}$ can be prevented (by agents $b_{i+1}$ and $c_{i+1}$, respectively). Hence, by item~4 of Definition~\ref{sat}, no agent is counterfactually responsible for $p$ in outcomes $w_1$, \dots, $w_{i}$, $u_1$, \dots, $u_{i}$. Therefore, by statement~\eqref{Gcs definition},
$$
\[\Gcs_1(p)\]=\{w_1,\dots,w_{i},u_1,\dots,u_{i}\}.
$$

With the same reasoning process, I can further deduce the following statements:
\begin{align*}
& \[\Gcs_2(p)\]=\{w_1,\dots,w_{i-1},u_1,\dots,u_{i-1}\};\\
& \[\Gcs_3(p)\]=\{w_1,\dots,w_{i-2},u_1,\dots,u_{i-2}\};\\
& \hspace{6mm} \vdots \\
& \[\Gcs_k(p)\]=\{w_1,\dots,w_{i+1-k},u_1,\dots,u_{i+1-k}\};\\
& \hspace{6mm} \vdots \\
& \[\Gcs_{i}(p)\]=\{w_1,u_1\}.
\end{align*}

\subsection{Proof of Corollary~\ref{cr:no higher-order Gcs gap}}\label{app_sec:corollary no Gcs gap}


\vspace{1mm}
Given Theorem~\ref{th:no higher-order Gc}, in order to prove Corollary~\ref{cr:no higher-order Gcs gap}, it suffices to show the statement in the next lemma is true.

\begin{lemma}
$\[\Gcs_i(\phi)\]\subseteq \[\Gc_i(\phi)\]$.    
\end{lemma}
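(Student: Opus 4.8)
The plan is to prove the containment $\[\Gcs_i(\phi)\]\subseteq\[\Gc_i(\phi)\]$ by induction on $i$. The base case $i=0$ is immediate, since $\Gcs_0(\phi)=\phi=\Gc_0(\phi)$ by statements~\eqref{Gcs definition} and \eqref{Gc definition}, so the two truth sets coincide. For the inductive step I assume $\[\Gcs_i(\phi)\]\subseteq\[\Gc_i(\phi)\]$ and argue that every outcome $w\in\[\Gcs_{i+1}(\phi)\]$ lies in $\[\Gc_{i+1}(\phi)\]$.

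Unfolding the recursive definition~\eqref{Gcs definition} with items~2 and~3 of Definition~\ref{sat}, the assumption $w\in\[\Gcs_{i+1}(\phi)\]$ yields both $w\in\[\Gcs_i(\phi)\]$ and $w\notin\[\C_a\Gcs_i(\phi)\]$ for every agent $a$ (the extra $\neg\S_a$ conjuncts of $\Gcs_{i+1}$ are not needed for this inclusion). Membership in $\[\Gc_{i+1}(\phi)\]$, by~\eqref{Gc definition}, amounts to showing $w\in\[\Gc_i(\phi)\]$ and $w\notin\[\C_a\Gc_i(\phi)\]$ for every agent $a$. The first part is immediate from $w\in\[\Gcs_i(\phi)\]$ together with the induction hypothesis.

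For the second part I proceed by contradiction: suppose $w\in\[\C_a\Gc_i(\phi)\]$ for some agent $a$. By item~4 of Definition~\ref{sat}, there is a node $n$ with $w\preceq n$ and $n\in win_a(\[\neg\Gc_i(\phi)\])$. The induction hypothesis $\[\Gcs_i(\phi)\]\subseteq\[\Gc_i(\phi)\]$ gives $\[\neg\Gc_i(\phi)\]\subseteq\[\neg\Gcs_i(\phi)\]$ by item~2 of Definition~\ref{sat}, and then Lemma~\ref{lm:26-apr-a} upgrades this to $win_a(\[\neg\Gc_i(\phi)\])\subseteq win_a(\[\neg\Gcs_i(\phi)\])$. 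Hence $n\in win_a(\[\neg\Gcs_i(\phi)\])$. Combined with $w\preceq n$ and the already-established $w\in\[\Gcs_i(\phi)\]$, item~4 of Definition~\ref{sat} yields $w\in\[\C_a\Gcs_i(\phi)\]$, contradicting $w\notin\[\C_a\Gcs_i(\phi)\]$.

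The main subtlety to watch is that one cannot replace this by the cleaner-looking lemma $\[\C_a\Gc_i(\phi)\]\subseteq\[\C_a\Gcs_i(\phi)\]$, because the positive membership clause $w\in\[\,\cdot\,\]$ in the definition of $\C_a$ runs in the opposite direction to the induction hypothesis. The argument must therefore stay pointwise, carrying the fact $w\in\[\Gcs_i(\phi)\]$ through so that the required clause of the $\C_a$-definition is available precisely when it is needed; the monotonicity of $win_a$ under set inclusion (Lemma~\ref{lm:26-apr-a}), applied to the complements, then supplies the rest.
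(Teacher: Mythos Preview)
Your proof is correct and follows essentially the same approach as the paper's own proof: induction on the order, with the inductive step handled by a pointwise contradiction that uses the induction hypothesis together with the monotonicity of $win_a$ (Lemma~\ref{lm:26-apr-a}) on the complemented truth sets. The only cosmetic difference is that you index the step as $i\to i+1$ whereas the paper writes it as $i-1\to i$; your closing remark about why $\[\C_a\Gc_i(\phi)\]\subseteq\[\C_a\Gcs_i(\phi)\]$ fails as a standalone lemma is a helpful clarification that the paper does not make explicit.
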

\begin{proof}
I prove the lemma by induction. If $i=0$, then $\Gcs_i(\phi)=\phi=\Gc_i(\phi)$ by statements~\eqref{Gcs definition} and \eqref{Gc definition}. Therefore, $\[\Gcs_i(\phi)\]= \[\Gc_i(\phi)\]$.

\vspace{1mm}
In the cases where $i\ge 1$, by statements~\eqref{Gcs definition}, \eqref{Gc definition} and item~3 of Definition~\ref{sat}, 
\begin{equation}\label{18-apr-a}
\[\Gcs_i(\phi)\]\subseteq \[\Gcs_{i-1}(\phi)\]\cap \bigcap_{a\in\mathcal{A}}\[\neg\C_a\Gcs_{i-1}(\phi)\]
\end{equation}
and
\begin{equation}\label{18-apr-b}
\[\Gc_i(\phi)\]=\[\Gc_{i-1}(\phi)\]\cap \bigcap_{a\in\mathcal{A}}\[\neg\C_a\Gc_{i-1}(\phi)\].
\end{equation}
Consider any outcome $w\in \[\Gcs_i(\phi)\]$. It suffices to show that $w\in \[\Gc_i(\phi)\]$. Note that, by statement~(\ref{18-apr-a}) and item~3 of Definition~\ref{sat}, the assumption $w\in \[\Gcs_i(\phi)\]$ implies that 
\begin{equation}\label{18-apr-c}
    w\in \[\Gcs_{i-1}(\phi)\]
\end{equation}
and
\begin{equation}\label{18-apr-d}
    w\in \bigcap_{a\in\mathcal{A}}\[\neg\C_a\Gcs_{i-1}(\phi)\].
\end{equation}
By the induction hypothesis, statement~\eqref{18-apr-c} implies that $w\in \[\Gc_{i-1}(\phi)\]$. Thus, by statement~\eqref{18-apr-b} and item~3 of Definition~\ref{sat}, in order to show that $w\in\[\Gc_i(\phi)\]$, it suffices to prove that $w\in\[\neg\C_a\Gc_{i-1}(\phi)\]$ for each agent $a\in\mathcal{A}$.


Towards a contradiction, suppose $w\notin\[\neg\C_a\Gc_{i-1}(\phi)\]$ for some agent $a$. 
Then, $w\in\[\C_a\Gc_{i-1}(\phi)\]$ by item~2 of Definition~\ref{sat}. Hence, by item 4 of Definition \ref{sat}, there exists a node $n$ such that
\begin{equation}\label{eq:30-July-3}
w\preceq n
\end{equation}
and 
\begin{equation}\label{eq:30-July-1}
n\in win_a(\[\neg\Gc_{i-1}(\phi)\]).
\end{equation}
At the same time, 
$
\[\neg\Gc_{i-1}(\phi)\]\subseteq \[\neg\Gcs_{i-1}(\phi)\]
$
by the induction hypothesis and item~2 of Definition~\ref{sat}.
Thus, by Lemma~\ref{lm:26-apr-a} and statements~\eqref{eq:30-July-1},
$$
n\in win_a(\[\neg\Gcs_{i-1}(\phi)\]).
$$ 
Hence, $w\in\[\C_a\Gcs_{i-1}(\phi)\]$ by statements~\eqref{18-apr-c}, \eqref{eq:30-July-3}, and item~4 of Definition~\ref{sat}, which contradicts statement~\eqref{18-apr-d}.
\end{proof}

\end{document}